\newtheorem{theorem}{Theorem}
\newtheorem{lemma}[theorem]{Lemma}
\newtheorem{definition}[theorem]{Definition}
\newtheorem{proposition}[theorem]{Proposition}
\renewcommand{\cf}{\emph{cf.}\xspace}
\newcommand{\bdmath}{\begin{dmath}}
\newcommand{\edmath}{\end{dmath}}
\newcommand{\beq}{\begin{equation}}
\newcommand{\eeq}{\end{equation}}
\newcommand{\bdm}{\begin{displaymath}}
\newcommand{\edm}{\end{displaymath}}
\newcommand{\bea}{\begin{eqnarray}}
\newcommand{\eea}{\end{eqnarray}}
\newcommand{\beal}{\beq \begin{array}{ll}}
\newcommand{\eeal}{\end{array} \eeq}
\newcommand{\beas}{\begin{eqnarray*}}
\newcommand{\eeas}{\end{eqnarray*}}
\newcommand{\ba}{\begin{array}}
\newcommand{\ea}{\end{array}}
\newcommand{\bit}{\begin{itemize}}
\newcommand{\eit}{\end{itemize}}
\newcommand{\ben}{\begin{enumerate}}
\newcommand{\een}{\end{enumerate}}
\newcommand{\calN}{{\cal N}}
\newcommand{\calS}{{\cal S}}
\newcommand{\calX}{{\cal X}}
\renewcommand{\etal}{\emph{et~al.}\xspace}
\newcommand{\M}[1]{{\bm #1}} 
\newcommand{\hide}[1]{}
\renewcommand{\wrt}{w.r.t.\xspace}
\newcommand{\hiddenText}{{\color{gray} hidden text.}}
\newcommand{\hideWithText}[1]{\hiddenText}
\newcommand{\kron}{\otimes}
\newcommand{\subject}{\text{ subject to }}
\DeclareMathOperator*{\argmin}{arg\,min}
\newcommand{\tran}{^{\mathsf{T}}}
\newcommand{\rank}[1]{\mathrm{rank}\left(#1\right)}
\newcommand{\zero}{{\mathbf 0}}
\newcommand{\eye}{{\mathbf I}}
\newcommand{\Real}[1]{ { {\mathbb R}^{#1} } }
\newcommand{\at}[1]{^{(#1)}}
\newcommand{\SOthree}{\ensuremath{\mathrm{SO}(3)}\xspace}
\newcommand{\MA}{\M{A}}
\newcommand{\MB}{\M{B}}
\newcommand{\MM}{\M{M}}
\newcommand{\MQ}{\M{Q}}
\newcommand{\MR}{\M{R}}
\newcommand{\MS}{\M{S}}
\newcommand{\MZ}{\M{Z}}
\newcommand{\vh}{\boldsymbol{h}} 
\newcommand{\vc}{\boldsymbol{c}}
\newcommand{\vg}{\boldsymbol{g}}
\newcommand{\vr}{\boldsymbol{r}}
\newcommand{\vv}{\boldsymbol{v}}
\newcommand{\vt}{\boldsymbol{t}}
\newcommand{\vxx}{\boldsymbol{x}} 
\newcommand{\vy}{\boldsymbol{y}}
\newcommand{\vlambda}{\boldsymbol{\lambda}}
\newcommand{\valpha}{\boldsymbol{\alpha}}
\newcommand{\vepsilon}{\boldsymbol{\epsilon}}
\newcommand{\scenario}[1]{{\smaller \sf#1}\xspace}
\newcommand{\blue}[1]{{\color{blue}#1}}
\newcommand{\linkToPdf}[1]{\href{#1}{\blue{(pdf)}}}
\newcommand{\linkToPpt}[1]{\href{#1}{\blue{(ppt)}}}
\newcommand{\linkToCode}[1]{\href{#1}{\blue{(code)}}}
\newcommand{\linkToWeb}[1]{\href{#1}{\blue{(web)}}}
\newcommand{\linkToVideo}[1]{\href{#1}{\blue{(video)}}}
\newcommand{\award}[1]{\xspace} 
\newcommand{\vz}{\boldsymbol{z}}
\newcommand{\nchoosek}[2]{\left( \substack{#1 \\ #2} \right)}
\newcommand{\ncksmall}{\left( \substack{n+d \\ d} \right)}
\newcommand{\FGCar}{\scenario{FG3DCar}}
\newcommand{\alternrobust}{\scenario{Altern+Robust}}
\newcommand{\convexrobust}{\scenario{Convex+Robust}}
\newcommand{\PMP}{\scenario{PMP}}
\newcommand{\convexrefine}{\scenario{convex+refine}}
\newcommand{\altern}{\scenario{Altern}}
\newcommand{\supp}{Supplementary Material\xspace}
\newcommand{\bmat}{\left[\begin{array}}
\newcommand{\emat}{\end{array}\right]}
\newcommand{\bmatc}{\begin{array}}
\newcommand{\ematc}{\end{array}}
\newcommand{\sumfeatures}{\sum_{i=1}^N}
\newcommand{\sumbasis}{\sum_{k=1}^K}
\newcommand{\Zero}{\boldsymbol{0}}
\newcommand{\vztilde}{\tilde{\vz}}
\newcommand{\vBtilde}{\tilde{\MB}}
\newcommand{\vzbar}{\bar{\vz}}
\newcommand{\vBbar}{\bar{\MB}}
\newcommand{\proj}{\text{proj}}
\newcommand{\barc}{\bar{c}}
\newcommand{\barcsq}{\bar{c}^2}
\renewcommand{\subject}{s.t.}
\newcommand{\name}{\scenario{Shape$^\star$}}
\newcommand{\namerobust}{\scenario{Shape${\#}$}}
\newcommand{\corank}{\text{corank}}
\newcommand{\mySize}{N}
\newcommand{\myNote}[2]{#2\xspace} 
\newcommand{\iOneToN}{i=1,\ldots,N}
\newcommand{\rorder}{\beta}
\newcommand{\GNC}{\scenario{GNC}}
\newcommand{\revise}[1]{#1\xspace}
\ifcvprfinal\pagestyle{empty}\fi
\begin{document}

\title{\vspace{-30mm} \small{\normalfont This paper has been accepted for publication at the IEEE Conference on Computer Vision and Pattern Recognition, 2020. \\ Please cite the paper as: Heng Yang and Luca Carlone, \\ ``In Perfect Shape: Certifiably Optimal 3D Shape Reconstruction from 2D Landmarks'', \\ In \emph{IEEE Conf. on Computer Vision and Pattern Recognition (CVPR)}, 2020.} \\ \vspace{10mm} \Large{In Perfect Shape: Certifiably Optimal \\ 3D Shape Reconstruction from 2D Landmarks}}

\author{Heng Yang and Luca Carlone\\
Laboratory for Information \& Decision Systems (LIDS)\\
Massachusetts Institute of Technology\\
{\tt\small \{hankyang, lcarlone\}@mit.edu}  
}

\maketitle

\begin{abstract}
We study the problem of 3D shape reconstruction from 2D landmarks extracted in a single image. We adopt the 3D deformable shape model and formulate the reconstruction as a joint optimization of the camera pose and the linear shape parameters. 
Our first contribution is to apply Lasserre's hierarchy of convex Sums-of-Squares (SOS) relaxations to solve the shape reconstruction problem and show that the SOS relaxation of~\revise{minimum} order 2 empirically
 solves the original non-convex problem \emph{exactly}. 
 Our second contribution is to exploit the structure of the polynomial in the objective function and find a reduced set of basis monomials for the SOS relaxation that significantly decreases the size of the resulting semidefinite program (SDP) without compromising its accuracy. 
 These two contributions, to the best of our knowledge, lead to the \emph{first certifiably optimal} 
 solver for 3D shape reconstruction, that we name~\name. 
 Our third contribution is to add an outlier rejection layer to~\name using a~\revise{truncated least squares (TLS) robust cost function and leveraging 
  \emph{graduated non-convexity} to solve TLS \emph{without initialization}.}
  The result is a robust reconstruction algorithm, named~\namerobust, that tolerates a large amount of outlier measurements. 
  We evaluate the performance of \name and \namerobust in both simulated and real experiments, showing that \name outperforms local optimization and previous convex relaxation techniques, while \namerobust achieves state-of-the-art performance and is robust against $70\%$ outliers in the~\FGCar dataset.
\end{abstract}


\section{Introduction}
\label{sec:introduction}

3D object detection and pose estimation from a single image is a fundamental problem in computer vision. Despite the progress in semantic segmentation~\cite{He17iccv-maskRCNN}, depth estimation~\cite{Lasinger19arXiv-robustMonocularDepthEstimation}, and pose estimation~\cite{Kneip2014ECCV-UPnP,Xiang17RSS-posecnn}, reconstructing the 3D shape and pose of an object from a single image remains a challenging task~\cite{Aubry14cvpr-seeing3Dchairs,Zhou17pami-shapeEstimationConvex,Tome17cvpr-liftfromdeep,Wu16eccv-3DInterpreter,Kolotouros19cvpr-shapeRec,Ramakrishna12eccv-humanPose}.

A typical approach for 3D shape reconstruction is to first detect 2D landmarks in a single image, and then solve a model-based optimization to lift the 2D landmarks to form a 3D model~\cite{Zhou15cvpr,Zhou17pami-shapeEstimationConvex,Ramakrishna12eccv-humanPose,Lin14eccv-modelFitting,Wang14cvpr-humanPose}. For the optimization to be well-posed, the unknown shape is assumed to be a \emph{3D deformable model}, composed by a linear combination of \emph{basis shapes}, handcrafted or learned from a large corpus of training data~\cite{Cootes95cviu}. The optimization then seeks to jointly optimize the coefficients of the linear combination (\emph{shape parameters}) and the \emph{camera pose} to minimize the \emph{reprojection errors} between the 3D model and the 2D landmarks. This model-based paradigm has been successful in several applications such as face recognition~\cite{Blanz03pami-modelFitting,Gu06cvpr-faceAlignment}, car model fitting~\cite{Lin14eccv-modelFitting,Hejrati12nips-objectAnalysis}, and human pose estimation~\cite{Zhou17pami-shapeEstimationConvex,Ramakrishna12eccv-humanPose}.

Despite its long history and broad range of applications, there is still no \emph{globally optimal} solver for the  
non-convex optimization problem arising in 3D shape reconstruction.
Therefore, most existing solutions adopt a local optimization strategy, which alternates between solving for the camera pose and the shape parameters. These techniques, as shown in prior works~\cite{Ramakrishna12eccv-humanPose,Hejrati12nips-objectAnalysis}, require an initial guess for the solution and often get stuck in local minima. In addition, 2D landmark detectors are prone to produce \emph{outliers}, causing existing methods to be 
brittle~\cite{Wang14cvpr-humanPose}.
Therefore, the motivation for this paper is two-fold: (i) to develop a \emph{certifiably optimal} shape reconstruction solver, and (ii) to develop a robust reconstruction algorithm that is insensitive to a large amount of outlier 2D measurements (\eg, $70\%$).

{\bf Contributions.} Our first contribution is to formulate the shape reconstruction problem as a \emph{polynomial optimization} problem and apply Lasserre's hierarchy of \emph{Sums-of-Squares} (SOS) relaxations to relax the non-convex polynomial optimization into~\revise{a} convex \emph{semidefinite program} (SDP). We show the SOS relaxation 
of~\revise{minimum} order 2 empirically solves the non-convex shape reconstruction problem exactly and provides a global optimality certificate. 
The second contribution is to apply \emph{basis reduction}, a technique that exploits the sparse structure of the  polynomial in the objective function, to reduce the size of the resulting SDP. We show that basis reduction significantly improves the efficiency of the SOS relaxation without compromising global optimality. To the best of our knowledge, this is the first certifiably optimal solver for shape reconstruction, and we name it \name. 
Our third contribution is to robustify \name by adopting a~\revise{truncated least squares (TLS)} robust cost function and solving the resulting 
robust estimation problem using graduated non-convexity~\cite{Blake1987book-visualReconstruction}.
The resulting algorithm, named \namerobust, is robust against $70\%$ outliers and does not require an initial guess.

The rest of this paper is organized as follows. Section~\ref{sec:relatedWork} reviews related work. 
Section~\ref{sec:notationandPreliminaries} introduces notation and preliminaries on SOS relaxations. 
Section~\ref{sec:problemStatement} introduces the shape reconstruction problem.
Section~\ref{sec:certifiableNonminimalSolver} develops our SOS solver (\name).
Section~\ref{sec:robustOutlierRejection} presents an algorithm (\namerobust) to robustify the SOS relaxation against outliers.
 Section~\ref{sec:experiments} provides experimental results in both simulations and real datasets, while 
Section~\ref{sec:conclusions} concludes the paper. 


\section{Related Work}
\label{sec:relatedWork}
We limit our review to optimization-based approaches for 3D shape reconstruction from 2D landmarks. The 
interested reader can find a review of end-to-end shape and pose reconstruction using deep learning in~\cite{Kolotouros19cvpr-shapeRec,Tome17cvpr-liftfromdeep,Kolotouros19iccv-humanlearnplusmodel}.

{\bf Local Optimization.} Most existing methods resort to local optimization to solve the non-convex joint optimization of shape parameters and camera pose. Blanz and Vetter~\cite{Blanz03pami-modelFitting} propose a method for face recognition by fitting a morphable model of the 3D face shape and texture to a single image using stochastic Newton's method to escape local minima. Gu and Kanade~\cite{Gu06cvpr-faceAlignment} align a deformable point-based 3D face model by alternatively deforming the 3D model and updating the 3D pose. Using similar alternating optimization, Ramakrishna~\etal~\cite{Ramakrishna12eccv-humanPose} tackle 3D human pose estimation by finding a sparse set of basis shapes from an over-complete human shape dictionary using projected matching pursuit;
the approach is further improved by Fan~\etal~\cite{Fan14eccv-plcrHumanPose} to include pose locality constraints. 
Lin~\etal~\cite{Lin14eccv-modelFitting} demonstrate joint 3D car model fitting and fine-grained classification;
 car model fitting in cluttered images is investigated in~\cite{Hejrati12nips-objectAnalysis}. To mitigate the impact of outlying 2D landmarks, Li~\etal~\cite{Li2011PAMI-RobustRANSACShapeAlignment} propose a RANSAC-type method for car model fitting and Wang~\etal~\cite{Wang14cvpr-humanPose} replace the least squares estimation with an $\ell_1$-norm minimization.

{\bf Convex Relaxation.} More recently, Zhou~\etal~\cite{Zhou15cvpr} develop a convex relaxation, where they first over-parametrize the 3D deformable shape model by associating one rotation with each basis and then relax the resulting Stiefel manifold constraint to its convex envelope. Although showing superior performance compared to local optimization, the convex relaxation in~\cite{Zhou15cvpr} comes with no optimality guarantee and is typically loose in practice. In addition, Zhou~\etal~\cite{Zhou17pami-shapeEstimationConvex} model outliers using a sparse matrix and augment the optimization with an $\ell_1$ regularization to achieve robustness against $40\%$ outliers. 
In contrast, we will show that our convex relaxation comes with certifiable optimality, 
and our robust reconstruction approach can handle $70\%$ outliers.

\section{Notation and Preliminaries}
\label{sec:notationandPreliminaries}
We use $\calS^n$ to denote the set of $n\times n$ symmetric matrices. 
We write $\MA \in \calS^n_{+}$ (resp. $\MA \in \calS^n_{++}$) to denote that the matrix $\MA \in \calS^n$ is positive semidefinite (PSD) (resp. positive definite (PD)). 
Given $\vxx = [x_1,\dots,x_n]\tran$, we let $\Real{}[\vxx]$ (resp. $\Real{}[\vxx]_{d}$) be the ring of polynomials in $n$ variables with real coefficients (resp. with degree at most $d$), and $[\vxx]_d$ be the vector of all $\ncksmall$ monomials with degree up to $d$. 

We now give a brief summary of SOS relaxations for polynomial optimization. 
Our review is based on~\cite{Blekherman12Book-sdpandConvexAlgebraicGeometry,Nie14mp-finiteConvergenceLassere,lasserre10book-momentsOpt}.
We first introduce the notion of SOS polynomial.

\begin{definition}[SOS Polynomial~\cite{Blekherman12Book-sdpandConvexAlgebraicGeometry}] A polynomial $p(\vxx) \!\!\in\!\! \Real{}[\vxx]_{2d}$ is said to be a sums-of-squares (SOS) polynomial if there exist polynomials $q_1,\ldots,q_m\!\! \in \!\!\Real{}[\vxx]_d$ such that:
\bea \label{eq:SOSdefinition}
p(\vxx) = \sum_{i=1}^m q_i^2(\vxx).
\eea
We use $\Sigma_n$ (resp. $\Sigma_{n,2d})$ to denote the set of SOS polynomials in $n$ variables (resp. with degree at most $2d$). A polynomial $p(\vxx) \in \Real{}[\vxx]_{2d}$ is SOS if and only if there exists a PSD matrix 
\myNote{$\MQ \in \calS^{\ncksmall}_{+}$}{
	$\MQ \in \calS^{\mySize_Q}_{+}$ with $\mySize_Q = \ncksmall$,
} such that:
\bea \label{eq:PSDDescriptionSOS}
p(\vxx) = [\vxx]_d\tran \MQ [\vxx]_d,
\eea
and $\MQ$ is called the Gram matrix of $p(\vxx)$.
\end{definition}

Now consider the following polynomial optimization:
\bea \label{eq:polynomialOptGeneric}
\min_{\vxx \in \Real{n}} & f(\vxx) \\
\subject & h_i(\vxx) = 0, i=1,\dots,m, \nonumber \\
& g_k(\vxx) \geq 0, k=1,\dots,l, \nonumber
\eea
where $f,h_i,g_k \in \Real{}[\vxx]$ are all polynomials and let $\calX$ be the feasible set defined by $h_i,g_k$. For convenience, denote $\vh := (h_1,\dots,h_m)$, $g_0:=1$ and $\vg = (g_0,\dots,g_l)$. 
We call
\bea
& \langle \vh \rangle := \{h\in \Real{}[\vxx]: h = \sum_{i=1}^m \lambda_i h_i, \lambda_i \in \Real{}[\vxx]\}, \label{eq:ideal}\\
& \langle \vh \rangle_{2\rorder} := \{h \in \langle \vh \rangle: \text{deg}(\lambda_i h_i) \leq 2\rorder \}, \label{eq:2tideal}
\eea
the \emph{ideal} and the \emph{$2\rorder$-th truncated ideal} of $\vh$, where $\text{deg}(\cdot)$ is the degree of a polynomial.
The \emph{ideal} is simply a summation of polynomials with polynomial coefficients, a construct that will simplify the notation later on.
 We call
\bea
& Q(\vg) := \{ g \in \Real{}[\vxx]: g = \sum_{k=0}^m s_k g_k, s_k \in \Sigma_n \}, \label{eq:qmodule} \\
& Q_\rorder(\vg) := \{g \in Q(\vg): \text{deg}(s_k g_k) \leq 2\rorder \}, \label{eq:tqmodule}
\eea
the \emph{quadratic module} and the \emph{$\rorder$-th truncated quadratic module} generated from $\vg$. 
Note that the quadratic module is similar to the \emph{ideal}, except now we require the polynomial coefficients to 
be SOS.
%
Apparently, if $p(\vxx) \in \langle \vh \rangle + Q(\vg)$, then $p(\vxx)$ is nonnegative on $\calX$\footnote{If $p \in \langle \vh \rangle + Q(\vg)$, then $p = h + g$, with $h \in \langle \vh \rangle$ and $g \in Q(\vg)$. For any $\vxx \in \calX$, since $h_i(\vxx)=0$, so $h(\vxx) = \sum \lambda_i h_i = 0$; since $g_k(\vxx) \geq 0$ and $s_k(\vxx) \geq 0$, so $g = \sum s_k g_k \geq 0$. Therefore, $p = h+g \geq 0$}. Putinar’s Positivstellensatz~\cite{Putinar93IUMJ-putinarCertificate} describes when the reverse is also true.

\begin{theorem}[Putinar's Positivstellensatz~\cite{Putinar93IUMJ-putinarCertificate}] \label{thm:putinarPositivstellensatz}
Let $\calX$ be the feasible set of problem~\eqref{eq:polynomialOptGeneric}. Assume $\langle \vh \rangle + Q(\vg)$ is Archimedean, \ie, $M - \|\vxx\|_2^2 \in \langle \vh \rangle_{2\rorder} + Q_\rorder(\vg)$ for some $\rorder \in \mathbb{N}$ and $ M>0 $. If $p(\vxx) \in \Real{}[\vxx]$ is positive on $\calX$, then $p(\vxx) \in \langle \vh \rangle + Q(\vg)$.
\end{theorem}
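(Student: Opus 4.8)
The plan is to prove the contrapositive by a duality (separation) argument, converting the algebraic membership $p \in \langle \vh \rangle + Q(\vg)$ into a statement about representing measures. Write $\calM := \langle \vh \rangle + Q(\vg)$; this is a convex cone in $\Real{}[\vxx]$ containing the constant $1$ (take $s_0 = 1$ and all other coefficients zero). The first observation is that the Archimedean hypothesis makes $1$ an \emph{order unit} for $\calM$: starting from $M - \|\vxx\|_2^2 \in \calM$, a standard induction shows that every polynomial $f$ is bounded modulo $\calM$, i.e.\ there is $N \in \mathbb{N}$ with $N \pm f \in \calM$. A side benefit, read off directly from $M - \|\vxx\|_2^2 \in \calM$, is that $\calX \subseteq \{\vxx : \|\vxx\|_2^2 \le M\}$, so $\calX$ is compact.

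Next I would suppose, for contradiction, that $p$ is positive on $\calX$ but $p \notin \calM$. Since $1$ is an algebraically internal point of the convex cone $\calM$ and $p \notin \calM$, the Eidelheit/Hahn--Banach separation theorem yields a nonzero linear functional $L : \Real{}[\vxx] \to \Real{}$ with $L(\calM) \subseteq [0,\infty)$ and $L(p) \le 0$; normalizing against the order unit we may take $L(1) = 1$. The order-unit bound $N \pm f \in \calM$ then forces $|L(f)| \le N$, so $L$ is bounded relative to the unit, and in particular $L(q^2) \ge 0$ for every $q$ (as $q^2 \in Q(\vg) \subseteq \calM$).

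The heart of the argument is to represent $L$ by a positive Borel measure supported on $\calX$. Using the Archimedean boundedness of the coordinate multiplications, the GNS/spectral construction (equivalently, Haviland's theorem on the compact set $\calX$) produces a positive measure $\mu$ with $L(f) = \int f \, d\mu$ for all $f$. One then pins the support: since $h_i q \in \langle \vh \rangle \subseteq \calM$ for every polynomial $q$, both $L(h_i q) \ge 0$ and $L(-h_i q) \ge 0$, whence $L(h_i q) = 0$ for all $q$, forcing $\mu$ to concentrate on $\{h_i = 0\}$; and since $g_k q^2 \in Q(\vg) \subseteq \calM$, the inequalities $L(g_k q^2) \ge 0$ force $\mu$ to concentrate on $\{g_k \ge 0\}$. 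Intersecting over all $i,k$ gives $\mathrm{supp}(\mu) \subseteq \calX$.

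With the measure in hand the contradiction is immediate: $\mu$ is positive with total mass $\mu(\calX) = L(1) = 1 > 0$, and $p > 0$ on the compact set $\calX$, so $L(p) = \int_{\calX} p \, d\mu > 0$, contradicting $L(p) \le 0$. Hence $p \in \calM = \langle \vh \rangle + Q(\vg)$, as claimed. I expect the representation step to be the main obstacle: establishing existence of the representing measure and showing its support is exactly $\calX$ is where the Archimedean assumption is indispensable, since it both compactifies $\calX$ and bounds the multiplication operators so that the spectral theorem (or Haviland's criterion) applies.
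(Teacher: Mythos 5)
Your proposal addresses a statement the paper never proves: Theorem~\ref{thm:putinarPositivstellensatz} is quoted, with citation, as Putinar's classical Positivstellensatz, and neither the main text nor the supplementary material contains an argument for it (the paper only \emph{uses} it, via Lasserre's hierarchy). So there is no in-paper proof to compare against; what you have written is the standard functional-analytic proof from the real-algebraic-geometry literature, and it is sound in outline. The chain --- order-unit property of $1$ deduced from the Archimedean hypothesis, Eidelheit/Hahn--Banach separation of $p$ from the convex cone $\calM = \langle \vh \rangle + Q(\vg)$, normalization $L(1)=1$, GNS construction with commuting multiplication operators made bounded by Archimedeanness, spectral representation of $L$ by a positive measure, support localization via $L(h_i q)=0$ and $L(g_k q^2)\ge 0$, and the terminal contradiction $L(p) = \int_{\calX} p \, d\mu > 0$ --- is exactly Putinar's argument as presented in the standard references (Schm\"udgen, Marshall, Blekherman--Parrilo--Thomas).

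Two steps deserve tightening. First, the parenthetical claim that the representation step is ``equivalently, Haviland's theorem on the compact set $\calX$'' is not accurate: Haviland's criterion requires $L(f)\ge 0$ for \emph{every} $f$ nonnegative on $\calX$, which is not among your hypotheses --- you only know $L\ge 0$ on $\calM$, and nonnegativity of $L$ on all polynomials nonnegative on $\calX$ is essentially what the theorem is trying to establish. The GNS/spectral route is the one that actually works, so the two are not interchangeable at that point in the argument. Second, the support-localization step is stated as if immediate, but passing from $\int g_k q^2 \, d\mu \ge 0$ for all polynomial $q$ to $\mathrm{supp}(\mu) \subseteq \{g_k \ge 0\}$ (and similarly from $\int h_i q \, d\mu = 0$ to $\mathrm{supp}(\mu) \subseteq \{h_i = 0\}$, where one can simply take $q = h_i$) requires the density of polynomials in $L^2(\mu)$, which holds precisely because $\mu$ is compactly supported --- itself a consequence of the boundedness of the multiplication operators. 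Neither issue is fatal; both are standard repairs, and with them your proof is complete.
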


Based on Putinar's Positivstellensatz, Lasserre~\cite{Lasserre01siopt-LasserreHierarchy} derived a sequence of SOS relaxations that approximates the global minimum of problem~\eqref{eq:polynomialOptGeneric} with increasing accuracy.
The key insight behind Lasserre's hierarchy is twofold.
 The first insight is that problem~\eqref{eq:polynomialOptGeneric}, which we can write succinctly as
 $\min_{\vxx \in \calX} f(x)$, can be equivalently written as 
$\displaystyle\max_{\vxx, \gamma} \gamma, \subject f(x) - \gamma \geq 0 \text{ on } \calX$
(intuition: the latter pushes the lower bound $\gamma$ to reach the global minimum of $f(\vxx)$).
The second intuition is that we can rewrite the condition $f(x) - \gamma \geq 0 \text{ on } \calX$, using 
Putinar's Positivstellensatz (Theorem~\ref{thm:putinarPositivstellensatz}), leading to the following hierarchy of Sums-of-Squares relaxations.

\begin{theorem}[Lasserre's Hierarchy~\cite{Lasserre01siopt-LasserreHierarchy}] \label{thm:lasserreHierarchy} Lasserre's hierarchy of order $\rorder$ is the following SOS program:
\bea \label{eq:LasserreHierarchyordert}
\max \quad \gamma, \quad \subject \;\; f(\vxx) - \gamma \in \langle \vh \rangle_{2\rorder} + Q_\rorder(\vg),
\eea
which can be written as a standard SDP. 
Moreover, let $f^{\star}$ be the global minimum of~\eqref{eq:polynomialOptGeneric} and $f^{\star}_\rorder$ be the optimal value of~\eqref{eq:LasserreHierarchyordert}, then $f^{\star}_\rorder$ monotonically increases and $f^{\star}_\rorder \rightarrow f^{\star}$ when $\rorder \rightarrow \infty$. More recently, Nie~\cite{Nie14mp-finiteConvergenceLassere} proved that under Archimedeanness, Lasserre’s hierarchy has finite convergence generically (\ie, $f^{\star}_\rorder = f^{\star}$ for some finite $\rorder$).
\end{theorem}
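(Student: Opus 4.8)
The plan is to establish the three separate assertions in the statement in turn: (i) that \eqref{eq:LasserreHierarchyordert} can be cast as a standard SDP, (ii) that the values $f^{\star}_\rorder$ form a monotonically nondecreasing sequence of lower bounds on $f^{\star}$, and (iii) that this sequence converges to $f^{\star}$. The final (finite-convergence) claim attributed to Nie I would only cite, since its proof relies on generic smoothness/boundary-Hessian conditions that go well beyond the preliminaries developed here.

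For the SDP reformulation, I would unfold the membership $f(\vxx) - \gamma \in \langle \vh \rangle_{2\rorder} + Q_\rorder(\vg)$ into the explicit identity $f(\vxx) - \gamma = \sum_{i=1}^m \lambda_i(\vxx) h_i(\vxx) + \sum_{k=0}^l s_k(\vxx) g_k(\vxx)$, where the $\lambda_i$ are free polynomials with $\text{deg}(\lambda_i h_i) \le 2\rorder$ and each $s_k \in \Sigma_n$ satisfies $\text{deg}(s_k g_k) \le 2\rorder$. Invoking the Gram-matrix characterization \eqref{eq:PSDDescriptionSOS}, I would write each SOS multiplier as $s_k = [\vxx]_{d_k}\tran \MQ_k [\vxx]_{d_k}$ with $\MQ_k \in \calS^{N_k}_{+}$ and $d_k = \rorder - \lceil \text{deg}(g_k)/2 \rceil$. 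Equating the coefficients of every monomial on the two sides of the identity then yields a system of linear equations in the scalar $\gamma$, in the (free) coefficients of the $\lambda_i$, and in the entries of the $\MQ_k$; maximizing the linear functional $\gamma$ over these linear equalities together with the constraints $\MQ_k \succeq 0$ is precisely a standard-form SDP.

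For monotonicity and the lower-bound property, I would first note that the Archimedean hypothesis forces $\calX$ to be compact: the representation $M - \|\vxx\|_2^2 \in \langle\vh\rangle + Q(\vg)$ gives $\|\vxx\|_2^2 \le M$ on $\calX$, so (assuming $\calX \neq \emptyset$) the continuous function $f$ attains a finite minimum $f^{\star}$. The exact reformulation $f^{\star} = \max\{\gamma : f(\vxx) - \gamma \ge 0 \text{ on } \calX\}$ then follows, since any such $\gamma$ is a lower bound for $f$ on $\calX$ while $\gamma = f^{\star}$ is admissible. Because membership in $\langle\vh\rangle_{2\rorder} + Q_\rorder(\vg)$ implies nonnegativity of $f - \gamma$ on $\calX$ (the observation preceding Theorem~\ref{thm:putinarPositivstellensatz}), every $\gamma$ feasible for \eqref{eq:LasserreHierarchyordert} is feasible for the exact reformulation; hence $f^{\star}_\rorder \le f^{\star}$. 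Monotonicity is then immediate from the inclusions $\langle\vh\rangle_{2\rorder} \subseteq \langle\vh\rangle_{2(\rorder+1)}$ and $Q_\rorder(\vg) \subseteq Q_{\rorder+1}(\vg)$, which show that any order-$\rorder$ certificate is also an order-$(\rorder+1)$ certificate, so $f^{\star}_\rorder \le f^{\star}_{\rorder+1}$.

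The crux is the convergence $f^{\star}_\rorder \to f^{\star}$, and here Putinar's Positivstellensatz does the heavy lifting. I would fix $\epsilon > 0$ and set $p(\vxx) := f(\vxx) - (f^{\star} - \epsilon)$; since $f \ge f^{\star}$ on $\calX$, we have $p \ge \epsilon > 0$ on $\calX$, so $p$ is strictly positive on the feasible set. Archimedeanness lets me apply Theorem~\ref{thm:putinarPositivstellensatz} to conclude $p \in \langle\vh\rangle + Q(\vg)$. The key bookkeeping step—which I expect to be the main obstacle—is to observe that any such representation $p = \sum_i \lambda_i h_i + \sum_k s_k g_k$ uses only finitely many multipliers, each a fixed polynomial of finite degree; letting $\rorder_\epsilon$ exceed half the largest degree appearing, I obtain $p \in \langle\vh\rangle_{2\rorder_\epsilon} + Q_{\rorder_\epsilon}(\vg)$, so $\gamma = f^{\star} - \epsilon$ is feasible for \eqref{eq:LasserreHierarchyordert} at order $\rorder_\epsilon$ and thus $f^{\star}_{\rorder_\epsilon} \ge f^{\star} - \epsilon$. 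Combining with monotonicity and $f^{\star}_\rorder \le f^{\star}$ yields $f^{\star} - \epsilon \le f^{\star}_\rorder \le f^{\star}$ for all $\rorder \ge \rorder_\epsilon$; since $\epsilon$ is arbitrary, $f^{\star}_\rorder \to f^{\star}$. The two delicate points are ensuring the degree truncation is legitimate (immediate, as a single certificate involves finitely many terms) and that \emph{strict} positivity rather than mere nonnegativity is available, which is exactly why I perturb by $\epsilon$ before invoking Putinar.
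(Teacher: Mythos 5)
Your proposal is correct and follows exactly the route the paper itself sketches (the paper only cites this result and outlines the argument before the theorem statement): recast $\min_{\vxx\in\calX} f$ as maximizing a lower bound $\gamma$ subject to $f-\gamma\geq 0$ on $\calX$, certify nonnegativity via Putinar's Positivstellensatz with the $\epsilon$-perturbation to obtain strict positivity, truncate the resulting finite-degree certificate to get feasibility at some order $\rorder_\epsilon$, and use the Gram-matrix parametrization plus coefficient matching for the SDP reformulation and the truncation inclusions for monotonicity. Your treatment of the delicate points (compactness of $\calX$ from Archimedeanness, the legitimacy of the degree truncation, and deferring Nie's finite-convergence result to citation) is sound, so there is nothing to correct.
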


In computer vision, Lasserre's hierarchy was first used by Kahl and Henrion~\cite{Kahl07IJCV-GlobalOptGeometricReconstruction} to minimize rational functions arising in geometric reconstruction problems, and more recently by Probst \etal~\cite{Probst19ICCV-convexRelaxationNonminimal} as a framework to solve 
a set of 3D vision problems. 
In this paper we will show that the SOS relaxation as written in eq.~\eqref{eq:LasserreHierarchyordert} allows using \emph{basis reduction} to exploit the \emph{sparsity pattern} of polynomials and leads to significantly smaller semidefinite programs.                                                                                                                                                                                                                                                                                                                                                                                                                                                                                                                                                                                                                                                                                                                                                                                                                                                                                                
\section{Problem Statement: Shape Reconstruction}
\label{sec:problemStatement}

Assume we are given $N$ pixel measurements $\MZ = [\vz_1,\dots,\vz_N] \in \Real{2 \times N}$ 
(the 2D \emph{landmarks}), generated from 
the projection of points belonging to an unknown 3D shape $\MS\in\Real{3 \times N}$ onto an image.
Further assume the shape $\MS$ that can be represented as a linear combination of $K$ predefined basis shapes $\MB_k \in \Real{3 \times N}$, \ie $\MS = \sumbasis c_k \MB_{k}$, where $\{c_k\}_{k=1}^K$ are (unknown) 
\emph{shape coefficients}. Then, the generative model of the 2D landmarks reads:
\bea \label{eq:basisgenmodel}
\vz_i = \Pi \MR  \left( \sumbasis c_k \MB_{ki} \right)   + \vt + \vepsilon_i,i=1,\dots,N,
\eea
where $\MB_{ki}$ denotes the $i$-th 3D point on the $k$-th basis shape, 
$\vepsilon_i \in \Real{2}$ models the measurement noise, 
and $\Pi$ is the (known) weak perspective projection matrix:
\bea \label{eq:weakprojectionmatrix}
\Pi = \bmat{ccc} 
s_x & 0 & 0 \\ 0 & s_y & 0
\emat,
\eea
with $s_x$ and $s_y$ being constants\footnote{\revise{The weak perspective camera model is a good approximation of the full perspective camera model when \emph{the distance from the object to the camera} is much larger than \emph{the depth of the object itself~\cite{Zhou15cvpr}}.~\cite{Zhou18PAMI-monocap} showed that the solution obtained using the weak perspective model provides a good initialization when refining the pose for the full perspective model. } }.
In eq.~\eqref{eq:basisgenmodel}, $\MR \in \SOthree$ and $\vt \in \Real{2}$ model the (unknown) rotation and translation of the shape $\MS$ relative to the camera (only a 2D translation can be estimated).  \emph{The shape reconstruction problem consists in the joint estimation of the shape parameters $\{c_k\}_{k=1}^K$ and the camera pose $(\MR, \vt)$}\footnote{\revise{Shape reconstruction in the case of a single 3D model,~\ie,~$K=1$, is called shape alignment and has been solved recently in~\cite{Yang20ral-GNC}.}}. 

Without loss of generality, we adopt the \emph{nonnegative sparse coding} (NNSC) convention~\cite{Zhou17pami-shapeEstimationConvex} and assume all the coefficients $c_k$ are nonnegative\footnote{\revise{The general case of real coefficients is equivalent to the NNSC case where for each basis $\MB_k$ we also add the basis $-\MB_k$}.}. Due to the existence of noise, we solve the following \emph{weighted least squares optimization} with \emph{Lasso ($\ell_1$-norm) regularization}:
\small
\bea \label{eq:weightedleastsquares}
\hspace{-5mm}
\min_{\substack{c_k \geq 0, k=1,\dots,K \\ \vt \in \Real{2}, \MR\in \SOthree} } & 
\!\!\displaystyle \!\!\sumfeatures w_i \!\! \left\| \vz_i \!-\! \Pi \MR \left( \sumbasis c_k \MB_{ki} \right) \!-\! \vt \right\|^2 \!\!\!\!+\! \alpha \!\!\sumbasis \left| c_k \right| 
\eea
\normalsize
The $\ell_1$-norm regularization (controlled by a given constant $\alpha$) encourages the coefficients $c_k$ to be sparse when the shape $\MS$ is generated from only a subset of the basis shapes~\cite{Zhou17pami-shapeEstimationConvex} (note that the 
$\ell_1$-norm becomes redundant when using the NNSC convention). 
Contrary to previous approaches~\cite{Zhou17pami-shapeEstimationConvex,Ramakrishna12eccv-humanPose}, we explicitly associate a given weight $w_i \geq 0$ to each 2D measurement $\vz_i$ in eq.~\eqref{eq:weightedleastsquares}. On the one hand, this allows  
 accommodating heterogeneous noise in the 2D landmarks~\revise{(\eg,~$w_i = 1/\sigma_i^2$ when the noise $\vepsilon_i$ is Gaussian, $\vepsilon_i \sim \calN(\zero,\sigma_i^2 \eye_2)$)}.
On the other hand, as shown in Section~\ref{sec:robustOutlierRejection}, the weighted least squares framework 
is useful to robustify~\eqref{eq:weightedleastsquares} against outliers. 

\section{Certifiably Optimal Shape Reconstruction}
\label{sec:certifiableNonminimalSolver}

This section shows how to develop a \emph{certifiably optimal} solver for problem~\eqref{eq:weightedleastsquares}. Our first step is to algebraically eliminate the translation $\vt$ and obtain a translation-free shape reconstruction problem, as shown below.

\begin{theorem}[Translation-free Shape Reconstruction]\label{thm:trans-freeRecon}
The shape reconstruction problem~\eqref{eq:weightedleastsquares} is equivalent to the following translation-free optimization:
\small
\bea \label{eq:trans-freeshaperecon}
\min_{\substack{c_k \geq 0, k=1,\dots,K \\ \MR\in \SOthree} }  & 
\!\!\! \displaystyle \sumfeatures \left\| \vztilde_i \!-\! \Pi \MR \left( \sumbasis c_k \vBtilde_{ki} \right)  
 \right\|^2 \!\! +\! \alpha \sumbasis | c_k | 
\eea
\normalsize
where $\vztilde_i$ and $\vBtilde_{ki}$ can be computed as follows:
\bea
& \hspace{-3mm} \vztilde_i = \sqrt{w_i}(\vz_i - \vzbar^w), \text{ with } \vzbar^w = \frac{ \sumfeatures w_i \vz_i }{\sumfeatures w_i}, \\
& \hspace{-3mm} \vBtilde_{ki} = \sqrt{w_i}(\MB_{ki} - \vBbar^w_{k}), \text{ with } \vBbar^w_k = \frac{\sumfeatures w_i \MB_{ki} }{ \sumfeatures w_i }.
\eea
Further, let $\MR^{\star}$ and $c_k^{\star},k=1,\dots,K,$ be the global minimizer of the above translation-free optimization~\eqref{eq:trans-freeshaperecon}, then the optimal translation $\vt^\star$ can be recovered as:
\bea \label{eq:recovertfromRc}
\vt^{\star} = \vzbar^w - \Pi \MR^\star \left( \sumbasis c_k^\star \vBbar^w_k \right).
\eea
\end{theorem}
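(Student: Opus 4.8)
The plan is to eliminate $\vt$ by recognizing that the objective in~\eqref{eq:weightedleastsquares} is, for any fixed $\MR$ and $\{c_k\}$, a convex quadratic in $\vt$, so the optimal $\vt$ can be found in closed form by setting the gradient to zero. Concretely, I would first observe that the $\ell_1$-regularization term $\alpha\sumbasis|c_k|$ does not involve $\vt$, so it can be set aside during the translation minimization. Then, writing $\vr_i := \vz_i - \Pi\MR\left(\sumbasis c_k \MB_{ki}\right)$ for the "pose-projected residual," the data term becomes $\sumfeatures w_i \|\vr_i - \vt\|^2$. Taking $\partial/\partial\vt$ and setting it to zero gives the weighted centroid condition $\vt^\star = \frac{\sumfeatures w_i \vr_i}{\sumfeatures w_i}$, which upon substituting the definition of $\vr_i$ yields exactly~\eqref{eq:recovertfromRc}.

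Next I would substitute this optimal $\vt^\star$ back into the data term. The key algebraic step is that $\vr_i - \vt^\star = \vr_i - \vzbar^w + \Pi\MR\left(\sumbasis c_k \vBbar^w_k\right)$, and regrouping the projection terms by linearity of $\Pi\MR$ gives $(\vz_i - \vzbar^w) - \Pi\MR\left(\sumbasis c_k(\MB_{ki} - \vBbar^w_k)\right)$. Multiplying the $i$-th squared term by $w_i$ distributes as $\sqrt{w_i}$ inside each norm, which is precisely the definition of $\vztilde_i$ and $\vBtilde_{ki}$; this matches the objective in~\eqref{eq:trans-freeshaperecon}. I would present this as a direct identity: the weighted centering $\vz_i \mapsto \sqrt{w_i}(\vz_i - \vzbar^w)$ and $\MB_{ki}\mapsto \sqrt{w_i}(\MB_{ki}-\vBbar^w_k)$ is exactly the operation that absorbs the optimal translation.

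Finally, to establish \emph{equivalence} (not just that one direction gives a lower bound), I would argue that the two problems share the same minimizers in $(\MR, \{c_k\})$: for every feasible $(\MR,\{c_k\})$, the minimum over $\vt$ of the original objective equals the translation-free objective evaluated at the same $(\MR,\{c_k\})$, because $\vt$ appears only in the data term and is minimized out exactly. Hence the partial minimization over $\vt$ reduces~\eqref{eq:weightedleastsquares} to~\eqref{eq:trans-freeshaperecon} without loss, and any global minimizer $(\MR^\star,\{c_k^\star\})$ of the latter, paired with the recovered $\vt^\star$ from~\eqref{eq:recovertfromRc}, is a global minimizer of the former.

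The main obstacle here is not conceptual but bookkeeping: I must carefully verify that the linearity of $\Pi\MR$ in both $\MR$ and the linear combination $\sumbasis c_k(\cdot)$ lets the centering commute with the projection, so that the weighted mean $\vBbar^w_k$ of the basis points transforms correctly under $\Pi\MR$. In particular I want to confirm that the same rotation and projection apply to every point $i$, which is what makes $\Pi\MR\left(\sumbasis c_k \vBbar^w_k\right)$ a single quantity independent of $i$ that can be pulled out of the sum and merged into the centroid. Once that commutation is checked, the rest is the standard "center the data to eliminate translation" argument familiar from Procrustes-type problems.
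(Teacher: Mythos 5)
Your proposal is correct and follows essentially the same route as the paper's proof: both eliminate $\vt$ by setting the gradient of the (weighted least squares) objective with respect to $\vt$ to zero, obtain the weighted-centroid closed form~\eqref{eq:recovertfromRc}, and substitute it back so that the weights $\sqrt{w_i}$ and the centering are absorbed into the definitions of $\vztilde_i$ and $\vBtilde_{ki}$. Your explicit partial-minimization argument for equivalence (that minimizing out the unconstrained variable $\vt$ preserves global minimizers) is the same justification the paper uses, stated slightly more carefully.
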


A formal proof of Theorem~\ref{thm:trans-freeRecon} can be found in the \supp. The intuition behind 
Theorem~\ref{thm:trans-freeRecon} is that if we express the landmark coordinates and 3D basis shapes 
with respect to their (weighted) centroids $\vzbar^w$ and $\vBbar^w_k,k=1,\dots,K$, we can remove the 
dependence on the translation $\vt$.
 This strategy is inspired by Horn's method for point cloud registration~\cite{Horn87josa}, 
 and generalizes~\cite{Zhou17pami-shapeEstimationConvex} to the weighted and non-centered case.

\subsection{SOS Relaxation}
\label{sec:sosRelaxation}
This section  applies Lasserre's hierarchy as described in Theorem~\ref{thm:lasserreHierarchy} to solve the translation-free shape reconstruction problem~\eqref{eq:trans-freeshaperecon}. We do this in two steps: we first show problem~\eqref{eq:trans-freeshaperecon} can be formulated as a polynomial optimization in the form~\eqref{eq:polynomialOptGeneric}; and then we add valid constraints to make the feasible set Archimedean.

{\bf Polynomial Optimization Formulation.} Denote $\vc = [c_1,\dots,c_k]\tran \in \Real{K}$, $\vr = \text{vec}(\MR) = [\vr_1\tran,\vr_2\tran,\vr_3\tran]\tran \in \Real{9}$, with $\vr_i,i=1,2,3$ being the $i$-th column of $\MR$, then $\vxx := [\vc\tran,\vr\tran]\tran \in \Real{K+9}$ is the unknown decision vector in~\eqref{eq:polynomialOptGeneric}. 
Consider the first term in the objective function of~\eqref{eq:trans-freeshaperecon}. We can write:
\bea \label{eq:costqi}
\scriptstyle q_i(\vxx) := \left\| \vztilde_i - \Pi \MR \left( \sumbasis c_k \vBtilde_{ki} \right)   \right\|^2 = \left\| \vztilde_i - \Pi \sumbasis c_k \MR \vBtilde_{ki}   \right\|^2,
\eea
then it becomes clear that $q_i(\vxx)$ is a polynomial function of $\vxx$ with degree 4. 
Because the Lasso regularization is linear in $\vc$, the objective function $f(\vxx)$ is a degree-4 polynomial.

Now we consider the feasible set of~\eqref{eq:trans-freeshaperecon}. The inequality constraints $c_k \geq 0$ are already in generic form~\eqref{eq:polynomialOptGeneric} with $g_k(\vxx) = c_k,k=1,\dots,K$, being degree-1 polynomials.
As for the $\MR \in \SOthree$ constraint, 
it has already been shown in related work~\cite{Tron15rssws3D-dualityPGO3D-duplicate,Briales17cvpr-registration}
that enforcing $\MR \in \SOthree$ is equivalent to imposing 15 \revise{quadratic} equality constraints.

\begin{lemma}[Quadratic Constraints for $\SOthree$~\cite{Tron15rssws3D-dualityPGO3D-duplicate,Briales17cvpr-registration}] For a matrix $\MR \in \Real{3 \times 3}$, 
the constraint $\MR \in \SOthree$ (where $\SOthree :=\{\MR : \MR\tran\MR=\eye_3, \det{\MR}=+1\}$ is the set of proper rotation matrices) is equivalent to the following set of degree-2 polynomial equality constraints
($h_i(\vxx) = 0, i=1,\dots,15$):
\bea
\begin{cases} \label{eq:polyConstraintsSOthree}
h_1 = 1-\| \vr_1 \|^2,h_2 = 1-\| \vr_2 \|^2 ,h_3 = 1-\| \vr_3 \|^2 \\
h_4 = \vr_1\tran \vr_2, \quad h_5 = \vr_2\tran \vr_3, \quad h_6 = \vr_3\tran \vr_1 \\
h_{7,8,9} = \vr_1 \times \vr_2 - \vr_3 \\
h_{10,11,12} = \vr_2 \times \vr_3 - \vr_1 \\
h_{13,14,15} = \vr_3 \times \vr_1 - \vr_2
\end{cases}
\eea
where $\vr_i \in \Real{3},i=1,2,3$, denotes the $i$-th column of $\MR$ and ``$\times$'' represents the vector cross product.
\end{lemma}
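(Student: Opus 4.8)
The plan is to prove the two inclusions separately, after observing that the fifteen constraints partition naturally into two blocks. The six constraints $h_1,\dots,h_6$ encode exactly the orthonormality relation $\MR\tran\MR = \eye_3$: the three diagonal entries of $\MR\tran\MR$ give $\|\vr_i\|^2 = 1$ (constraints $h_1,h_2,h_3$), and the three independent off-diagonal entries give $\vr_i\tran\vr_j = 0$ for $i\neq j$ (constraints $h_4,h_5,h_6$), using the symmetry of $\MR\tran\MR$. The nine constraints $h_7,\dots,h_{15}$ encode the right-handed orientation of the column frame through the three cross-product identities $\vr_1\times\vr_2 = \vr_3$, $\vr_2\times\vr_3 = \vr_1$, and $\vr_3\times\vr_1 = \vr_2$.

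For the forward direction, I would assume $\MR \in \SOthree$. By definition $\MR\tran\MR = \eye_3$, so the columns are orthonormal and $h_1,\dots,h_6$ vanish immediately. Orthonormality forces $\det(\MR) = \pm 1$, and the assumption $\det(\MR) = +1$ then means the ordered columns $(\vr_1,\vr_2,\vr_3)$ form a positively oriented orthonormal basis; this is precisely the statement that each column equals the cross product of the other two in cyclic order, so $h_7,\dots,h_{15}$ vanish as well.

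For the reverse direction, I would assume all fifteen $h_i = 0$. The vanishing of $h_1,\dots,h_6$ gives $\MR\tran\MR = \eye_3$, hence $\MR \in \Othree$ and $\det(\MR) \in \{+1,-1\}$. It remains to rule out the reflection case. The key step is the scalar-triple-product identity $\det(\MR) = \vr_1\tran(\vr_2\times\vr_3)$, valid for any matrix with columns $\vr_1,\vr_2,\vr_3$. Substituting the cross-product constraint $\vr_2\times\vr_3 = \vr_1$ (from $h_{10,11,12}=0$) and then the unit-norm constraint $\|\vr_1\|^2 = 1$ (from $h_1 = 0$) yields $\det(\MR) = \vr_1\tran\vr_1 = 1$, so $\MR \in \SOthree$.

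The only genuinely delicate point is this orientation argument: the six orthonormality constraints alone admit both proper and improper orthogonal matrices, so one must verify that the cross-product block selects the correct connected component of $\Othree$. Once the triple-product identity is in hand the calculation is immediate, and in fact a single cross-product triple (any one of the three) together with $h_1,\dots,h_6$ already suffices to pin down $\det(\MR)=+1$; the remaining six cross-product equations are redundant for the set-theoretic equivalence, though retaining all of them is harmless and tends to tighten the subsequent SOS relaxation.
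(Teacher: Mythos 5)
Your proof is correct and complete. Note that the paper itself does not prove this lemma: it cites~\cite{Tron15rssws3D-dualityPGO3D-duplicate,Briales17cvpr-registration} and offers only the informal reading that $h_{1,2,3}$ enforce unit columns, $h_{4,5,6}$ enforce mutual orthogonality, and $h_{7}$--$h_{15}$ enforce the right-hand rule, so there is no in-paper argument to compare against; yours is the standard proof and is consistent with that reading. Both directions are handled soundly: forward, a proper orthonormal frame satisfies the three cyclic cross-product identities (equivalently, $\mathrm{adj}(\MR)\tran = \MR$ when $\MR\tran\MR=\eye_3$ and $\det\MR=+1$); backward, the scalar triple product $\det(\MR) = \vr_1\tran(\vr_2\times\vr_3) = \vr_1\tran\vr_1 = 1$ correctly rules out the reflection component of $\Othree$, which is indeed the only delicate point. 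Your closing redundancy remark also agrees with the paper's footnote, with one difference worth flagging: the paper claims the sharper reduction that $h_{1,2,3,7,8,9}$ alone suffice, i.e., even the orthogonality constraints $h_{4,5,6}$ can be dropped, because $\|\vr_1\times\vr_2\| = \|\vr_3\| = 1$ together with $\|\vr_1\|=\|\vr_2\|=1$ forces $\vr_1\perp\vr_2$, and $\vr_3=\vr_1\times\vr_2$ is then automatically orthogonal to both, after which the same triple-product computation gives $\det\MR=+1$.
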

In eq.~\eqref{eq:polyConstraintsSOthree}, $h_{1,2,3}$ constrain the columns to be unit vectors, $h_{4,5,6}$ constrain the columns to be mutually orthogonal, and $h_{7-15}$ constrain the columns to satisfy the right-hand rule (\ie, the determinant constraint)\footnote{\revise{We remark that the 15 equality constraints in~\eqref{eq:polyConstraintsSOthree} are redundant. For example, $h_{1,2,3,7,8,9}$ are sufficient to fully constrain $\MR \in \SOthree$. We also found that, empirically, choosing $h_{1,2,3}$ and $h_{7-15}$ yields similar tightness results as choosing all 15 constraints.}}. 

In summary, the translation-free problem~\eqref{eq:trans-freeshaperecon} is equivalent to a polynomial optimization with a degree-4 objective $f(\vxx)$, constrained by 15 \revise{quadratic} equalities $h_i(\vxx)$ (eq.~\eqref{eq:polyConstraintsSOthree}) and $K$ \revise{linear} inequalities $g_k(\vxx)=c_k$.

{\bf Archimedean Feasible Set.} The issue with the feasible set defined by inequalities $c_k\geq 0$ and equalities~\eqref{eq:polyConstraintsSOthree} is that $\langle \vh \rangle + Q(\vg)$ is \emph{not} Archimedean, which can be easily seen from the \emph{unboundedness} of the linear inequality $c_k \geq 0$\footnote{$M - \| \vxx \|_2^2 \geq 0$ requires $\vxx$ to have bounded $\ell_2$-norm. }. However, we know the linear coefficients must be bounded because the pixel measurement values $\MZ$ lie in a bounded set (the 2D image). Therefore, we propose to \emph{normalize} the 2D measurements and the 3D basis shapes: (i) for 2D measurements $\MZ$, we first divide them by $s_x$ and $s_y$ (eq.~\eqref{eq:weakprojectionmatrix}), and then scale them such that they lie inside a unit circle; (ii) for each 3D basis shape $\MB_k$, we scale $\MB_k$ such that it lies inside a unit sphere. With this proper normalization, we can add the following degree-2 inequality constraints ($c_k^2\leq1$) that bound the linear coefficients:
\bea \label{eq:boundlinearcoeff}
g_{K+k}(\vxx) = 1 - c_k^2,k=1,\dots,K.
\eea
Now we can certify the Archimedeanness of $\langle \vh \rangle + Q(\vg)$:
\bea \label{eq:certifyArchimedean}
K+3 - \| \vxx \|_2^2 = \underbrace{ \sumbasis 1 \cdot g_{K+k}}_{\in Q_1(\vg)} + \underbrace{ h_1 +  h_2 +  h_3 }_{\in \langle \vh \rangle_2},
\eea
with $M=K+3$ and $\rorder=1$ (\cf Theorem~\ref{thm:putinarPositivstellensatz}).

{\bf Apply Lasserre's Hierarchy.} With Archimedeanness, we can now apply Lasserre's hierarchy of SOS relaxations.

\begin{proposition}[SOS Relaxations for Shape Reconstruction] \label{prop:naiveSOSRelax}
The SOS relaxation of order $\rorder$ ($\rorder \geq 2$)\footnote{The minimum relaxation order is 2 because $f(\vxx)$ has degree 4.} for the translation-free shape reconstruction problem~\eqref{eq:trans-freeshaperecon} is the following convex semidefinite program:
\bea \label{eq:naiveSOS}
\max_{ \substack{\gamma \in \Real{}, \MS_0 \in \calS^{\mySize_0}_+
\\ \MS_k \in \calS_{+}^{\mySize_s}, \; k=1,\dots,2K
\\ \vlambda_i \in \Real{ \mySize_{\lambda} }, i=1,\dots,15 } } & \gamma \\
\label{eq:naiveSOSLinearConstraints} \subject & 
f(\vxx) - \gamma = [\vxx]_\rorder\tran \MS_0 [\vxx]_\rorder + \nonumber \\
& \sum_{k=1}^{2K} \left( [\vxx]_{\rorder-1}\tran \MS_k [\vxx]_{\rorder-1} \right) g_k(\vxx) + \nonumber \\
& \sum_{i=1}^{15} \left(\vlambda_i\tran [\vxx]_{2\rorder-2}\right) h_i(\vxx),
\eea
where $f(\vxx)$ is the objective function defined in~\eqref{eq:trans-freeshaperecon}, $g_k(\vxx),k=1,\dots,2K$ are the inequality constraints $c_k,1-c_k^2$, $h_i(\vxx),i=1,\dots,15$ are the equality constraints defined in~\eqref{eq:polyConstraintsSOthree}, and 
$\mySize_0:=\nchoosek{K+9+\rorder}{\rorder}$, $\mySize_s:= \nchoosek{K+8+\rorder}{\rorder-1}$, $\mySize_{\lambda} := \nchoosek{K+7+2\rorder}{2\rorder-2}$ are the sizes of matrices and vectors.
\end{proposition}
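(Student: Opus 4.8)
The plan is to treat this proposition as a direct \emph{instantiation} of Lasserre's hierarchy (Theorem~\ref{thm:lasserreHierarchy}) for the concrete data of problem~\eqref{eq:trans-freeshaperecon}, so that the work reduces to unpacking the abstract membership condition $f(\vxx) - \gamma \in \langle \vh \rangle_{2\rorder} + Q_\rorder(\vg)$ in eq.~\eqref{eq:LasserreHierarchyordert} into an explicit parametrization. First I would substitute $n = K+9$ (the dimension of $\vxx = [\vc\tran,\vr\tran]\tran$) and record the degrees of the problem data: $f$ has degree $4$, the equalities $h_i$ of~\eqref{eq:polyConstraintsSOthree} have degree $2$, the inequalities $g_k = c_k$ ($k=1,\dots,K$) have degree $1$, and $g_{K+k} = 1 - c_k^2$ ($k=1,\dots,K$) from~\eqref{eq:boundlinearcoeff} have degree $2$, with $g_0 = 1$. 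Using the definitions~\eqref{eq:2tideal} and~\eqref{eq:tqmodule}, membership means there exist SOS multipliers $s_k \in \Sigma_n$ with $\deg(s_k g_k)\le 2\rorder$ and arbitrary polynomial multipliers $\lambda_i$ with $\deg(\lambda_i h_i)\le 2\rorder$ satisfying $f(\vxx)-\gamma = \sum_{k=0}^{2K} s_k g_k + \sum_{i=1}^{15}\lambda_i h_i$; I would then apply the Gram-matrix characterization~\eqref{eq:PSDDescriptionSOS} to each $s_k$.

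Next I would carry out the degree bookkeeping that fixes the three matrix/vector sizes. The term $s_0 g_0 = s_0$ is SOS of degree at most $2\rorder$, so~\eqref{eq:PSDDescriptionSOS} gives $s_0 = [\vxx]_\rorder\tran \MS_0 [\vxx]_\rorder$ with $\MS_0 \succeq 0$, where $[\vxx]_\rorder$ has $\nchoosek{K+9+\rorder}{\rorder} = \mySize_0$ entries. The key (and only slightly subtle) point concerns the inequality multipliers: for $g_k = c_k$ of degree $1$ the constraint $\deg(s_k g_k)\le 2\rorder$ reads $\deg(s_k)\le 2\rorder-1$, but since an SOS polynomial has \emph{even} degree this is equivalent to $\deg(s_k)\le 2\rorder-2$; for $g_{K+k}=1-c_k^2$ of degree $2$ it reads directly $\deg(s_k)\le 2\rorder-2$. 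Hence \emph{all} $2K$ inequality multipliers share the form $[\vxx]_{\rorder-1}\tran \MS_k [\vxx]_{\rorder-1}$ with $\MS_k\succeq 0$ and $[\vxx]_{\rorder-1}$ of size $\nchoosek{K+8+\rorder}{\rorder-1} = \mySize_s$. For the equality part, $\lambda_i$ is an \emph{arbitrary} (not SOS) polynomial with $\deg(\lambda_i)\le 2\rorder-2$, hence a free linear combination $\lambda_i = \vlambda_i\tran [\vxx]_{2\rorder-2}$ with $\vlambda_i$ unconstrained of size $\nchoosek{K+7+2\rorder}{2\rorder-2} = \mySize_\lambda$. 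Substituting these representations reproduces the right-hand side of~\eqref{eq:naiveSOSLinearConstraints}.

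Finally I would argue that the resulting object is a \emph{standard} SDP, as claimed. Two polynomials are equal iff their coefficients agree in the monomial basis, so the polynomial identity~\eqref{eq:naiveSOSLinearConstraints} is equivalent to finitely many \emph{linear} equations in the unknowns $\gamma$, the entries of $\MS_0,\MS_1,\dots,\MS_{2K}$, and the entries of $\vlambda_1,\dots,\vlambda_{15}$, namely one equation for each monomial of degree at most $2\rorder$ (the coefficients of $f(\vxx)-\gamma$ vanishing automatically for degrees $5$ through $2\rorder$). Together with the conic constraints $\MS_0\succeq 0$, $\MS_k\succeq 0$ and the linear objective $\max\gamma$, this is precisely the standard primal SDP form. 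I do not expect a deep obstacle here: the content is the careful specialization of Theorem~\ref{thm:lasserreHierarchy}, and the only place that genuinely requires attention is the parity argument above, which lets the degree-$1$ and degree-$2$ inequality constraints be handled with a single multiplier size $\mySize_s$ and thereby yields the compact statement of Proposition~\ref{prop:naiveSOSRelax}.
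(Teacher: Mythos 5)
Your proposal is correct and follows essentially the same route as the paper's own proof: instantiate Theorem~\ref{thm:lasserreHierarchy}, unpack the truncated ideal~\eqref{eq:2tideal} and truncated quadratic module~\eqref{eq:tqmodule}, and perform the degree bookkeeping, including the same parity argument that lets the degree-$1$ multipliers for $g_k = c_k$ be capped at degree $2\rorder-2$ and share the size $\mySize_s$ with those for $1-c_k^2$. Your closing observation that coefficient matching turns the polynomial identity~\eqref{eq:naiveSOSLinearConstraints} into linear equations (hence a standard SDP) is a point the paper leaves implicit, but it is the same argument in substance.
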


While a formal proof of Proposition~\ref{prop:naiveSOSRelax} is given in the~\supp, we observe that~\eqref{eq:naiveSOS} immediately results from the application of Lasserre's hierarchy to~\eqref{eq:LasserreHierarchyordert}, by parametrizing $Q_\rorder(\vg)$ with monomial bases $[\vxx]_{\rorder-1}$, $[\vxx]_{\rorder}$ and PSD matrices $\MS_0$, $\MS_k,k=1,\dots,2K$ (one for each $g_k$), and by parametrizing $\langle \vh \rangle_{2\rorder}$ with monomial basis $[\vxx]_{2\rorder-2}$ and coefficient vectors $\vlambda_i,i=1,\dots,15$ (one for each $h_i$). Problem~\eqref{eq:naiveSOS} can be written as an 
SDP and solved globally using standard convex solvers (\eg YALMIP~\cite{Lofberg04cacsd-yalmip}). 
 We call the SDP written in~\eqref{eq:naiveSOS} the \emph{primal} SDP. The \emph{dual} SDP of~\eqref{eq:naiveSOS} can be derived using \emph{moment relaxation}~\cite{Lasserre01siopt-LasserreHierarchy,Laurent09eaag-SOSMomentOptimization,lasserre10book-momentsOpt}, which is readily available in GloptiPoly 3~\cite{henrion2009optimmethodsoftw-gloptipoly3}.

{\bf Extract Solutions from SDP.} After solving the SDP~\eqref{eq:naiveSOS}, we can extract solutions to the original non-convex problem~\eqref{eq:trans-freeshaperecon}, a procedure we call \emph{rounding}.
\begin{proposition}[Rounding and Duality Gap] \label{prop:extractSolutions}
Let $f^\star_\rorder = \gamma^\star$ and $\MS_0^{\rorder\star},\MS_k^{\rorder\star},\vlambda_i^{\rorder\star}$ be the optimal solutions to the SDP~\eqref{eq:naiveSOS} at order $\rorder$; compute $\vv^{\rorder\star}$ as the eigenvector corresponding to the minimum eigenvalue of $\MS_0^{\rorder\star}$, and then normalize $\vv^{\rorder\star}$ such that the first entry is equal to 1. Then an approximate solution to problem~\eqref{eq:trans-freeshaperecon} can be obtained as:
\bea
\hat{\vc}^\rorder = \text{proj}_{\vg}([\vv^{\rorder\star}]_{\vc}); \quad \hat{\vr}^\rorder = \text{proj}_{\vh}([\vv^{\rorder\star}]_{\vr}),
\eea
where $[\vv^{\rorder\star}]_{\vc}$ (resp. $[\vv^{\rorder\star}]_{\vr}$) denotes the entries of $\vv^{\rorder\star}$ corresponding to monomials $\vc$ (resp. $\vr$), and $\proj_{\vg}$ (resp. $\proj_{\vh}$) denotes projection to the feasible set defined by $\vg$ (resp. $\vh$). Specifically for problem~\eqref{eq:trans-freeshaperecon}, $\proj_{\vg}$ is rounding each coefficient $c_k$ to the $[0,1]$ interval, and $\proj_{\vh}$ is the projection to $\SOthree$. 
Moreover, let $\hat{f}_{\rorder}$ be the value of the objective function evaluated at the approximate solution $\hat{\vxx}^\rorder := [(\hat{\vc}^\rorder)\tran, (\hat{\vr}^\rorder)\tran]\tran$, 
then the following inequality holds (weak duality):
\bea
f^\star_\rorder \leq f^\star \leq \hat{f}_\rorder,
\eea
where $f^\star$ is the true (unknown) global minimum of problem~\eqref{eq:trans-freeshaperecon}. 
We define the relative duality gap $\eta_\rorder$ as:
\bea
\eta_\rorder = (\hat{f}_\rorder - f^\star_\rorder) / \hat{f}_\rorder,
\eea
which quantifies the quality of the SOS relaxation.
\end{proposition}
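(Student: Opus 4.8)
The plan is to treat the proposition as two essentially independent claims: (i) the correctness of the eigenvector rounding map, and (ii) the weak-duality chain $f^\star_\rorder \leq f^\star \leq \hat{f}_\rorder$. The rigorous core is the chain of inequalities, which I would split at $f^\star$ and prove the two halves separately, one from the primal SOS construction and one from feasibility of the rounded point. The rounding itself I would motivate through primal--dual (SOS--moment) complementary slackness; its \emph{exactness} is only guaranteed when the relaxation is tight, which matches the statement's description of $\hat{\vxx}^\rorder$ as merely an \emph{approximate} solution.

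\textbf{Lower bound $f^\star_\rorder \leq f^\star$.} First I would invoke the lower-bound property already embedded in Theorem~\ref{thm:lasserreHierarchy}. Concretely, fix any $\gamma$ feasible for~\eqref{eq:naiveSOS}. By construction the identity in~\eqref{eq:naiveSOSLinearConstraints} exhibits $f(\vxx)-\gamma$ as an element of $\langle \vh \rangle_{2\rorder} + Q_\rorder(\vg) \subseteq \langle \vh \rangle + Q(\vg)$, since the $\MS_k$ are PSD (so the quadratic-form multipliers $[\vxx]_{\rorder-1}\tran \MS_k [\vxx]_{\rorder-1}$ are SOS) and the $\vlambda_i\tran[\vxx]_{2\rorder-2}$ are arbitrary polynomial multipliers on the $h_i$. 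The footnote nonnegativity argument then gives $f(\vxx)-\gamma \geq 0$ for every $\vxx \in \calX$, i.e. $\gamma \leq f(\vxx)$ on the whole feasible set, hence $\gamma \leq \min_{\vxx\in\calX} f(\vxx) = f^\star$. Taking the supremum over feasible $\gamma$ yields $f^\star_\rorder = \gamma^\star \leq f^\star$.

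\textbf{Upper bound $f^\star \leq \hat{f}_\rorder$.} This is pure feasibility. The rounding sets $\hat{\vc}^\rorder = \proj_{\vg}([\vv^{\rorder\star}]_{\vc}) \in [0,1]^K$ and $\hat{\vr}^\rorder = \proj_{\vh}([\vv^{\rorder\star}]_{\vr}) \in \SOthree$, so $\hat{\vxx}^\rorder$ satisfies every constraint of~\eqref{eq:trans-freeshaperecon}. Since $f^\star$ is by definition the global minimum over the feasible set and $\hat{\vxx}^\rorder$ is feasible, $f^\star \leq f(\hat{\vxx}^\rorder) = \hat{f}_\rorder$. Chaining the two bounds gives the claimed weak-duality relation, and $\eta_\rorder = (\hat{f}_\rorder - f^\star_\rorder)/\hat{f}_\rorder$ is then a nonnegative computable certificate that sandwiches $f^\star$.

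\textbf{Justifying the eigenvector rounding, and the main obstacle.} To explain why the minimum eigenvector of $\MS_0^{\rorder\star}$ is the natural choice, I would pass to the dual moment SDP and use complementary slackness on the PSD block: at optimality the Gram matrix $\MS_0^{\rorder\star}$ and the moment matrix $\MM$ satisfy $\langle \MS_0^{\rorder\star}, \MM \rangle = 0$. When the relaxation is exact, $\MM$ is rank one and factors as $\MM = [\vxx^\star]_\rorder [\vxx^\star]_\rorder\tran$ for the true minimizer $\vxx^\star$; substituting gives $[\vxx^\star]_\rorder\tran \MS_0^{\rorder\star} [\vxx^\star]_\rorder = 0$, so $[\vxx^\star]_\rorder$ lies in the kernel of the PSD matrix $\MS_0^{\rorder\star}$, i.e. it is an eigenvector for the minimum (zero) eigenvalue. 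Because the first component of $[\vxx^\star]_\rorder$ is the constant monomial $1$, rescaling the eigenvector so that its leading entry equals $1$ recovers $[\vxx^\star]_\rorder$ exactly, and the $\vc$- and $\vr$-blocks read off $\vxx^\star$. The main obstacle is precisely that this exactness hinges on the rank-one/tightness assumption: when the relaxation is not tight the minimum eigenvector only approximates a degree-$\rorder$ moment vector and its $\vc,\vr$ blocks need not be feasible, which is why the projections $\proj_{\vg},\proj_{\vh}$ are applied and why the conclusion is stated as an approximate solution certified a posteriori by $\eta_\rorder$.
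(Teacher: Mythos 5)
Your proposal is correct and follows essentially the same route as the paper: the paper states this proposition without a dedicated proof, but its two halves are exactly the lower-bound property of Lasserre's hierarchy (Theorem~\ref{thm:lasserreHierarchy}, i.e., any feasible $\gamma$ certifies $f(\vxx)-\gamma \geq 0$ on $\calX$) combined with feasibility of the projected point, and your complementary-slackness justification of the minimum-eigenvector extraction ($\MS_0^{\rorder\star}\MM_\rorder^\star = \zero$ with rank-one $\MM_\rorder^\star = [\vxx^\star]_\rorder[\vxx^\star]_\rorder\tran$ under tightness) is the same argument the paper deploys in the supplementary proof of Theorem~\ref{thm:certificateGlobalOptimality}. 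Your explicit caveat that exact recovery of $[\vxx^\star]_\rorder$ hinges on the rank-one condition, and that otherwise the projections $\proj_{\vg},\proj_{\vh}$ are what make the rounded point feasible, matches the paper's framing of $\hat{\vxx}^\rorder$ as merely an approximate solution certified a posteriori by $\eta_\rorder$.
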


{\bf Certifiable Global Optimality.} Besides extracting solutions to the original problem, we can also verify when the SOS relaxation solves the original problem \emph{exactly}.
\revise{
\begin{theorem}[Certificate of Global Optimality] \label{thm:certificateGlobalOptimality} Let $f^\star_\rorder = \gamma^\star$ and $\MS_0^{\rorder \star}$ be the optimal solutions to the SDP~\eqref{eq:naiveSOS} at order $\rorder$. If $\text{corank}(\MS_0^{\rorder \star}) = 1$ (the corank is the dimension of the null space of $\MS_0^{\rorder \star}$), 
then $f^\star_\rorder$ is the global minimum of problem~\eqref{eq:trans-freeshaperecon}, and the relaxation is said to be \emph{tight} at order $\rorder$. Moreover, the relative duality gap $\eta_{\rorder}=0$ and the solution $\hat{\vxx}^{\rorder}$ extracted using Proposition~\ref{prop:extractSolutions} is the unique global minimizer of problem~\eqref{eq:trans-freeshaperecon}.
\end{theorem}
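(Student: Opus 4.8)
The plan is to prove tightness through the SDP duality between the SOS program~\eqref{eq:naiveSOS} (the \emph{primal}) and its moment relaxation (the \emph{dual}). First I would write down the dual explicitly: its variable is a truncated moment vector $\vy$ whose associated moment matrix $\MM := M_\rorder(\vy) \succeq 0$ is indexed by the monomials $[\vxx]_\rorder$, normalized so that the entry attached to the constant monomial equals $1$, and which satisfies the localizing constraints induced by the $g_k$ and $h_i$; the dual objective is the linear functional $L_\vy(f) = \sum_\alpha f_\alpha y_\alpha$. Because the feasible set is Archimedean by~\eqref{eq:certifyArchimedean}, Slater's condition holds on the moment side, so strong duality applies: the dual optimal value equals $f^\star_\rorder$ and is attained by some $\MM^\star$.

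The core of the argument is complementary slackness. At a primal--dual optimal pair, the block of the dual paired with the Gram matrix $\MS_0^{\rorder \star}$ satisfies $\langle \MS_0^{\rorder \star}, \MM^\star \rangle = 0$. Since both matrices are PSD, this forces $\MS_0^{\rorder \star}\MM^\star = 0$, i.e.\ $\operatorname{range}(\MM^\star) \subseteq \ker(\MS_0^{\rorder \star})$, hence $\operatorname{rank}(\MM^\star) \le \corank(\MS_0^{\rorder \star}) = 1$. As the constant-monomial entry of $\MM^\star$ is $1$, we have $\MM^\star \ne 0$, so $\operatorname{rank}(\MM^\star) = 1$ and $\MM^\star = \vv\vv\tran$ with $\vv$ normalized so its first entry is $1$. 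The Hankel (moment) structure of $\MM^\star$ then forces $\vv = [\vxx^\star]_\rorder$ for the point $\vxx^\star$ read off from the degree-one entries of $\vv$, so $\MM^\star$ is the moment matrix of the Dirac $\delta_{\vxx^\star}$ and $\vxx^\star$ is \emph{feasible} for~\eqref{eq:trans-freeshaperecon}. Since $\vv$ spans $\ker(\MS_0^{\rorder \star})$, the eigenvector extracted in Proposition~\ref{prop:extractSolutions} coincides with $\vv$, whence $\hat{\vxx}^\rorder = \vxx^\star$ exactly (the maps $\proj_{\vg}, \proj_{\vh}$ act as the identity on an already-feasible point).

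It then remains to close the chain of inequalities. Evaluating the dual objective at $\delta_{\vxx^\star}$ gives $L_{\vy^\star}(f) = f(\vxx^\star)$, so strong duality yields $f^\star_\rorder = f(\vxx^\star)$; combined with the weak-duality sandwich $f^\star_\rorder \le f^\star \le \hat f_\rorder = f(\vxx^\star)$ from Proposition~\ref{prop:extractSolutions}, every inequality becomes an equality, proving $f^\star_\rorder = f^\star$ and $\eta_\rorder = 0$, with $\vxx^\star$ a global minimizer. For uniqueness I would argue by contradiction: two distinct minimizers $\vxx_1 \ne \vxx_2$ would render the averaged measure $\tfrac12(\delta_{\vxx_1}+\delta_{\vxx_2})$ dual-feasible with objective $f^\star_\rorder$, hence dual-optimal, yet its moment matrix $\MM'$ has rank two; complementary slackness against the same $\MS_0^{\rorder \star}$ would again force $\operatorname{rank}(\MM') \le 1$, a contradiction.

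I expect the main obstacle to be the rigorous justification of strong duality \emph{with attainment} on the moment side (guaranteeing an optimal $\MM^\star$ rather than merely an infimizing sequence), together with the clean identification of the rank-one moment matrix with an honest feasible point---that is, verifying that the null-space vector genuinely carries the monomial structure $\vv_\alpha = (\vxx^\star)^\alpha$ so that reading off its linear entries is consistent. Once these are in place, the rank inequality and the sandwiching of objective values are short.
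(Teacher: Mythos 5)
Your proposal follows essentially the same route as the paper's proof: pass to the dual moment SDP, invoke strong duality and complementary slackness to get $\MS_0^{\rorder\star}\MM_{\rorder}^{\star} = \zero$, deduce $\rank{\MM_{\rorder}^{\star}}=1$ from $\corank(\MS_0^{\rorder\star})=1$, identify the rank-one moment matrix with the Dirac measure at a feasible point $\vxx^\star$, and conclude tightness, correctness of the extraction in Proposition~\ref{prop:extractSolutions}, and uniqueness. The only structural difference is granularity: the paper delegates the step ``corank one $\Rightarrow$ rank-one moment matrix $\Rightarrow$ global optimality'' to Theorem 5.7 of Lasserre's book, whereas you re-derive it inline (the range/kernel inclusion, the Hankel structure forcing $v_\alpha = (\vxx^\star)^\alpha$, the weak-duality sandwich, and the two-Dirac averaging argument for uniqueness); those elaborations are correct and make the argument more self-contained.

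There is, however, one concretely false step: your justification of strong duality via ``Slater's condition holds on the moment side.'' It does not. Because the equality constraints $h_i$ in~\eqref{eq:polyConstraintsSOthree} have degree $2 \le \rorder$, every dual-feasible pseudo-moment vector $\vy$ satisfies $L_\vy(h_i \vxx^{\gamma}) = 0$ for all monomials $\vxx^\gamma$ of degree at most $2\rorder-2$ (these are exactly the entries of the localizing matrix that is constrained to vanish); writing $h_i^2$ as $h_i$ times its own monomial expansion then gives $L_\vy(h_i^2)=0$, which places the coefficient vector of $h_i$ in the kernel of $\MM_\rorder(\vy)$. Hence \emph{every} feasible moment matrix is singular and Slater fails identically --- this is intrinsic to equality-constrained polynomial problems and is not repaired by Archimedeanness of~\eqref{eq:certifyArchimedean}. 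To be fair, the paper is no more rigorous on this point (it simply asserts strong duality and complementary slackness for the primal-dual pair), and your closing paragraph correctly identifies strong duality with attainment as the real obstacle; but the fix is to cite a strong-duality result tailored to Lasserre hierarchies (e.g., results guaranteeing zero gap and attainment when the constraint description contains or implies a ball constraint), not Slater, whose hypothesis is provably violated here.
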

}


The proof of Theorem~\ref{thm:certificateGlobalOptimality} is given in the \supp. Empirically (Section~\ref{sec:experiments}), we observed that the relaxation is always tight at the minimum relaxation order $\rorder=2$.
Note that even when the relaxation is not tight, one can still obtain an approximate solution using Proposition~\ref{prop:extractSolutions} and quantify how suboptimal the approximate solution is using the relative duality gap $\eta_\rorder$.



\subsection{Basis Reduction}
\label{sec:basisReduction}
Despite the theoretical soundness and finite convergence at order $\rorder=2$, the size of the SDP~\eqref{eq:naiveSOS} is $\mySize_0=\nchoosek{K+9+\rorder}{\rorder}$,
 which for $\rorder=2$ becomes $\nchoosek{K+11}{2}$, implying that the size of the SDP grows \emph{quadratically} in the number of bases $K$. Although there have been promising advances in improving the scalability of SDP solvers (see~\cite{majumdar19arXiv-surveySDPScalability} for a thorough review), such as exploiting sparsity~\cite{Weisser18mpc-SBSOS,Waki06jopt-SOSSparsity,Mangelson19icra-PGOwithSBSOS} and low-rankness~\cite{Burer03mp,Rosen19ijrr-sesync-duplicate}, in this section we demonstrate a simple yet effective approach, called \emph{basis reduction}, that exploits the structure of the objective function to significantly reduce the size of the SDP in~\eqref{eq:naiveSOS}. 

In a nutshell, basis reduction methods seek to find a smaller, but still expressive enough,  subset of the full vector of monomials $[\vxx]_\rorder$ on the right-hand side (RHS) of eq.~\eqref{eq:naiveSOSLinearConstraints}, to explain the objective function $f(\vxx)$ on the left-hand side (LHS). There exist standard approximation algorithms for basis reduction, discussed in~\cite{Permenter18mp-partialFacialReduction,Permenter14cdc-basisSelectionSOS} and implemented in YALMIP~\cite{lofberg09tac-PrePostProcessSOS}. However, in practice we found the basis selection method in YALMIP failed to find any reduction for the SDP~\eqref{eq:naiveSOS}. Therefore, here we propose a problem-specific reduction, 
which follows from the examination of which monomials appear on the LHS of~\eqref{eq:naiveSOSLinearConstraints}.

\begin{proposition}[SOS Relaxation with Basis Reduction]\label{prop:sosBasisReduction}
The SOS relaxation of order $\rorder=2$ with basis reduction for the translation-free shape reconstruction problem~\eqref{eq:trans-freeshaperecon} is the following convex semidefinite program:
\bea \label{eq:SOSBasisReduction}
\max_{ 
\substack{\gamma \in \Real{}, \MS_0 \in \calS_+^{\mySize_0'}
\\ \MS_k \in \calS_+^{\mySize_s'}, k=1,\dots,2K, 
\\ \vlambda_i \in \Real{\mySize_\lambda'}, i=1,\dots,15} } 
& \gamma \\
\subject & f(\vxx) - \gamma = m_2(\vxx)\tran \MS_0 m_2(\vxx) +  \nonumber \\
& \sum_{k=1}^{2K} ( [\vr]_1\tran \MS_k [\vr] ) g_k(\vxx) + \nonumber \\
& \sum_{i=1}^{15} (\vlambda_i\tran [\vc]_2) h_i(\vxx),
\eea
where 
$\mySize_0' = 10K+10$, $\mySize_s'=10$, 
$\mySize_\lambda' = \nchoosek{K+2}{2}$, and
 $m_2(\vxx) = [1,\vc\tran,\vr\tran,\vc\tran \kron \vr\tran]\tran \in \Real{\mySize_0}$, 
 and where $\kron$ is the Kronecker product.
\end{proposition}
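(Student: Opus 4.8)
The plan is to show that the reduced program~\eqref{eq:SOSBasisReduction} attains the same optimum as the full order-$2$ relaxation~\eqref{eq:naiveSOS} while using the three smaller bases. The starting point is the monomial support of $f$. Expanding $q_i(\vxx)$ in~\eqref{eq:costqi} and using that $\Pi\MR\vBtilde_{ki}$ is linear in $\vr$, each term $c_k\Pi\MR\vBtilde_{ki}$ is \emph{bilinear} in $(\vc,\vr)$, so $\Pi\sum_k c_k\MR\vBtilde_{ki}$ is a sum of bilinear forms $\sum_{k,j}a_{kj}c_k r_j$. Squaring and collecting terms then shows that $f(\vxx)$ contains only four types of monomials: the constant $1$, the linear $c_k$ (from the Lasso term), the bilinear $c_k r_j$ (from the cross term $-2\vztilde_i\tran\Pi\sum_k c_k\MR\vBtilde_{ki}$), and the quartic $c_k c_{k'} r_j r_{j'}$ (from the squared bilinear form). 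Crucially, every degree-$4$ monomial has $\vc$-degree \emph{exactly} $2$ and $\vr$-degree \emph{exactly} $2$, and the degree-$4$ part of $f$ is itself a sum of squares of bilinear forms in $\{c_k r_j\}$.

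From this support I would read off the bases. The pairwise products of the entries of $m_2(\vxx)=[1,\vc\tran,\vr\tran,\vc\tran\kron\vr\tran]\tran$ already reproduce every monomial of $f$ (in particular $c_k c_{k'} r_j r_{j'}=(c_k r_j)(c_{k'} r_{j'})$), giving $\mySize_0'=1+K+9+9K=10K+10$. The monomials that $m_2 m_2\tran$ generates but that do \emph{not} occur in $f$ (the quadratics $c_k c_{k'}$, $r_j r_{j'}$ and the cubics/quartics $c_k c_{k'} r_j$, $c_k r_j r_{j'}$, $c_k^2 r_j r_{j'}$, \dots) must be cancelled by the constraint terms, and this dictates the multiplier bases: since only products $c^\beta r^\delta$ with $|\beta|\le 2$, $|\delta|\le 2$ ever appear, the equality multipliers need only $[\vc]_2$ (size $\nchoosek{K+2}{2}=\mySize_\lambda'$) against each quadratic $h_i$, and the inequality multipliers need only $[\vr]_1$ (size $10=\mySize_s'$) against each $g_k$; I would confirm by direct expansion that $[\vr]_1\tran\MS_k[\vr]_1\,g_k$ and $\vlambda_i\tran[\vc]_2\,h_i$ span exactly the leftover monomials. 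The \emph{soundness} direction is then immediate: $m_2\subset[\vxx]_2$, $[\vr]_1\subset[\vxx]_1$, $[\vc]_2\subset[\vxx]_2$, so any feasible point of~\eqref{eq:SOSBasisReduction} embeds by zero-padding into a feasible point of~\eqref{eq:naiveSOS} with the same $\gamma$; hence the reduced optimum is at most $f^\star_2$, and every reduced certificate is a valid lower bound inheriting the guarantees of Theorem~\ref{thm:certificateGlobalOptimality}.

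The substance is the reverse inequality, i.e.\ that deleting the $c_k c_{k'}$ and $r_j r_{j'}$ rows/columns of $\MS_0$ (and shrinking $\MS_k,\vlambda_i$ accordingly) does not lower the optimum. The clean piece is the top degree: by the Newton-polytope restriction for SOS multipliers, any square contributing to the degree-$4$ part of $f$ may use only monomials $\vxx^\alpha$ with $2\alpha$ in the Newton polytope of that part; since every such monomial has $\vc$-degree $2$ and $\vr$-degree $2$, this forces $\alpha\in\{c_k r_j\}$ and excludes $c_k c_{k'}$ (whose square has $\vc$-degree $4$) and $r_j r_{j'}$ (whose square has $\vr$-degree $4$). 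The obstacle is that the constraint terms also reach degree $4$: $-c_k^2\cdot(\text{quadratic in }\vr)$ from $g_{K+k}=1-c_k^2$ in~\eqref{eq:boundlinearcoeff} and $[\vc]_2\,h_i$ both produce $cc\,rr$-type monomials, so one cannot merely equate the degree-$4$ parts of $\sigma_0$ and $f$. To close the gap I would take an optimal certificate of~\eqref{eq:naiveSOS}, restrict $\MS_0$ to its principal submatrix indexed by $m_2$, and show that the discarded blocks can be reabsorbed into the constraint multipliers without altering the represented polynomial while preserving positive semidefiniteness---a Schur-complement/support-cancellation argument that relies precisely on the degree-$4$ part being a bilinear SOS. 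I expect this PSD-preservation step to be the main obstacle, and it is also where the empirical tightness observed at order $\rorder=2$ (Theorem~\ref{thm:certificateGlobalOptimality}) certifies that no accuracy is lost. I would finish by collecting the size counts $\mySize_0'=10K+10$, $\mySize_s'=10$, $\mySize_\lambda'=\nchoosek{K+2}{2}$ to match the statement.
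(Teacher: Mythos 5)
Your constructive analysis --- the observation that the support of $f(\vxx)$ consists only of $1$, $c_k$, $c_k r_j$, and $c_{k_1}c_{k_2}r_{j_1}r_{j_2}$ (no monomials purely in $\vc$ or purely in $\vr$ beyond degree 1), the resulting choice of $m_2(\vxx)$, $[\vr]_1$ and $[\vc]_2$, the size counts $\mySize_0'=10K+10$, $\mySize_s'=10$, $\mySize_\lambda'=\nchoosek{K+2}{2}$, and the zero-padding argument showing that any feasible point of~\eqref{eq:SOSBasisReduction} embeds into~\eqref{eq:naiveSOS} --- is exactly the content of the paper's derivation, which justifies the basis choice through an ``expressive'' condition (the Gram term $m_2(\vxx)\tran\MS_0 m_2(\vxx)$ must cover all monomials of $f-\gamma$) and a ``balanced'' condition (the multiplier terms may only introduce monomials that can cancel against one another in the sum). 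Up to that point you and the paper agree.

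The gap is in the second half. You set out to prove that~\eqref{eq:SOSBasisReduction} attains the \emph{same} optimum as~\eqref{eq:naiveSOS}, via restricting an optimal Gram matrix of~\eqref{eq:naiveSOS} to the principal submatrix indexed by $m_2(\vxx)$ and reabsorbing the discarded blocks into the constraint multipliers. You correctly identify the obstruction --- the multipliers themselves contribute degree-4 monomials of type $c_{k_1}c_{k_2}r_{j_1}r_{j_2}$ (e.g., $c_k^2 r_{j_1}r_{j_2}$ from $g_{K+k}=1-c_k^2$, and $[\vc]_2\, h_i$ terms), so the top-degree Newton-polytope argument does not decouple the Gram block from the multipliers --- but you never carry the reabsorption step out, and appealing to ``the empirical tightness observed at order $\rorder=2$'' to conclude that no accuracy is lost is circular: an experimental observation cannot substitute for the PSD-preservation argument your plan requires. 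The paper does not close this gap either; it states explicitly that a formal proof of the equivalence between~\eqref{eq:naiveSOS} and~\eqref{eq:SOSBasisReduction} remains \emph{open}, treats Proposition~\ref{prop:sosBasisReduction} as the \emph{definition} of the reduced relaxation, and recovers optimality only a posteriori: after solving~\eqref{eq:SOSBasisReduction}, checking $\corank(\MS_0^{2\star})=1$ and the relative duality gap (Theorem~\ref{thm:certificateGlobalOptimality}) certifies global optimality for that instance regardless of whether the two SDPs are equivalent in general. So your proposal is sound as a derivation of the proposition as stated, but the reverse inequality you aim at is an open problem rather than a provable lemma; your Schur-complement sketch should be demoted to a conjecture, with the a posteriori corank certification carrying the optimality claim.
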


\revise{Comparing the SDP~\eqref{eq:SOSBasisReduction} and~\eqref{eq:naiveSOS}, the most significant change is replacing the full monomial basis $[\vxx]_\rorder$ in~\eqref{eq:naiveSOS} with a much smaller monomial basis $m_2(\vxx)$ that excludes degree-2 monomials purely supported in $\vc$ and $\vr$. This reduction is motivated by analyzing the monomial terms in $f(\vxx)$. Although a formal proof of the equivalence between~\eqref{eq:naiveSOS} and~\eqref{eq:SOSBasisReduction} remains open, we provide an intuitive explanation in the \supp.  After basis reduction, the size of the SDP~\eqref{eq:SOSBasisReduction} is $\mySize_0'=10K+10$, which is \emph{linear} in $K$ and much smaller than the size of the original SDP~\eqref{eq:naiveSOS} $\mySize_k=\nchoosek{K+11}{2}$\footnote{For $K=5,10,20$, $\mySize_0=120,210,465$, while $\mySize_0'=60,110,210$.}. Section~\ref{sec:experiments} numerically shows that the SDP after basis reduction gives the same (tight) solution as the original SDP.} 


\subsection{\name: Algorithm Summary}
To summarize the derivation in this section, our solver for the shape reconstruction problem~\eqref{eq:weightedleastsquares}, named 
\name, works as follows. It first solves the SDP~\eqref{eq:SOSBasisReduction} and applies the rounding described 
in Proposition~\ref{prop:extractSolutions} to compute an estimate of the shape parameters $c_k$ and rotation $\MR$ and possibly certify its optimality. Then, \name uses the closed-form expression~\eqref{eq:recovertfromRc}  
to retrieve the translation estimate $\vt$.
\section{Robust Outlier Rejection}
\label{sec:robustOutlierRejection}

Section~\ref{sec:certifiableNonminimalSolver} proposed a certifiably optimal solver for problem~\eqref{eq:weightedleastsquares}. However, the least squares formulation~\eqref{eq:weightedleastsquares} tends to be sensitive to outliers: the pixel measurements $\MZ$ in eq.~\eqref{eq:basisgenmodel} are typically produced by learning-based or handcrafted detectors~\cite{Zhou17pami-shapeEstimationConvex}, which might produce largely incorrect measurements (\eg due to wrong data association $\vz_i \leftrightarrow \MB_{ki}$), which in turn leads to poor shape reconstruction results. 
This section shows how to regain robustness by iteratively solving the weighted least squares 
problem~\eqref{eq:weightedleastsquares} and adjusting the weights $w_i$ to reject outliers. 

The key insight is to substitute the least square penalty in~\eqref{eq:weightedleastsquares} 
with a robust cost function, namely the \emph{truncated least squares (TLS)} cost~\cite{Yang19iccv-QUASAR-duplicate,Yang19rss-teaser-duplicate,Lajoie19ral-DCGM-duplicate,Yang20arXiv-teaser}. 
Hence, we propose the following \emph{TLS shape reconstruction} formulation:
\bea \label{eq:TLSShapeRecon}
\hspace{-5mm} 
\!\!\!\min_{\substack{c_k\geq 0, \\ k=1,\dots,K \\ \vt \in \Real{2}, \MR\in \SOthree} }
 &
\hspace{-3mm} 
\displaystyle \sumfeatures 
\rho_{\barc}\left( r_i(c_k,\MR,\vt) \right) + \alpha \!\sumbasis \! c_k 
\eea
where 
$r_i(c_k,\MR,\vt) := \left\| \vz_i \!-\! \Pi \MR \left( \sumbasis c_k \MB_{ki} \right) \!-\! \vt \right\|$ 
(introduced for notational convenience), 
and
$\rho_{\barc}(r) = \min(r^2, \barcsq)$ implements a truncated least squares cost, which 
is quadratic for small residuals and saturates to a constant value for residuals larger than a
maximum error $\barc$.

Our second insight is that $\rho_{\barc}(r)$ can be written as 
$\rho_{\barc}(r)= \min_{w\in\{0,1\}} w r^2 + (1-w)\barcsq$, 
by introducing extra slack binary variables $w\in\{0,1\}$.
Therefore, we can write problem~\eqref{eq:TLSShapeRecon} equivalently as:
\bea \label{eq:TLSShapeRecon2}
\hspace{-7mm} 
\!\!\!\min_{\substack{
c_k \geq 0,  k=1,\dots,K,\\
w_i \in \{0,1\},  i=1,\dots,N  
\\ \vt \in \Real{2}, \MR\in \SOthree} 
}
 &
\hspace{-3mm} 
\displaystyle 
\sumfeatures 
w_i \left( r_i(c_k,\MR,\vt) \right) 
\!+\!(1\!-\!w_i)\barcsq  \!\!+\! \alpha \!\sumbasis \!\! c_k 
\eea

The final insight is that now we can minimize~\eqref{eq:TLSShapeRecon2} by iteratively
minimizing (i) over $c_k,\MR,\vt$ (with fixed weights $w_i$), and (ii) over the weights $w_i$ 
(with fixed $c_k,\MR,\vt$). The rationale for this approach is that step (i) can be implemented using 
\name (since in this case the weights are fixed), and step (ii) can be implemented in closed-form.
To improve convergence of this iterative algorithm, we adopt graduated non-convexity~\cite{Blake1987book-visualReconstruction,Yang20ral-GNC}, which starts with a convex approximation of problem~\eqref{eq:TLSShapeRecon2} 
and uses a control parameter $\mu$ to gradually increase the amount of non-convexity, till (for large $\mu$) 
one solves~\eqref{eq:TLSShapeRecon2}. 
The resulting algorithm named \namerobust is given in Algorithm~\ref{alg:shaperobust}.
We refer the reader to the \supp and~\cite{Yang20ral-GNC} for a complete derivation of Algorithm~\ref{alg:shaperobust} and 
for the closed-form expression of the weight update in line~\ref{line:weightUpdate} of the algorithm.

\setlength{\textfloatsep}{2pt}
\begin{algorithm}[h]
\label{alg:two-stage}
\footnotesize
\SetKwInOut{Input}{input}
\SetKwInOut{Output}{output}
 \Input{
 measurements $\vz_i, \iOneToN$, \\
 basis shapes $\MB_{k}, k=1,\ldots,M$ \\
 maximum error $\barc$, regularization constant $\alpha$}
 \Output{ shape reconstruction: $c_k^\star,\MR^\star,\vt^\star$ }
 \BlankLine
 \tcc{Initialization}
$w_i\at{0}=1, \quad i=1, \dots,N$ \label{line:init1} \\
$\mu\at{0} = 10^{-4}$ \label{line:init2}  \\
 \tcc{Iterations}
 \For{ $\tau = 1:\text{maxIter}$ \label{line:maxIters}}{ 
 		\tcc{Variable update}
        $c_k\at{\tau},\MR\at{\tau},\vt\at{\tau} = \text{\name}(\vz_i,\MB_{k},\alpha,w_i\at{\tau-1}, \mu\at{\tau-1})$ \label{line:variableUpdate}\\
		\tcc{Compute residual errors}
		$r_i\at{\tau} = \| \vz_i - \Pi \MR^{(\tau)} \left( \sumbasis c_k^{(\tau)} \MB_{ki}\right) - \vt^{(\tau)} \|$ \label{line:computeResidual}\\
		\tcc{Weight update}
		$w_i^{(\tau)} = \text{weightUpdate}(r_i\at{\tau},\barc,\mu\at{\tau-1})$ \label{line:weightUpdate} \\
		\tcc{Compute objective function}
		$f\at{\tau} = \text{computeObjective}(r_i\at{\tau},w_i^{(\tau)},\mu\at{\tau-1},\alpha,\barc)$ \\
		\tcc{Check convergence ($\tau>1$)}
		\If{ $|f\at{\tau} - f\at{\tau-1}| < 10^{-10}$ \label{line:checkConvergence}}{
			break
		}
		\tcc{Update control parameter $\mu$}
		$\mu^{(\tau)} = 2 \cdot \mu^{(\tau-1)}$ \label{line:updateControlPara}
 } 
  return $c_k\at{\tau},\MR\at{\tau},\vt\at{\tau}$.
 \caption{Robust Shape Reconstruction. \label{alg:shaperobust}}
\end{algorithm}


\namerobust is deterministic and does not require an initial guess. 
We remark that the graduated non-convexity scheme in \namerobust (contrarily to \name) is not guaranteed to converge to an 
optimal solution of~\eqref{eq:TLSShapeRecon2}, but we show in the next section that it is empirically robust to $70\%$ outliers.

\section{Experiments}
\label{sec:experiments}
{\bf Implementation details.} Both \name and \namerobust are implemented in Matlab, with both SOS relaxations~\eqref{eq:naiveSOS} and~\eqref{eq:SOSBasisReduction} implemented using YALMIP~\cite{Lofberg04cacsd-yalmip} and the resulting SDPs solved using MOSEK~\cite{mosek}.

\subsection{Efficiency Improvement by Basis Reduction}
\label{sec:efficiencyImproveBasisReduction}
We first evaluate the efficiency improvement due to basis reduction in simulation. We fix the number of correspondences $N=100$, and increase the number of basis shapes $K=5,10,20$. At each $K$, we first randomly generate $K$ basis shapes $\MB_1,\dots,\MB_K \in \Real{3 \times N}$, with entries sampled independently from a Normal distribution $\calN(0,1)$. Then $K$ linear coefficients $\vc = [c_1,\dots,c_K]\tran$ are uniformly  sampled from the interval $[0,1]$, and a rotation matrix $\MR$ is randomly chosen. The 2D measurements $\MZ$ are computed from the generative model~\eqref{eq:basisgenmodel} with $\vt = \Zero$, $s_x = s_y =1$ for $\Pi$, and additive noise $\vepsilon_i \sim \calN(\Zero,0.01^2)$. For shape reconstruction, we feed the noisy $\MZ$ and bases $\MB_k$ to (i) the SOS relaxation~\eqref{eq:naiveSOS} without basis reduction, and (ii) the SOS relaxation~\eqref{eq:SOSBasisReduction} with basis reduction, both at relaxation order $\rorder=2$ and with no Lasso regularization ($\alpha=0$). 

To evaluate the effects of introducing basis reduction, we compute the following statistics for each choice of $K$: (i) solution time for the SDP; (ii) tightness of the SOS relaxation, including $\corank(\MS_0^{2\star})$ and relative duality gap $\eta_2$; (iii) accuracy of reconstruction, including the coefficients estimation error ($\ell_2$ norm of the difference between estimated and ground-truth coefficients) and the rotation estimation error (the geodesic distance between estimated and ground-truth rotation). Table~\ref{tab:improveBasisReduction} shows the resulting statistics. We see that the SOS relaxation without basis reduction quickly becomes intractable at $K=20$ (mean solution time is 2440 seconds), while the relaxation with basis reduction can still be solved in a reasonable amount of time (107 seconds)\footnote{Our basis reduction can potentially be \emph{combined} with other scalability improvement techniques reviewed in~\cite{majumdar19arXiv-surveySDPScalability}, such as low-rank SDP solvers.}. In addition, from the co-rank of $(\MS_0^{2\star})$ and the relative duality gap $\eta_2$, we see basis reduction has no negative impact on the quality of the relaxation, which remains tight at order $\rorder=2$. This observation is further validated by the identical accuracy of $\vc$ and $\MR$ estimation before and after basis reduction 
(last two rows of Table~\ref{tab:improveBasisReduction}).
\begin{table}
\smaller
\begin{center}
\begin{tabular}{cccc}
\hline
\# of Bases $K$ & $K=5$ & $K=10$ & $K=20$ \\
\hline
SDP Time [s] & $\bmatc{c} 3.52 \\ \bm{0.550} \ematc$ & $\bmatc{c} 47.0 \\ \bm{5.28} \ematc$ & $\bmatc{c} 2440 \\ \bm{107} \ematc$ \\
\hline
$\corank(\MS_0^{2\star})$ & $\bmatc{c} 1 \\ \bm{1} \ematc$ & $\bmatc{c} 1 \\ \bm{1} \ematc$ & $\bmatc{c} 1 \\ \bm{1} \ematc$\\
\hline
Duality Gap $\eta_2$ & $\bmatc{c} 5\mathrm{e}{-6} \\ \bm{1\mathrm{e}{-5}} \ematc$ & $\bmatc{c} 7\mathrm{e}{-6} \\ \bm{2\mathrm{e}{-5}} \ematc$ & $\bmatc{c} 4\mathrm{e}{-5} \\ \bm{1\mathrm{e}{-5}} \ematc$ \\
\hline
$\vc$ Error & $\bmatc{c} 1.3\mathrm{e}{-3} \\ \bm{1.3\mathrm{e}{-3}} \ematc$ & $\bmatc{c} 2.3\mathrm{e}{-3} \\ \bm{2.3\mathrm{e}{-3}} \ematc$ & $\bmatc{c} 3.2\mathrm{e}{-3} \\ \bm{3.2\mathrm{e}{-3}} \ematc$ \\
\hline
$\MR$ Error [deg] & $\bmatc{c} 0.0690 \\ \bm{0.0690} \ematc$ & $\bmatc{c} 0.0487 \\ \bm{0.0487} \ematc$ & $\bmatc{c} 0.0298 \\ \bm{0.0298} \ematc$ \\
\hline
\end{tabular}
\vspace{0.5mm}
\caption{Efficiency improvement by basis reduction. {\bf Bold} text represent mean values computed by solving the SOS relaxation {\bf \emph{with}} basis reduction~\eqref{eq:SOSBasisReduction}, while normal text represent mean values computed by solving the SOS relaxation \emph{without} basis reduction~\eqref{eq:naiveSOS}. Statistics are computed over 20 Monte Carlo runs.}
\label{tab:improveBasisReduction}
\vspace{-2mm}
\end{center}
\end{table}

\subsection{\name for Outlier-Free Reconstruction}
In this section, we compare the performance of \name against state-of-art optimization techniques for shape reconstruction. We follow the same protocol as in Section~\ref{sec:efficiencyImproveBasisReduction}, but only generate 2D measurements from a \emph{sparse} set of $p=2$ basis shapes. This is done by only sampling $p$ out of $K$ \emph{nonzero} shape coefficients, \ie, $c_{p+1},\dots,c_{K}=0$. 
We then compare the performance of \name, setting $\alpha=0.01$ to encourage sparseness, against three state-of-the-art optimization techniques: (i) the projective matching pursuit  method~\cite{Ramakrishna12eccv-humanPose} (label:~\PMP), which uses principal component analysis to first obtain a set of orthogonal bases from $\{\MB_k\}$ and then locally optimizes the shape parameters and camera pose using the mean shape as an initial guess; (ii) the alternative  optimization method~\cite{Zhou15cvpr} (label:~\altern), which locally optimizes problem~\eqref{eq:weightedleastsquares} by alternatively updating $\vc$ and $\MR$, initialized at the mean shape; and (iii) the convex relaxation with refinement proposed in~\cite{Zhou17pami-shapeEstimationConvex} (label:~\convexrefine), which uses a convex relaxation and then refines the solution to obtain $\vc$ and $\MR$. 
Fig.~\ref{fig:BM_SIM_OF_nrB} shows the boxplots of the 3D shape estimation error (mean $\ell_2$ distance between the reconstructed shape and the ground-truth shape) and the rotation estimation error for $K=5,10,20$ basis shapes and 20 Monte Carlo runs. We observe that \name has the highest accuracy in estimating the 3D shape and camera pose, though the other three methods also perform quite well. In all the Monte Carlo runs, \name achieves $\corank(\MS_0^{2\star}) = 1$ and mean relative duality gap $\eta_2 = 6.3\mathrm{e}{-5}$, indicating that \name was able to obtain an 
optimal solution.


\begin{figure}[h]
	\begin{center}
	\hspace{-4mm}
	\includegraphics[width=\columnwidth]{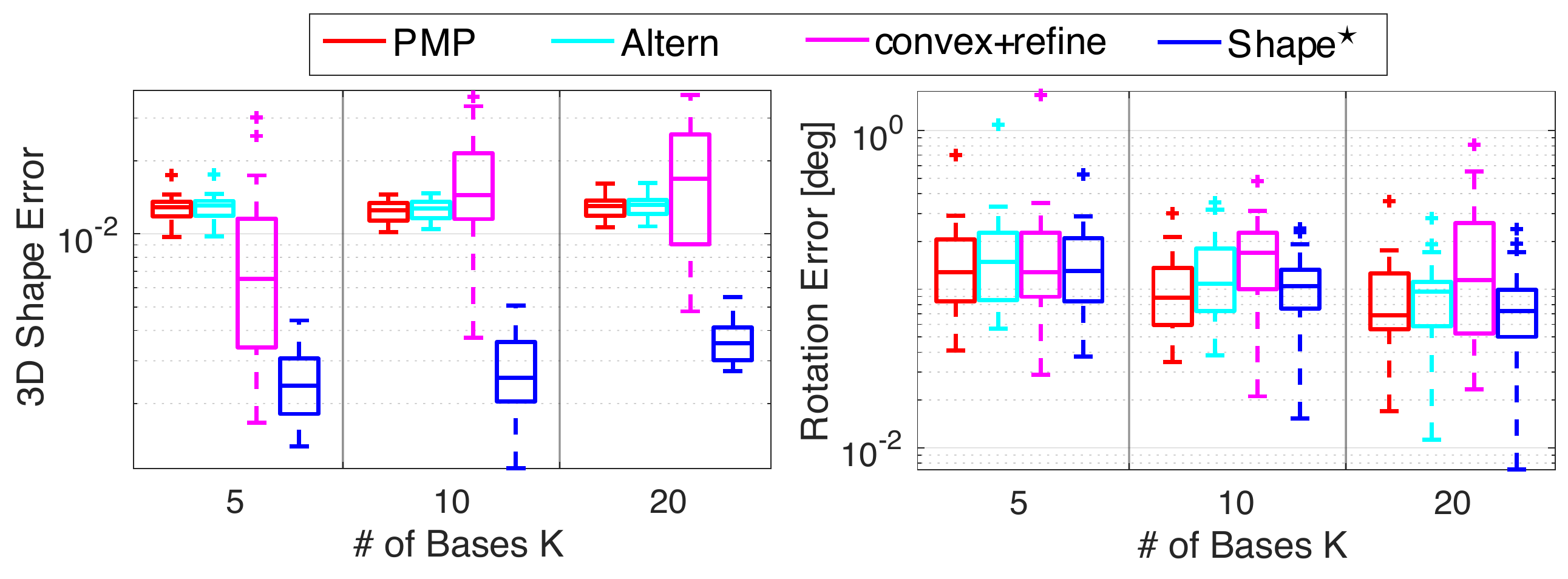} \\
	\caption{3D shape estimation error (left) and rotation estimation error (right) by \name compared with \PMP~\cite{Ramakrishna12eccv-humanPose}, \altern~\cite{Zhou15cvpr} and \convexrefine~\cite{Zhou17pami-shapeEstimationConvex}, for increasing basis shapes $K=5,10,20$.
	\label{fig:BM_SIM_OF_nrB}}
	\end{center}
	\vspace{-2mm}
\end{figure}

\subsection{\namerobust for Robust Reconstruction}
This section  shows that \namerobust achieves state-of-the-art performance  on the \FGCar~\cite{Lin14eccv-modelFitting} dataset. The \FGCar dataset contains 300 car images with ground-truth 2D landmarks $\MZ \in \Real{2\times N},N=256$. It also contains $K=15$ 3D mesh models of different cars $\{\MB_k\}_{k=1}^{K}$. To generate outliers, we randomly change $10\%-70\%$ of the $N$ ground-truth 2D landmarks $\MZ$ to be arbitrary positions inside the image. We then evaluate the robustness of \namerobust compared with two other robust methods based on the assumption of sparse outliers in~\cite{Zhou17pami-shapeEstimationConvex}: (i) robustified alternative optimization (label:~\alternrobust) and (ii) robustified convex optimization (label:~\convexrobust). 
Fig.~\ref{fig:BM_FG3DCar} boxplots the shape estimation and rotation estimation error\footnote{Although there is no ground-truth reconstruction for each image, the original paper~\cite{Lin14eccv-modelFitting} uses local optimization (with full perspective camera model) to reconstruct high-quality 3D shapes for all images, and we use their reconstructions as ground-truth.} under increasing outlier rates computed over 40 randomly chosen images in the~\FGCar dataset. We can see that \namerobust is insensitive to $70\%$ outliers, while the accuracy of both \alternrobust and \convexrobust decreases with respect to higher outlier rates and they fail at $60\%$ outliers. Fig.~\ref{fig:FG3DCar_qualitative} shows two examples of qualitative results, where we see \namerobust gives high-quality model fitting at $70\%$ outliers, while the quality of \alternrobust and \convexrobust starts decreasing at $40\%$ outliers. More qualitative results are given in the \supp.


\begin{figure}[h]
	\begin{center}
	\begin{minipage}{\columnwidth}
	\begin{tabular}{c}%
		\hspace{-5mm}	
		\begin{minipage}{\columnwidth}%
		\centering%
		\includegraphics[width=\columnwidth]{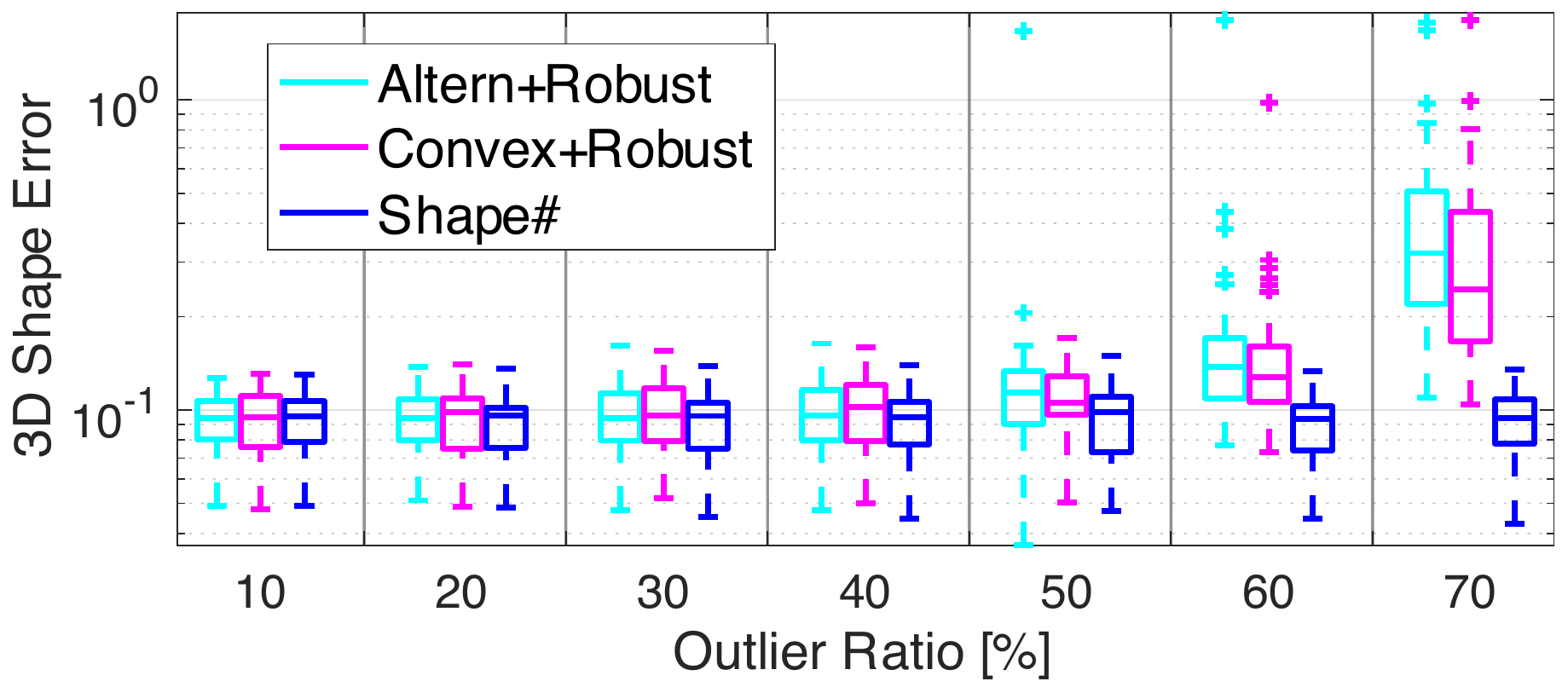} \\
		\end{minipage} 

		\\
		\hspace{-5mm}
		\begin{minipage}{\columnwidth}%
		\centering%
		\includegraphics[width=\columnwidth]{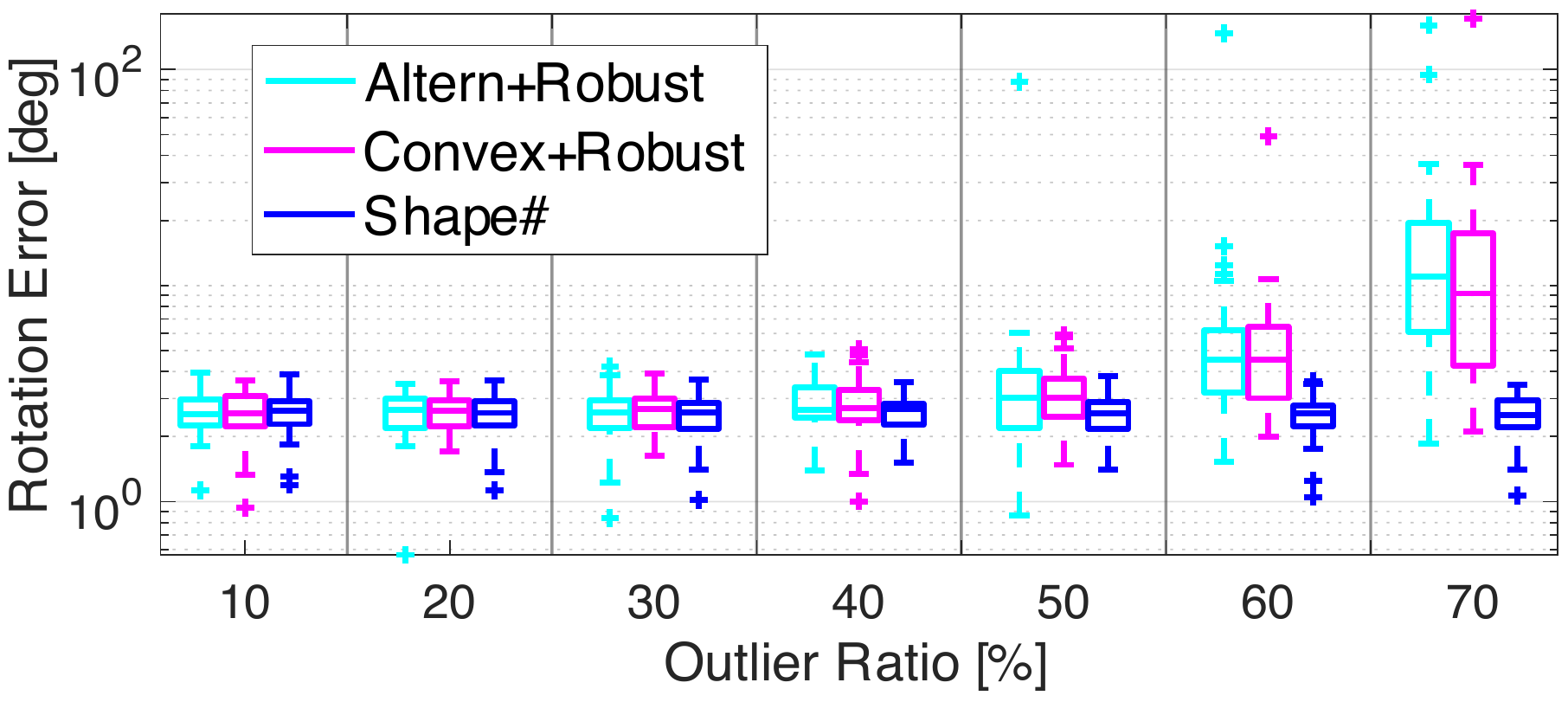} \\
		\end{minipage}
		\end{tabular}
	\end{minipage}
	\vspace{0.5mm}
	\caption{3D shape estimation error (top) and rotation estimation error (bottom) by \namerobust compared with \alternrobust~\cite{Zhou17pami-shapeEstimationConvex} and \convexrobust~\cite{Zhou17pami-shapeEstimationConvex} under increasing outlier rates.
	\label{fig:BM_FG3DCar}}
	\end{center}
\end{figure}

\newcommand{\mpwthree}{2.8cm}
\newcommand{\myhspace}{\hspace{-3.5mm}}
\newcommand{\myhspacehead}{\hspace{-3mm}}

\begin{figure}[h]
	\begin{center}
	\begin{minipage}{\textwidth}
	\begin{tabular}{ccc}%
		\myhspace \alternrobust & \myhspace \convexrobust & \myhspace \namerobust  \\
		\myhspacehead
			\begin{minipage}{\mpwthree}%
			\centering%
			\includegraphics[width=\columnwidth]{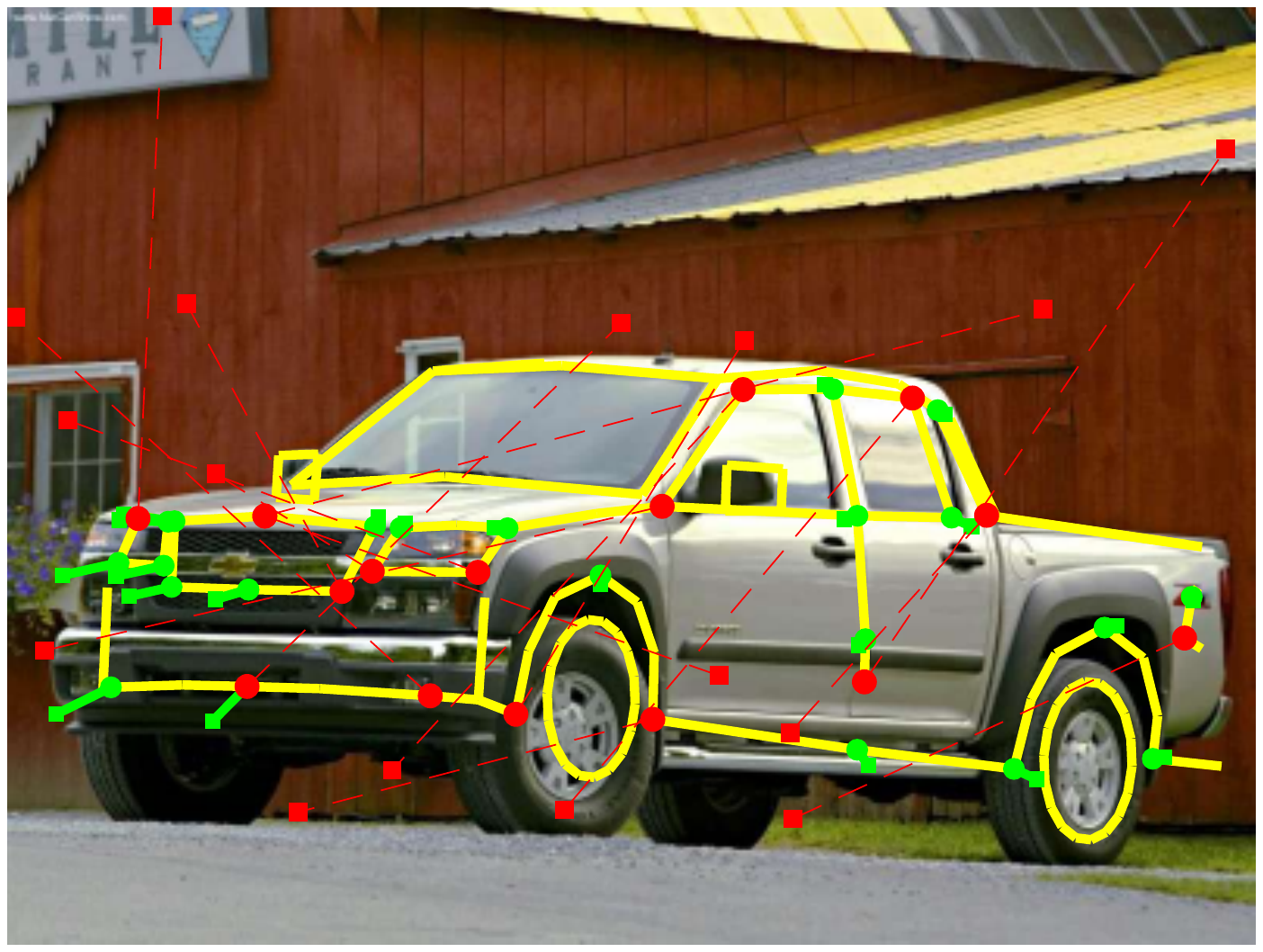} \\
			\vspace{1mm}
			\end{minipage}
		& \myhspace
			\begin{minipage}{\mpwthree}%
			\centering%
			\includegraphics[width=\columnwidth]{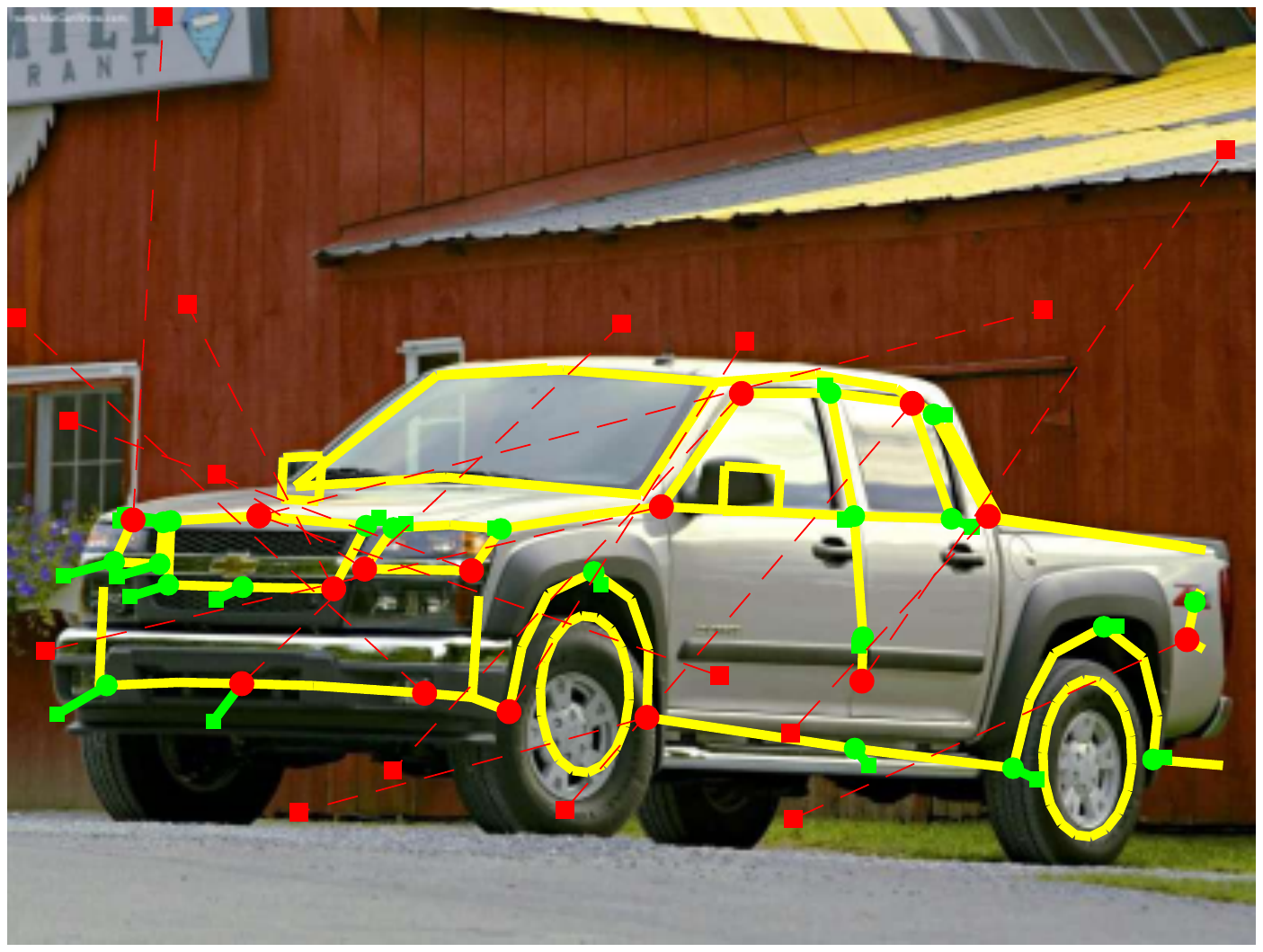} \\
			\vspace{1mm}
			\end{minipage}
		& \myhspace
			\begin{minipage}{\mpwthree}%
			\centering%
			\includegraphics[width=\columnwidth]{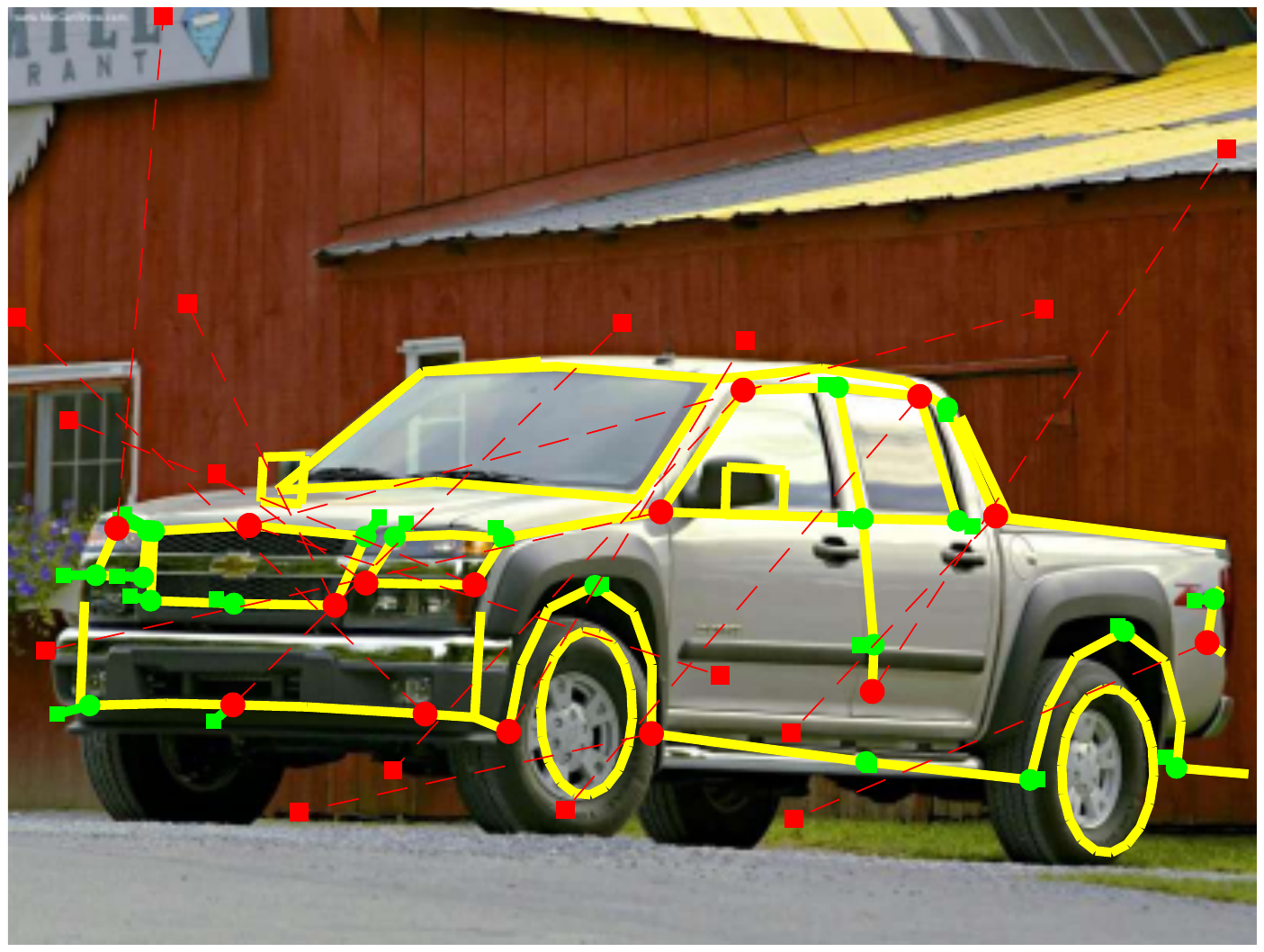} \\
			\vspace{1mm}
			\end{minipage} \\
		\multicolumn{3}{c}{(a) Chevrolet Colorado LS $40\%$ outliers. } \\
		\myhspacehead
			\begin{minipage}{\mpwthree}%
			\centering%
			\includegraphics[width=\columnwidth]{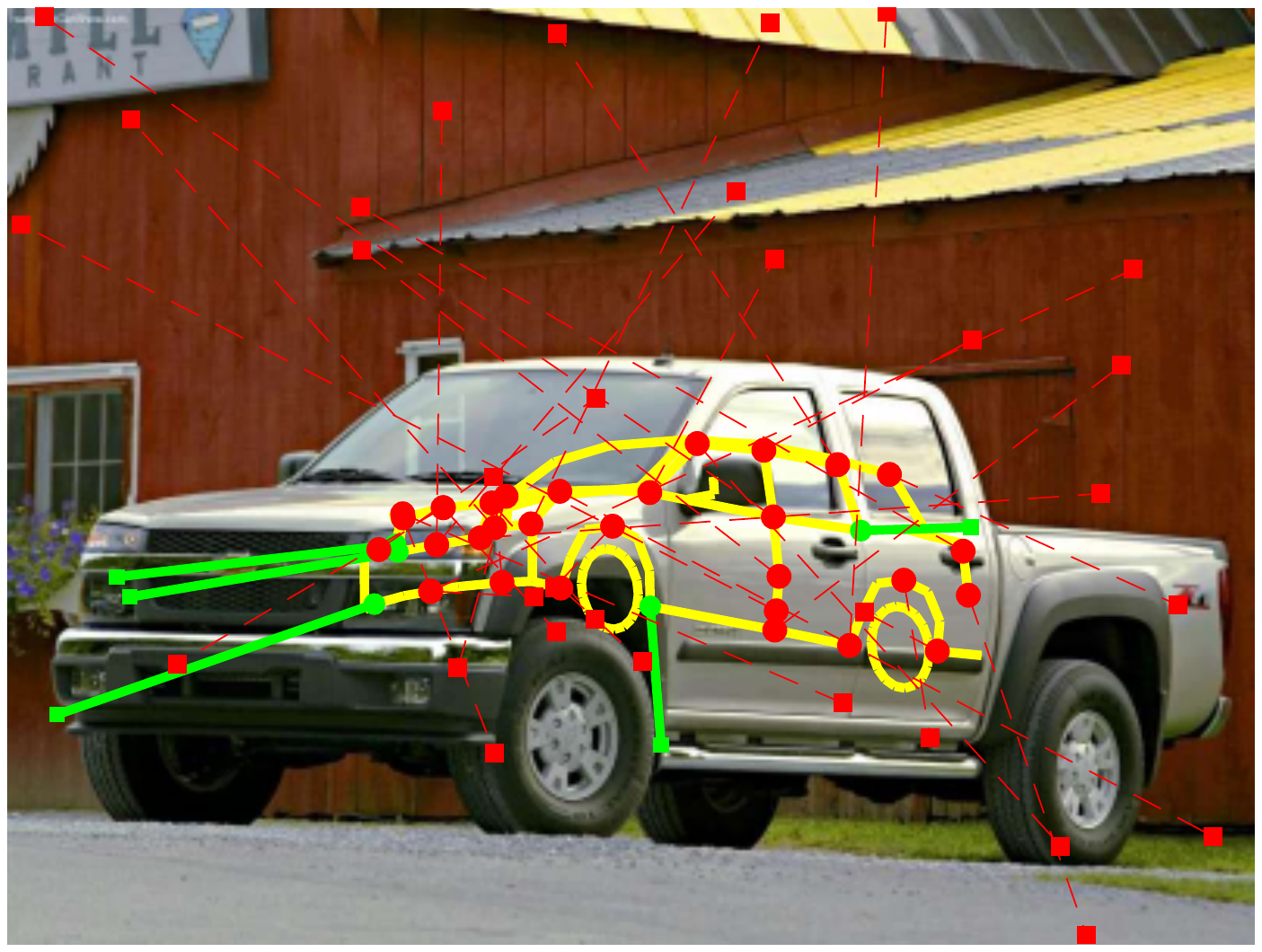} \\
			\vspace{1mm}
			\end{minipage}
		& \myhspace
			\begin{minipage}{\mpwthree}%
			\centering%
			\includegraphics[width=\columnwidth]{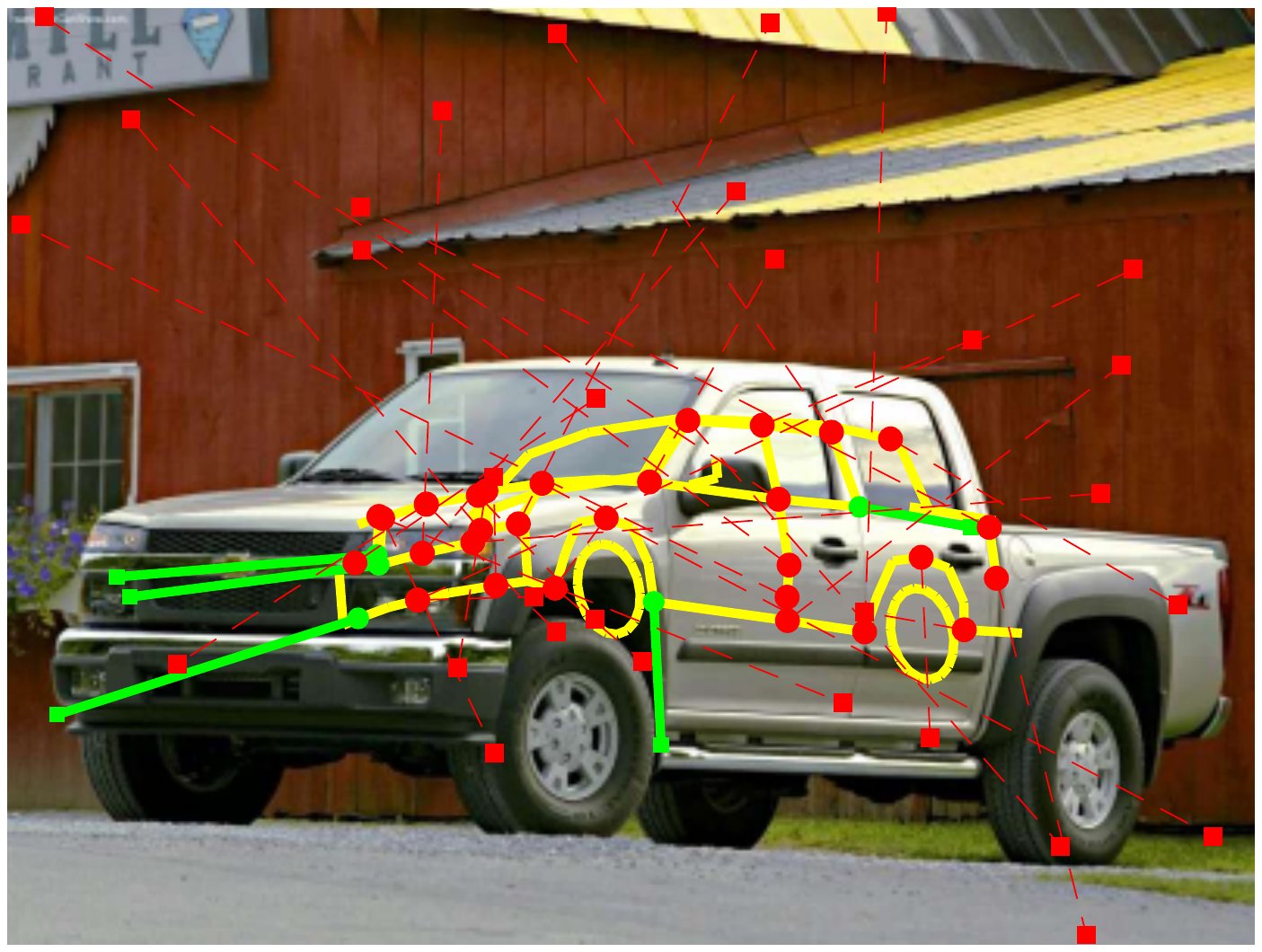} \\
			\vspace{1mm}
			\end{minipage}
		& \myhspace
			\begin{minipage}{\mpwthree}%
			\centering%
			\includegraphics[width=\columnwidth]{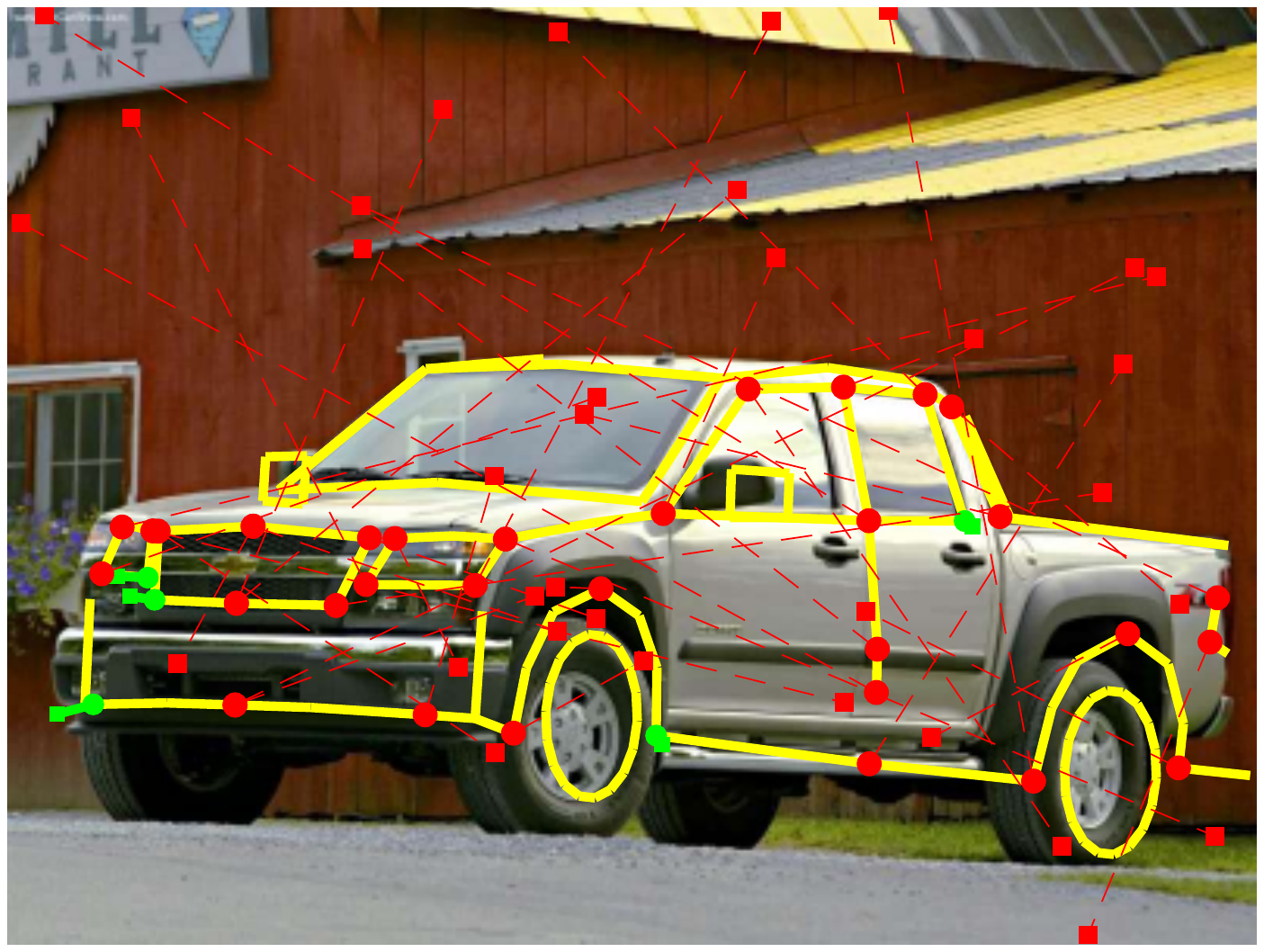} \\
			\vspace{1mm}
			\end{minipage} \\
		\multicolumn{3}{c}{(b) Chevrolet Colorado LS $70\%$ outliers. } \\

		\myhspacehead
			\begin{minipage}{\mpwthree}%
			\centering%
			\includegraphics[width=\columnwidth]{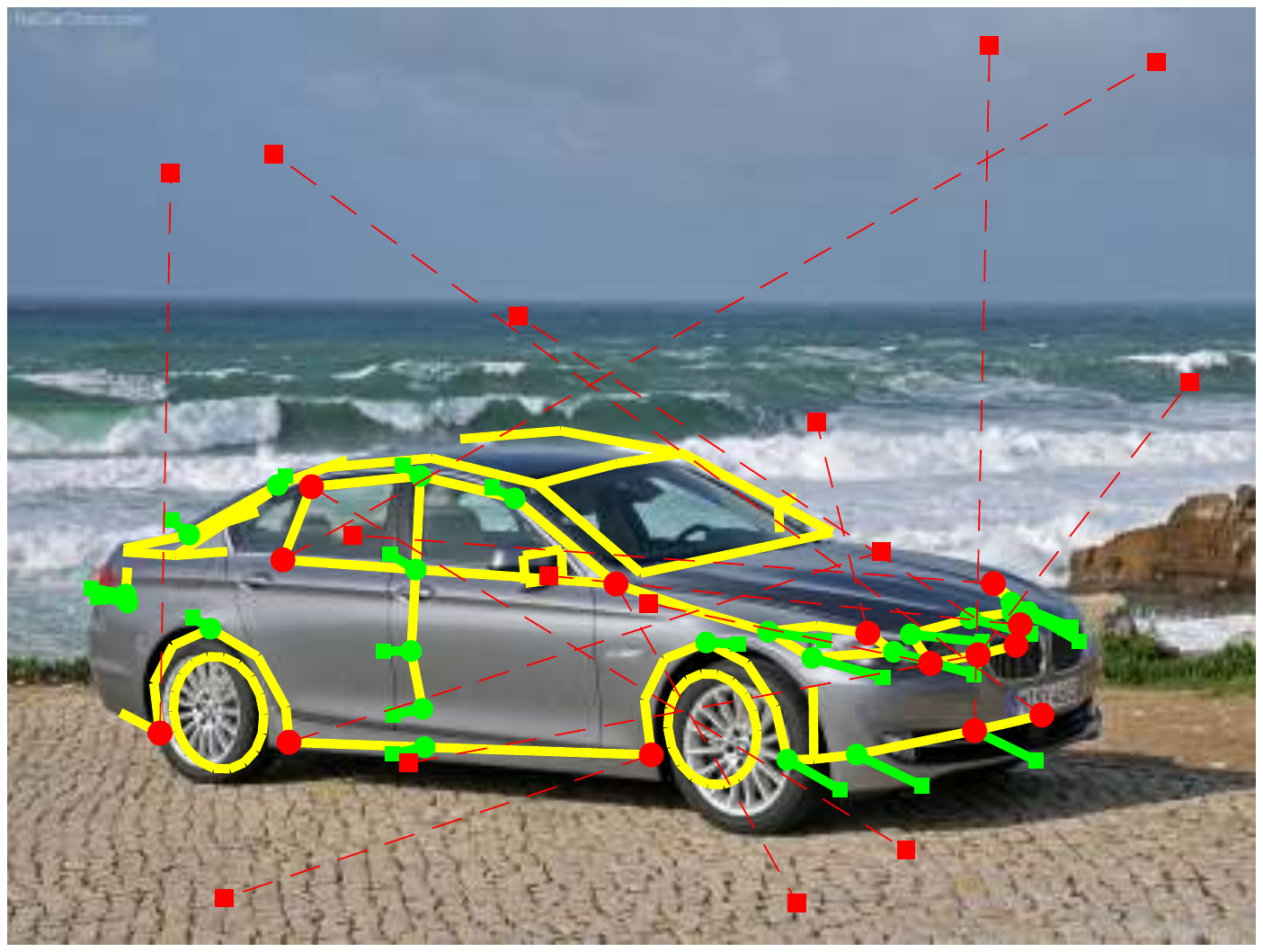} \\
			\vspace{1mm}
			\end{minipage}
		& \myhspace
			\begin{minipage}{\mpwthree}%
			\centering%
			\includegraphics[width=\columnwidth]{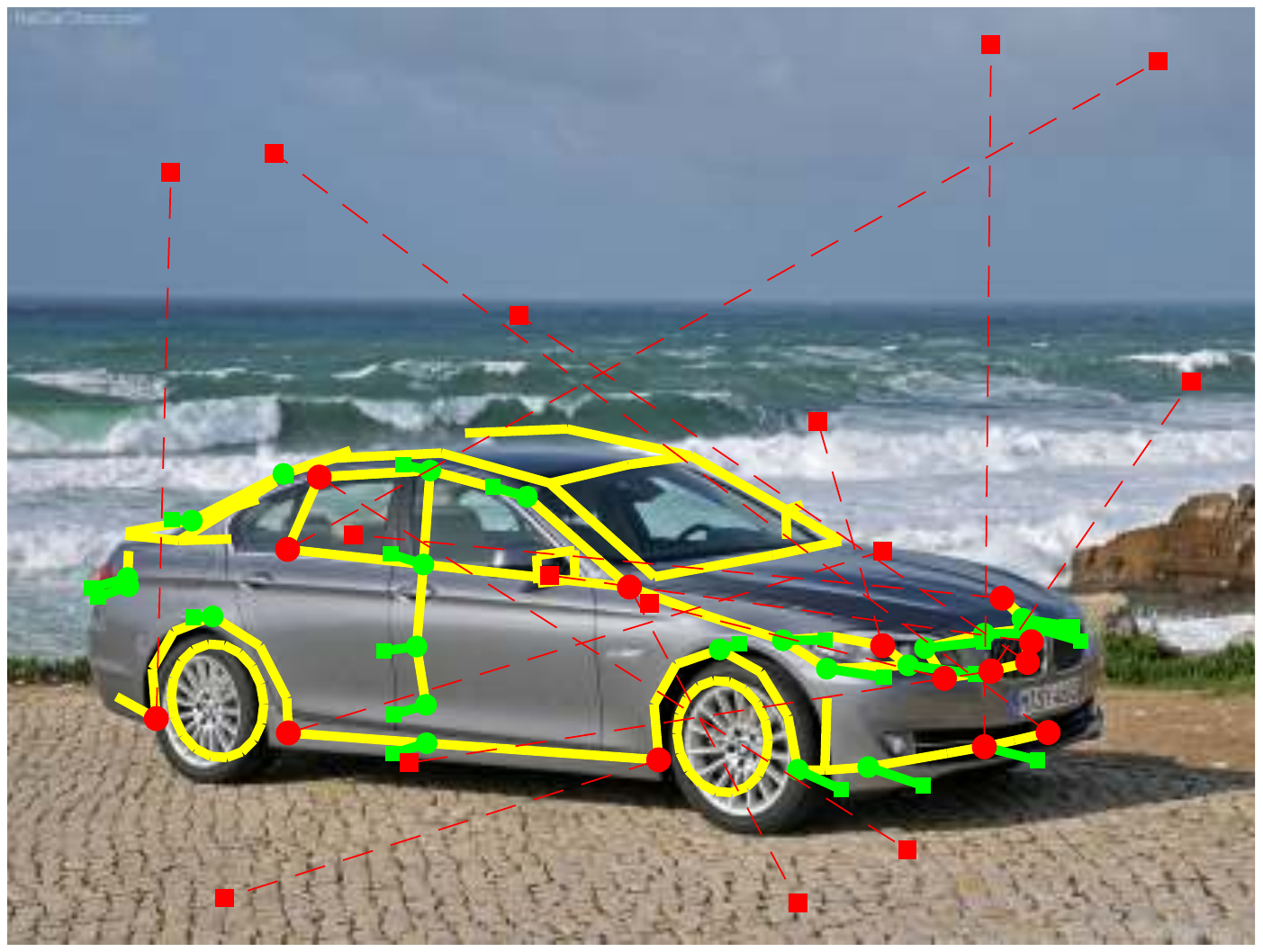} \\
			\vspace{1mm}
			\end{minipage}
		& \myhspace
			\begin{minipage}{\mpwthree}%
			\centering%
			\includegraphics[width=\columnwidth]{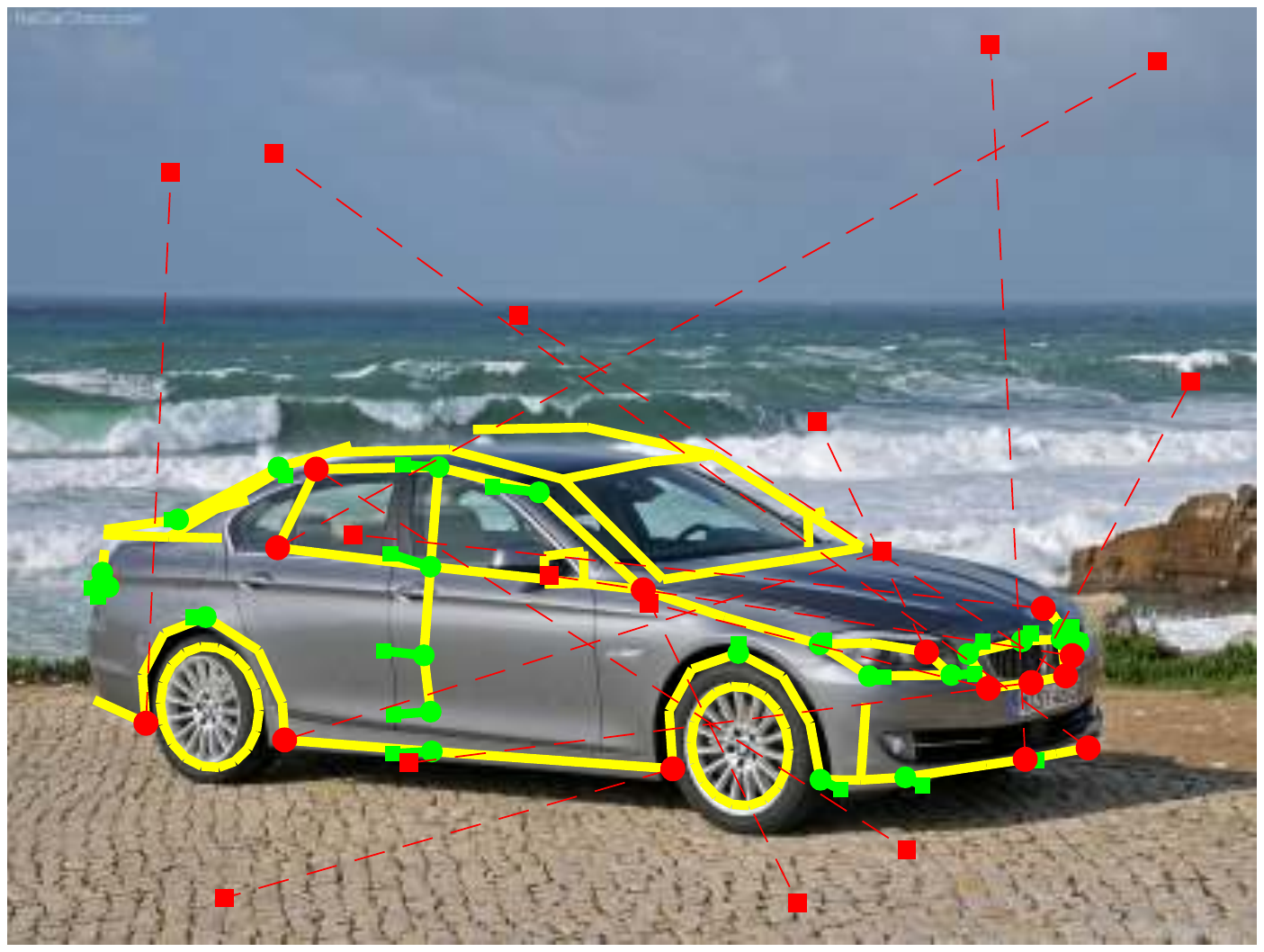} \\
			\vspace{1mm}
			\end{minipage} \\
		\multicolumn{3}{c}{(c) BMW 5-Series $40\%$ outliers.} \\

		\myhspacehead
			\begin{minipage}{\mpwthree}%
			\centering%
			\includegraphics[width=\columnwidth]{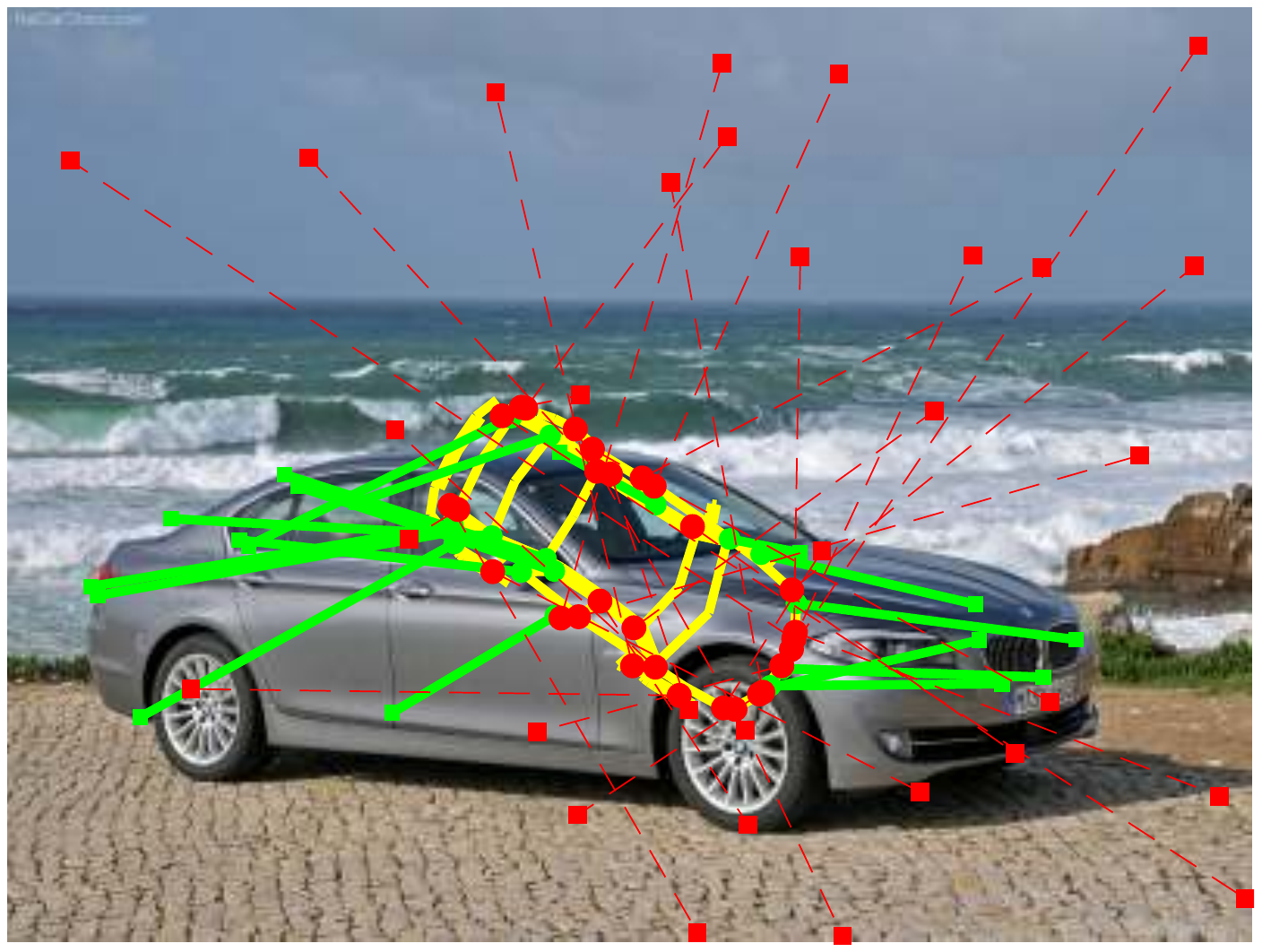} \\
			\vspace{1mm}
			\end{minipage}
		& \myhspace
			\begin{minipage}{\mpwthree}%
			\centering%
			\includegraphics[width=\columnwidth]{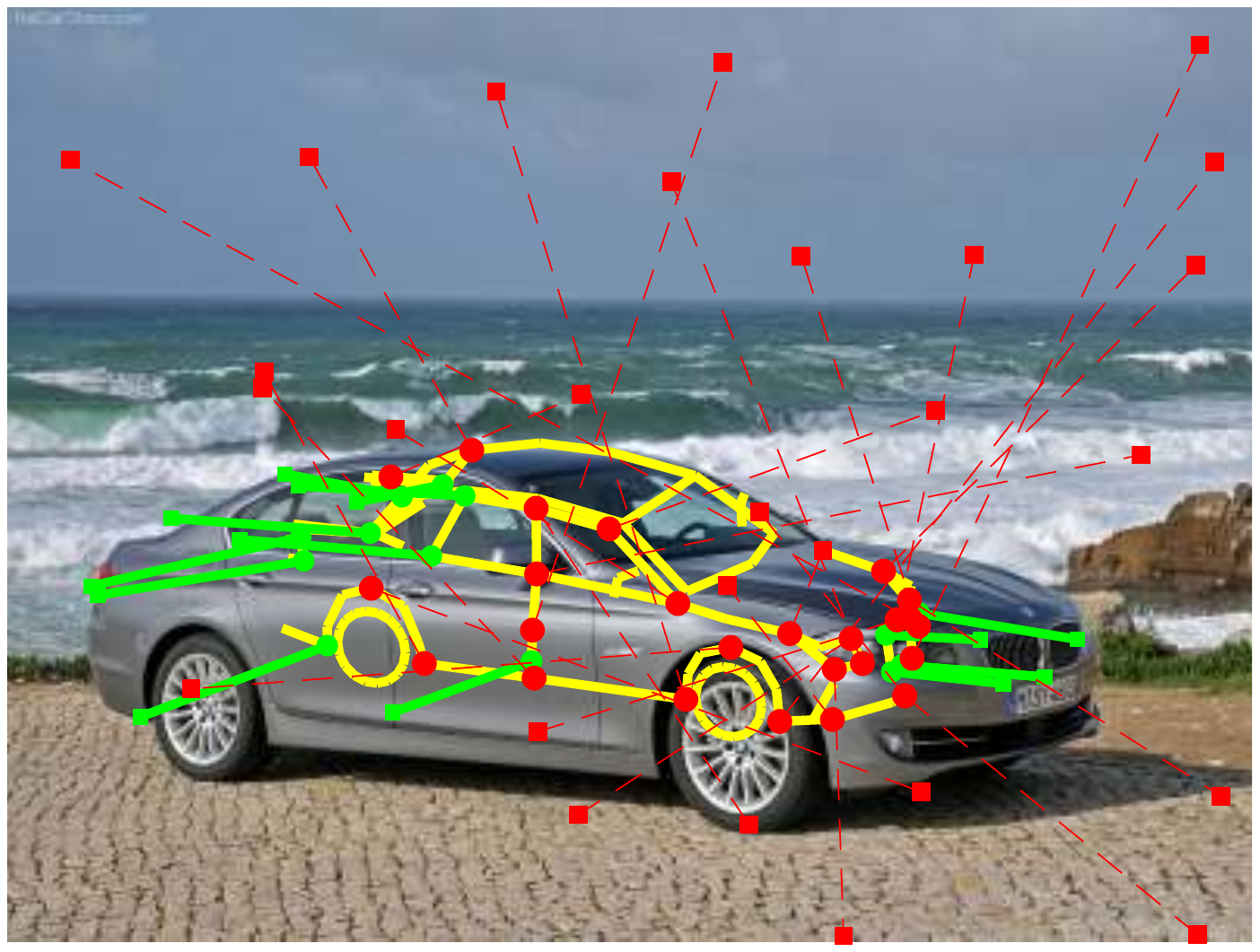} \\
			\vspace{1mm}
			\end{minipage}
		& \myhspace
			\begin{minipage}{\mpwthree}%
			\centering%
			\includegraphics[width=\columnwidth]{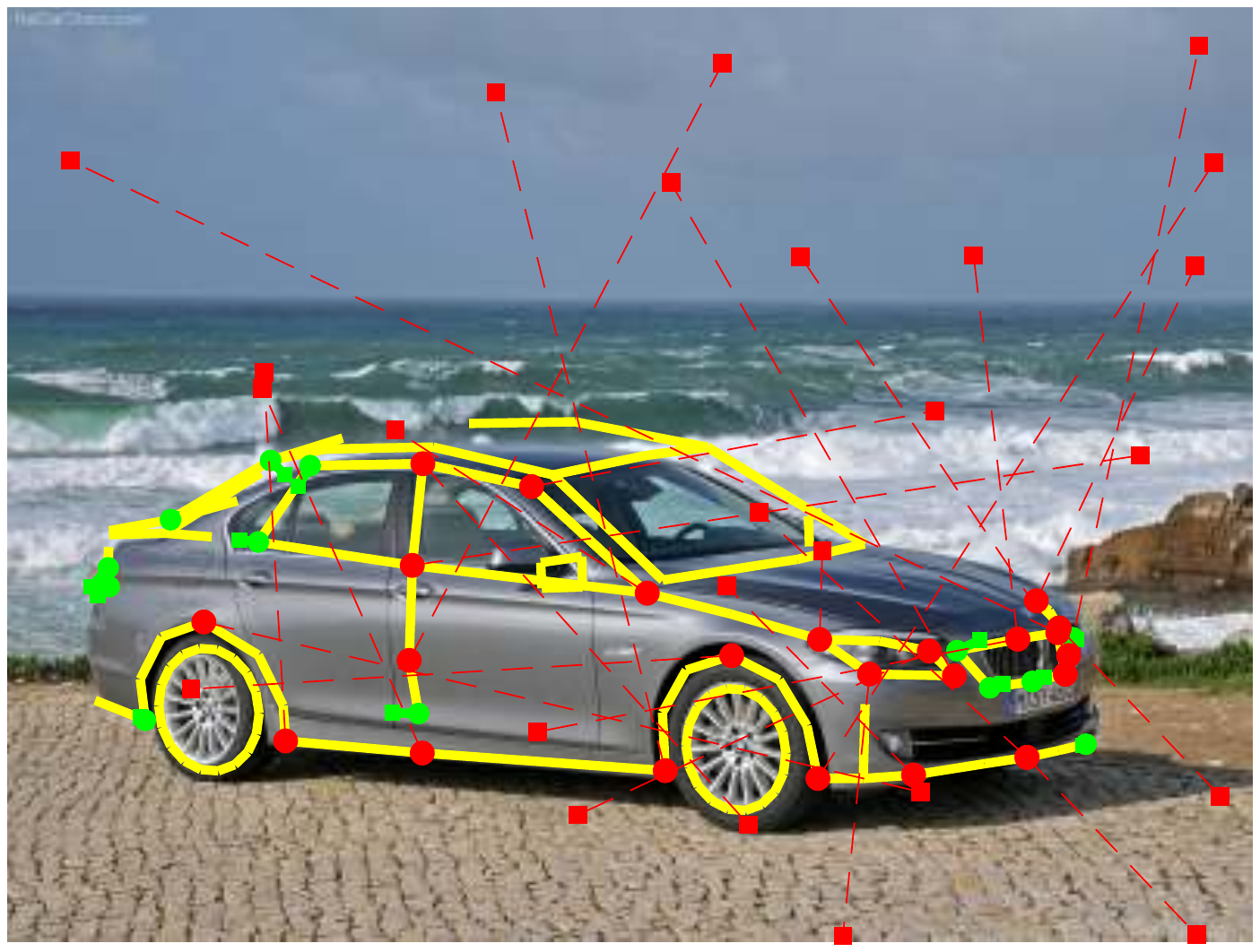} \\
			\vspace{1mm}
			\end{minipage} \\
		\multicolumn{3}{c}{(d) BMW 5-Series $70\%$ outliers.}

		\end{tabular}
	\end{minipage}
	\vspace{-2mm} 
	\caption{Selected qualitative results on the \FGCar dataset~\cite{Lin14eccv-modelFitting} under $40\%$ and $70\%$ outlier rates using \alternrobust~\cite{Zhou17pami-shapeEstimationConvex}, \convexrobust~\cite{Zhou17pami-shapeEstimationConvex}, and \namerobust. (a)-(b): results on the Chevrolet Colorado LS; (c)-(d): results on the BMW 5-Series. Green: inliers. Red: outliers. Circle: 3D landmark. Square: 2D landmark. [Best viewed electronically.]
	\label{fig:FG3DCar_qualitative}}
	\end{center}
\end{figure}

\section{Conclusions}
\label{sec:conclusions}

We presented~\name, the first certifiably optimal solver for 3D shape reconstruction from 2D landmarks in a single image. \name is developed by applying Lasserre's hierarchy of SOS relaxations combined with basis reduction to improve efficiency. Experimental results show that the SOS relaxation of order 2 always achieves global optimality. 
To handle  outlying measurements, we also proposed \namerobust, which solves a truncated least squares robust estimation problem by iteratively running \name~\revise{without the need for an initial guess}. 
We show that \namerobust achieves robustness against $70\%$ outliers on the \FGCar dataset and outperforms 
state-of-the-art solvers.

\begin{center}
{\bf \Large Supplementary Material}
\end{center}

\setcounter{equation}{0}
\setcounter{theorem}{0}
\setcounter{figure}{0}
\renewcommand{\theequation}{A\arabic{equation}}
\renewcommand{\thetheorem}{A\arabic{theorem}}
\renewcommand{\thefigure}{A\arabic{figure}}

\section{Proof of Theorem~\ref{thm:trans-freeRecon}}
\label{sec:proof-tran-free-shape-recon}
\begin{proof}
Here we prove Theorem~\ref{thm:trans-freeRecon} in the main document. 
Recall the weighted least squares optimization for shape reconstruction in eq.~\eqref{eq:weightedleastsquares} and denote its objective function as $f(\vc,\MR,\vt)$, with $\vc = [c_1,\dots,c_k]\tran$:
\begin{multline} \label{eq:objectivef}
f(\vc,\MR,\vt) = \\ \displaystyle \sumfeatures w_i  \left\| \vz_i - \Pi \MR \left( \sumbasis c_k \MB_{ki} \right) - \vt \right\|^2 + \alpha \sumbasis \left| c_k \right|.
\end{multline}
In order to marginalize out the translation $\vt$, we compute the derivative of $f(\vc,\MR,\vt)$~\wrt~$\vt$:
\bea
\frac{\partial f}{\partial \vt} = 2\sumfeatures w_i \vt - 2\sumfeatures w_i \left( \vz_i - \Pi\MR \left( \sumbasis c_k \MB_{ki} \right) \right),
\eea
and set it to $\Zero$, which allows us to write $\vt^\star$ in closed form using $\MR^\star$ and $\vc^\star$:
\bea \label{eq:tofRc}
\vt^\star = \vzbar^w - \Pi \MR^\star \left( \sumbasis c_k^\star \vBbar^w_{k} \right),
\eea
with $\vzbar^w$ and $\vBbar^w_k,k=1,\dots,K,$ being the weighted centers of the 2D landmarks $\MZ$ and the 3D basis shapes $\MB_k$:
\bea
\vzbar^w = \frac{\sumfeatures w_i \vz_i}{\sumfeatures w_i}, \quad \vBbar^w_k = \frac{\sumfeatures \vBbar_{ki}}{\sumfeatures w_i}.
\eea
Then we can substitute the expression of $\vt^\star$ in~\eqref{eq:tofRc} back into the objective function in~\eqref{eq:objectivef} and obtain an objective function without translation:
\begin{multline} \label{eq:tran-free-obj}
f'(\vc,\MR) = \\ \sumfeatures w_i \left\|(\vz_i - \vzbar^w) - 
\Pi \MR \left( \sumbasis c_i (\MB_{ik} - \vBbar^w_k) \right)\right\|^2 + \\ \alpha \sumbasis | c_k |
\end{multline}
Lastly, by defining:
\bea
& \vztilde_i = \sqrt{w_i} (\vz_i - \vzbar^w), \\
& \vBtilde_{ki} = \sqrt{w_i} (\MB_{ki} - \vBbar^w), k=1,\dots,K
\eea
we can see the equivalence between the objective function in eq.~\eqref{eq:tran-free-obj} and the objective function in eq.~\eqref{eq:trans-freeshaperecon} of Theorem~\ref{thm:trans-freeRecon}. The constraints remain unchanged because we only marginalize out the unconstrained variable $\vt$. Therefore, the shape reconstruction problem~\eqref{eq:weightedleastsquares} is equivalent to the translation-free problem~\eqref{eq:trans-freeshaperecon}, and the optimal translation can be recovered using eq.~\eqref{eq:tofRc}.
\end{proof}
\section{Proof of Proposition~\ref{prop:naiveSOSRelax}}
\label{sec:proof-prop-SOS_relaxation}

\begin{proof}
Here we prove the SOS relaxation of order $\rorder$ ($\rorder \geq 2$) for the translation-free shape reconstruction problem~\eqref{eq:trans-freeshaperecon} is the semidefinite program in~\eqref{eq:naiveSOS}. First, let us rewrite the general form of Lasserre's hierarchy of order $\rorder$ in eq.~\eqref{eq:LasserreHierarchyordert} in Theorem~\ref{thm:lasserreHierarchy} as the following:
\bea \label{eq:lasserreHierarchyRewrite}
\max & \gamma \\
\subject & f(\vxx) - \gamma = h + g, \nonumber \\
& h \in \langle \vh \rangle_{2\rorder} , \nonumber \\
& g \in Q_\rorder(\vg) . \nonumber 
\eea
In words, the constraints of~\eqref{eq:lasserreHierarchyRewrite} ask the polynomial $f(\vxx) - \gamma$ to be written as a sum of two polynomials $h$ and $g$, with $h$ in the $2\rorder$-th truncated ideal of $\vh$, and $g$ in the $\rorder$-th truncated quadratic module of $\vg$. 

Next, we use the definition of the $2\rorder$-th truncated ideal and the $\rorder$-th truncated quadratic module to explicitly represent $h$ and $g$. First recall the definition of the $2\rorder$-th truncated ideal in eq.~\eqref{eq:2tideal}, which states that $h$ must be written as a sum of polynomial products between the equality constraints $h_i$'s and the polynomial multipliers $\lambda_i$'s:
\bea \label{eq:hassum}
h = \sum_{i=1}^{15} \lambda_i h_i,
\eea
and the degree of each polynomial product $\lambda_i h_i$ must be no greater than $2\rorder$,~\ie, $\deg(\lambda_i h_i) \leq 2\rorder$. In the translation-free shape reconstruction problem, because all the 15 equality constraints in eq.~\eqref{eq:polyConstraintsSOthree} (arising from $\MR \in \SOthree$) have degree 2, the degree of the polynomial multipliers must be at most $2\rorder-2$,~\ie, $\deg(\lambda_i) \leq 2\rorder-2$. Therefore, we can parametrize each $\lambda_i$ using $[\vxx]_{2\rorder-2}$, the vector of monomials up to degree $2\rorder-2$:
\bea \label{eq:paramlambda}
\lambda_i = \vlambda_i\tran [\vxx]_{2\rorder-2}, \quad \vlambda_i \in \Real{N_{\lambda}}, N_{\lambda} = \nchoosek{K+7+2\rorder}{2\rorder-2},
\eea
with $\vlambda_i$ being the vector of unknown coefficients associated with the monomial basis $[\vxx]_{2\rorder-2}$. The size of $\vlambda_i$ is equal to the length of $[\vxx]_{2\rorder-2}$, which can be computed by $\nchoosek{n+d}{d}$, with $n=K+9$ being the number of variables in $\vxx$, and $d=2\rorder-2$ being the maximum degree of the monomial basis. Similarly for $g$, we recall the definition of the $\rorder$-th truncated quadratic module in eq.~\eqref{eq:tqmodule}, which states that $g$ must be written as a sum of polynomial products between the inequality constraints $g_k$'s and the SOS polynomial multipliers $s_k$'s:
\bea \label{eq:gassum}
g = \sum_{k=0}^{2K} s_k g_k,
\eea
and the degree of each polynomial product $s_k g_k$ must be no greater than $2\rorder$,~\ie, $\deg(s_k g_k) \leq 2\rorder$. For our specific shape reconstruction problem, we have $g_0:=1$, $g_k = c_k, k=1,\dots,K$, and $g_{K+k} = 1-c_k^2, k=1,\dots,K$. Since $g_0$ has degree 0, $s_0$ can have degree up to $2\rorder$. All $g_k,k=1,\dots,K$, have degree 1, so $s_k,k=1,\dots,K$, can have degree up to $2\rorder-1$. However, because SOS polynomials can only have even degree, $s_k,k=1,\dots,K$ can only have degree up to $2\rorder-2$. For $g_{K+k},k=1,\dots,K$, they have degree 2, so their corresponding SOS polynomial multipliers $s_{K+k},k=1,\dots,K$ can have degree up to $2\rorder-2$. Now for each SOS polynomial $s_k,k=0,\dots,2K$, from the Gram matrix representation in eq.~\eqref{eq:PSDDescriptionSOS}, we can associate a PSD matrix $\MS_k$ with it using corresponding monomial bases:
\bea \label{eq:paramSOS}
\hspace{-6mm}
\begin{cases}
s_0 = [\vxx]_{\rorder}\tran \MS_0 [\vxx]_{\rorder} & \MS_0 \in \calS_+^{N_0}, N_0 = \nchoosek{K+9+\rorder}{\rorder} \\
s_{k\neq0} = [\vxx]_{\rorder-1}\tran \MS_k [\vxx]_{\rorder-1} & \MS_k \in \calS_+^{N_\lambda}, N_\lambda = \nchoosek{K+8+\rorder}{\rorder-1}
\end{cases}
\eea

Finally, by inserting the expressions of $s_k$ in~\eqref{eq:paramSOS} back to the expression of $g$ in~\eqref{eq:gassum}, and inserting the expression of $\lambda_i$ in~\eqref{eq:paramlambda} back to the expression of $h$ in~\eqref{eq:hassum}, we can convert the SOS relaxation of general form~\eqref{eq:lasserreHierarchyRewrite} to the semidefinite program~\eqref{eq:naiveSOS}.
\end{proof}

\section{Proof of Theorem~\ref{thm:certificateGlobalOptimality}}
\label{sec:proof-certificate_global_optimality}
\begin{proof}
According to~\cite{lasserre10book-momentsOpt}, the dual SDP of~\eqref{eq:naiveSOS} is the following SDP:
\bea
\min_{\vy} & L_{\vy}(f) \\
\subject & \MM_\rorder(\vy) \succeq 0,\\
		 & \MM_{\rorder - v_k}(g_k \vy) \succeq 0, \\
		 & \MM_{\rorder - u_i}(h_i \vy) = \zero, \\
		 & y_{\zero} = 1,
\eea
where $\vy \in \Real{N_{2\rorder}}$, $N_{2\rorder} = \nchoosek{K+9+2\rorder}{2\rorder}$ is a vector of \emph{moments} for a probability measure supported on $\calX$ defined by the equalities $h_i$ and inequalities $g_k$; $L_{\vy}(f) = \sum_{\valpha} f_{\valpha}y_{\valpha}$ is a linear function of $\vy$, where $f_{\valpha}$ is the coefficient of $f(\vxx)$ associated with monomial $\vxx^{\valpha}$, and $y^{\valpha}$ is the moment of the monomial $\vxx^{\valpha}$~\wrt~the probability measure; $\MM_\rorder(\vy) \in \Real{N_\rorder}, N_\rorder = \nchoosek{K+9+\beta}{\beta}$ is the moment matrix of degree $\beta$ that assembles all the moments in $\vy$; $\MM_{\rorder - v_k} (g_k \vy), v_k = \lceil \deg (g_k)/2 \rceil$, is the \emph{localizing} matrix that takes some moments from the moment matrix $\MM_\rorder(\vy)$ and entry-wise multiply them with the inequality $g_k$ (\cf~\cite{lasserre10book-momentsOpt} for more details); $\MM_{\rorder - u_i}(h_i \vy), u_i = \lceil \deg(h_i)/2 \rceil$ is the localizing matrix that takes some moments from the moment matrix and entry-wise multiply them with the equality $h_i$. Due to strong duality of the primal-dual SDP, we have complementary slackness:
\bea \label{eq:complementarySlackness}
\MS_0^{\rorder\star} \MM_{\rorder}^{\star}  = \zero,
\eea
at global optimality of the SDP pair. Since $\corank(\MS_0^{\rorder\star}) = 1$, then according to Theorem 5.7 of~\cite{lasserre10book-momentsOpt}, we have $\rank{\MM_{\rorder}^{\star}} = 1$ and $f_{\rorder}^\star$ is the global minimum of the original shape reconstruction problem~\eqref{eq:trans-freeshaperecon}. Further, as $\rank{\MM_{\rorder}^{\star}} = 1$, $\MM_{\rorder}^{\star} = \vv^\star (\vv^\star)\tran $ where $\vv^\star = [\vxx^\star]_{\rorder}$ and $\vxx^\star$ is the unique global minimizer of the original problem~\eqref{eq:trans-freeshaperecon}. However, the fact that $\MS_0^{\rorder\star} \MM_{\rorder}^{\star} = \zero$ and $\MM_{\rorder}^{\star} = \vv^\star (\vv^\star)\tran$ implies:
\bea
\MS_0^{\rorder\star} \vv^\star = \zero,
\eea
and $\vv^\star$ is in the null-space of $\MS_0^{\rorder\star}$. Therefore, the solution extracted using Proposition~\ref{prop:extractSolutions} is also the unique global minimizer of problem~\eqref{eq:trans-freeshaperecon}.
\end{proof}
\section{Derivation of Proposition~\ref{prop:sosBasisReduction}}
\label{sec:derivation-basis_reduction}
Here we show the intuition for using the basis reduction in Proposition~\ref{prop:sosBasisReduction}. In the original SOS relaxation~\eqref{eq:naiveSOS}, the parametrization of the SOS polynomial multipliers $s_k,k=0,\dots,2K$, and the polynomial multipliers $\lambda_i,i=1,\dots,15$, uses the vector of \emph{all} monomials up to their corresponding degrees (\cf~\eqref{eq:paramlambda} and~\eqref{eq:paramSOS}), which leads to an SDP of size $N_0 = \nchoosek{K+9+\rorder}{\rorder}$ that grows quadratically with the number of basis shapes $K$. In basis reduction, we do not limit ourselves to the vector of full monomials, but rather parametrize $s_0$, $s_k$ and $\lambda_i$ with unknown monomials bases $v_0[\vxx]$, $v_s[\vxx]$ and $v_\lambda[\vxx]$, which allows us to rewrite~\eqref{eq:naiveSOSLinearConstraints} as: 
\begin{multline} \label{eq:generalEqCons}
f(\vxx) - \gamma = \overbrace{ v_0[\vxx]\tran \MS_0 v_0[\vxx] }^{s_0} + \\
\sum_{k=1}^{2K} \overbrace{ \left( v_s[\vxx]\tran \MS_k v_k[\vxx] \right) }^{s_k} g_k(\vxx) +  \\
\sum_{i=1}^{15} \overbrace{ \left(\vlambda_i\tran v_{\lambda}[\vxx]\right) }^{\lambda_i} h_i(\vxx),
\end{multline}
with the hope that $v_0[\vxx] \subseteq [\vxx]_2$, $v_s[\vxx] \subseteq [\vxx]_1$ and $v_\lambda[\vxx] \subseteq [\vxx]_2$
 have much smaller sizes (we limit ourselves to the case of $\rorder=2$, at which level the relaxation is empirically tight).

As described, one can see that the problem of finding smaller $v_0[\vxx]$, $v_s[\vxx]$ and $v_\lambda[\vxx]$, while keeping the relaxation empirically tight, is highly combinatorial in general. Therefore, our strategy is to only consider the following case:
\begin{enumerate}[label=(\roman*)]
\item \label{im:expressive} Expressive: choose $v_0[\vxx]$ such that $s_0$ contains all the monomials in $f(\vxx) - \gamma$,
\item \label{im:balanced} Balanced: choose $v_s[\vxx]$ and $v_\lambda[\vxx]$ such that the sum $s_0 + \sum s_k g_k + \sum \lambda_i h_i$ can only have monomials from $f(\vxx) -\gamma$.
\end{enumerate}
In words, condition~\ref{im:expressive} ensures that the right-hand side (RHS) of~\eqref{eq:generalEqCons} contains all the monomials of the left-hand side (LHS). Condition~\ref{im:balanced} asks the three terms of the RHS,~\ie, $s_0$, $\sum s_k g_k$ and $\sum \lambda_i h_i$, to be self-balanced in the types of monomials. For example, if $s_0$ contains extra monomials that are not in the LHS, then those extra monomials better appear also in $\sum s_k g_k$ and/or $\sum \lambda_i h_i$ so that they could be canceled by summation. Under these two conditions, it is \emph{possible} to have equation~\eqref{eq:generalEqCons} hold\footnote{Whether or not these are sufficient or necessary conditions remains open. However, leveraging Theorem~\ref{thm:certificateGlobalOptimality} we can still check optimality a posteriori.}. 

The choices in both conditions depend on analyzing the monomials in $f(\vxx) - \gamma$. Recall the expression of $f(\vxx)$ in~\eqref{eq:trans-freeshaperecon} and the expression $q_i(\vxx)$ in~\eqref{eq:costqi} for each term inside the summation, it can be seen that $f(\vxx)$ only contains the following types of monomials:
\bea \label{eq:monomialsoff}
& 1 \\
& c_k, 1\leq k \leq K \\
& c_k r_j, 1 \leq k \leq K, 1 \leq j \leq 9 \\
& \hspace{-6mm} c_{k_1} c_{k_2} r_{j_1} r_{j_2}, 1\leq k_1 \leq k_2 \leq K, 1 \leq j_1 \leq j_2 \leq 9
\eea
and the key observation is that $f(\vxx)$ does \emph{not} contain degree-4 monomials \emph{purely} in $\vc$ or $\vr$,~\ie, $c_{k_1}c_{k_2}c_{k_3}c_{k_4}$ and $r_{j_1}r_{j_2}r_{j_3}r_{j_4}$, or any degree-3 monomials in $\vc$ and $\vr$. Therefore, when choosing $v_0[\vxx]$, we can exclude degree-2 monomials purely in $\vc$ and $\vr$ from $[\vxx]_2$, and set $v_0[\vxx] = m_2(\vxx) = [1, \vc\tran, \vr\tran, \vc\tran \kron \vr\tran]\tran$\footnote{A more rigorous analysis should follow the rules of Newton Polytope~\cite{Blekherman12Book-sdpandConvexAlgebraicGeometry}, but the intuition is the same as what we describe here.} as stated in Proposition~\ref{prop:sosBasisReduction}. This will satisfy the expressive condition~\ref{im:expressive}, because $s_0 = m_2[\vxx]\tran \MS_0 m_2[\vxx]$ can have the following monomials:
\bea 
& \label{eq:monomialsofs0_1} 1, \quad c_k, \quad c_k r_j, \quad c_{k_1} c_{k_2} r_{j_1} r_{j_2} \\
& \label{eq:monomialsofs0_2} r_j, \quad c_{k_1}c_{k_2}, \quad r_{j_1}r_{j_2}, \quad c_{k_1}c_{k_2} r_j, \quad c_k r_{j_1} r_{j_2}
\eea
and those in~\eqref{eq:monomialsofs0_1} cover the monomials in $f(\vxx)$. Replacing $[\vxx]_2$ with $v_0[\vxx] = m_2(\vxx)$ is the key step in reducing the size of the SDP, because it reduces the size of the SDP from $\nchoosek{K+11}{2}$ to $10K+10$,~\ie, from quadratic to linear in $K$.

In order to satisfy condition~\ref{im:balanced}, when choosing $v_s[\vxx]$ and $v_\lambda[\vxx]$, the goal is to have the product between $s_k$, $\lambda_i$ and $g_k$, $h_i$ result in monomials that appear in $f(x)-\gamma$, and ensure that monomials that do not appear in the 
latter can simplify our in the summation. For example, as stated in Proposition~\ref{prop:sosBasisReduction}, we choose $v_s[\vxx] = [\vr]_1 = [1, \vr\tran]\tran$ and $s_k$ will contain monomials 1, $r_j$ and $r_{j_1} r_{j_2}$.
Because $g_k$'s have monomials $1$, $c_k$ and $c_k^2$, we can see that $\sum s_k g_k$ will contain the following monomials:
\bea
& \label{eq:monomialsofsg_1} 1, \ c_k,\ r_{j_1}r_{j_2}c_k^2,\\
& \label{eq:monomialsofsg_2} c_k^2, \ r_j, \ r_j c_k^2,\ r_{j_1}r_{j_2},\ r_{j_1}r_{j_2}c_k,\ 
\eea
This still satisfies the balanced condition, because monomials of $\sum s_k g_k$ in~\eqref{eq:monomialsofsg_1} balance with monomials of $s_0$ in~\eqref{eq:monomialsofs0_1}, and monomials of $\sum s_k g_k$ in~\eqref{eq:monomialsofsg_1} balance with monomials of $s_0$ in~\eqref{eq:monomialsofs0_2}. Similarly, choosing $v_\lambda[\vxx] = [\vc]_2$ makes $\lambda_i$ have monomials $1$, $c_k$ and $c_{k_1} c_{k_2}$, and because $h_i$'s have monomials $1$, $r_j$ and $r_{j_1}r_{j_2}$, we see that $\sum \lambda_i h_i$ contains the following monomials:
\bea
& 1, \ c_k, \ c_k r_j, \ c_{k_1}c_{k_2}r_{j_1}r_{j_2}, \\
& r_j, \ r_{j_1}r_{j_2}, \ c_k r_{j_1}r_{j_2}, \ c_{k_1}c_{k_2},  \ c_{k_1}c_{k_2} r_j, 
\eea
which balance with monomials in $s_0$ from~\eqref{eq:monomialsofs0_1} and~\eqref{eq:monomialsofs0_2}.

We remark that 
we cannot guarantee that the SOS relaxation resulting from basis reduction can achieve the same performance as the original SOS relaxation and we cannot guarantee our choice of basis is ``optimal'' in any sense. Therefore, in practice, one needs to check the solution and compute $\corank(\MS_0^{2\star})$ and $\eta_2$ to check the optimality of the solution produced by~\eqref{eq:SOSBasisReduction}. Moreover, it remains an open problem to find a better set of monomials bases to achieve better reduction (\eg, knowing more about the algebraic geometry of $g_k$ and $h_i$ could possibly enable using the \emph{standard monomials} as a set of bases~\cite{Blekherman12Book-sdpandConvexAlgebraicGeometry}).

\section{Derivation of Algorithm~\ref{alg:shaperobust}}
\label{sec:derivation-robust_algorithm}
For a complete discussion of graduated non-convexity and its applications for robust spatial perception, please see~\cite{Yang20ral-GNC}.

In the main document, for robust shape reconstruction, we adopt the TLS shape reconstruction formulation:
\bea \label{eq:suppTLSShapeRecon}
\hspace{-5mm} 
\!\!\!\min_{\substack{c_k\geq 0, \\ k=1,\dots,K \\ \vt \in \Real{2}, \MR\in \SOthree} }
 &
\hspace{-3mm} 
\displaystyle \sumfeatures 
\rho_{\barc}\left( r_i(c_k,\MR,\vt) \right) + \alpha \!\sumbasis \! c_k 
\eea
where 
$r_i(c_k,\MR,\vt) := \left\| \vz_i \!-\! \Pi \MR \left( \sumbasis c_k \MB_{ki} \right) \!-\! \vt \right\|$ is called the \emph{residual}, 
and
$\rho_{\barc}(r) = \min(r^2, \barcsq)$ implements a truncated least squares cost. Recalling that $\rho_{\barc}(r) 
= \min(r^2, \barcsq)
= \min_{w\in\{0,1\}} w r^2 + (1-w)\barcsq$, we can rewrite the TLS shape reconstruction as a joint optimization of $(\vc,\MR,\vt)$ and the binary variables $w_i$'s, as in eq.~\eqref{eq:TLSShapeRecon2} in the main document. However, as hinted in the main document, due to the non-convexity of the TLS cost, directly solving the joint problem or alternating between solving for $(\vc,\MR,\vt)$ and binary variables $w_i$'s would require an initial guess and is prone to bad local optima. 

The idea of graduated non-convexity (\GNC)~\cite{Blake1987book-visualReconstruction} is to introduce a surrogate function $\rho^\mu_{\barc}(r)$, governed by a control parameter $\mu$, such that changing $\mu$ allows $\rho^\mu_{\barc}(r)$ to start from a convex proxy of $\rho_{\barc}(r)$, and gradually increase the amount of non-convexity till the  original TLS function $\rho_{\barc}(r)$ is recovered. The surrogate function for TLS is stated below.

\begin{proposition}[Truncated Least Squares (TLS) and \GNC] 
\label{prop:TLS}
The truncated least squares function is defined as:
\bea
\label{eq:formulationTLS}
\hspace{-2mm} \rho_{\barc}(r) = 
\begin{cases}
\scriptstyle{ r^2 } & {\footnotesize\text{ if }}  \scriptstyle{ r^2 \in \left[0, \barcsq \right] } \\
 \scriptstyle{ \barcsq } & {\footnotesize\text{ if }}  \scriptstyle{ r^2 \in \left[\barcsq, +\infty \right) }
\end{cases},
\eea
where $\barc$ is a given truncation threshold. 
The \GNC surrogate function with control parameter $\mu$ is:
\bea \label{eq:GNCsurrogateTLS}
\hspace{-3mm} \rho^\mu_{\barc}(r) = \begin{cases}
\scriptstyle{ r^2 } & {\footnotesize\text{ if }} \scriptstyle{ r^2 \in \left[0, \frac{\mu }{\mu + 1}\barcsq \right] } \\
  \scriptstyle{ 2\barc | r | \sqrt{\mu(\mu+1)} - \mu (\barcsq + r^2)  } & {\footnotesize\text{ if }} \scriptstyle{ r^2 \in \left[\frac{\mu }{\mu + 1} \barcsq, \frac{\mu+1}{\mu}\barcsq \right] }\\
 \scriptstyle{ \barcsq } & {\footnotesize\text{ if }} \scriptstyle{ r^2 \in \left[\frac{\mu+1}{\mu}\barcsq, +\infty \right) }
\end{cases}.
\eea
By inspection, one can verify $\rho^\mu_{\barc}(r)$ is convex for $\mu$ approaching zero (($\rho^{\mu}_{\barc}(r))'' = - 2\mu \rightarrow 0$) and retrieves $\rho_{\barc}(r)$  in~\eqref{eq:formulationTLS} for $\mu\rightarrow+\infty$.
An illustration of $\rho^\mu_{\barc}(r)$ 
is given in Fig.~\ref{fig:GNCplots}.
\end{proposition}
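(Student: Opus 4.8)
The plan is to prove Proposition~\ref{prop:TLS} by a direct ``inspection'' of the three-branch surrogate~\eqref{eq:GNCsurrogateTLS}, split into three checks: well-posedness (continuity and $C^1$ smoothness across the two breakpoints), a curvature computation that governs the convexity behavior as $\mu\to 0$, and a limiting analysis that recovers the TLS cost as $\mu\to+\infty$. Since each branch of $\rho^\mu_{\barc}$ depends on $r$ only through $|r|$ or $r^2$, the surrogate is even in $r$, so I would work with $r\ge 0$ throughout and write the two breakpoints as $r_1=\barc\sqrt{\mu/(\mu+1)}$ and $r_2=\barc\sqrt{(\mu+1)/\mu}$, the values of $r$ at which $r^2$ hits $\frac{\mu}{\mu+1}\barcsq$ and $\frac{\mu+1}{\mu}\barcsq$, respectively.

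First I would verify that the branches agree in value and slope at $r_1$ and $r_2$. Substituting $r=r_1$ into the middle branch $2\barc|r|\sqrt{\mu(\mu+1)}-\mu(\barcsq+r^2)$ and using the identity $\sqrt{\mu/(\mu+1)}\,\sqrt{\mu(\mu+1)}=\mu$ collapses it to $\frac{\mu}{\mu+1}\barcsq=r_1^2$, matching the quadratic branch; substituting $r=r_2$ and using $\sqrt{(\mu+1)/\mu}\,\sqrt{\mu(\mu+1)}=\mu+1$ collapses it to $\barcsq$, matching the constant branch. For smoothness I would differentiate (for $r>0$): the branches have slopes $2r$, $2\barc\sqrt{\mu(\mu+1)}-2\mu r$, and $0$; evaluating the middle slope at $r_1$ and $r_2$ and cancelling the radicals reproduces $2r_1$ and $0$, so $\rho^\mu_{\barc}\in C^1$. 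This step is elementary but demands care, as the radicals must cancel exactly; it is the only place where real bookkeeping is needed.

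With continuity and smoothness established, the convexity statement follows from the second derivatives: $2$ on the quadratic branch, $0$ on the constant branch, and $-2\mu$ on the middle branch (the $|r|$ term being affine for $r>0$). Hence the sole region of negative curvature is the middle branch, with curvature magnitude exactly $2\mu$; letting $\mu\to 0$ drives this to zero, so the surrogate sheds its non-convexity, which is precisely the content of the parenthetical $(\rho^\mu_{\barc})''=-2\mu\to 0$. The genuine subtlety here, and the point I would be careful to state honestly, is that for any finite $\mu>0$ the surrogate is strictly non-convex; the claim is the vanishing of the non-convex curvature in the limit, not convexity at a fixed $\mu$. Finally, for $\mu\to+\infty$ I would take the breakpoint limits $r_1=\barc\sqrt{\mu/(\mu+1)}\to\barc$ and $r_2=\barc\sqrt{(\mu+1)/\mu}\to\barc$: the middle branch collapses to the single point $|r|=\barc$, the quadratic branch fills $[0,\barc]$ and the constant branch fills $[\barc,+\infty)$, so $\rho^\mu_{\barc}$ reduces to $r^2$ on $r^2\le\barcsq$ and $\barcsq$ on $r^2\ge\barcsq$, i.e.\ exactly the TLS cost~\eqref{eq:formulationTLS}.
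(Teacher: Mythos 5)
Your proposal is correct and takes essentially the same route as the paper: the paper provides no separate proof of Proposition~\ref{prop:TLS}, asserting both claims ``by inspection'' via the curvature remark $(\rho^{\mu}_{\barc}(r))'' = -2\mu \rightarrow 0$ and the limit $\mu \rightarrow +\infty$, and your computations (value and slope matching at the breakpoints $r_1 = \barc\sqrt{\mu/(\mu+1)}$ and $r_2 = \barc\sqrt{(\mu+1)/\mu}$, the branch curvatures $2$, $-2\mu$, $0$, and the collapse of the middle branch to $|r|=\barc$) simply carry out that inspection in full detail. Your caveat that for any fixed $\mu>0$ the surrogate is strictly non-convex, so the claim is really that the magnitude of the negative curvature vanishes as $\mu \to 0$, is an accurate and slightly more careful reading of the paper's loose phrase ``convex for $\mu$ approaching zero.''
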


The nice property of the \GNC surrogate function is that when $\mu$ is close to zero, $\rho^\mu_{\barc}$ is convex, which means the only non-convexity of problem~\eqref{eq:suppTLSShapeRecon} comes from the constraints and can be relaxed using the SOS relaxations.

For the \GNC surrogate function $\rho^\mu_{\barc}$, the simple trick of introducing binary variables ($\rho_{\barc}(r)= \min_{w\in\{0,1\}} w r^2 + (1-w)\barcsq$) would not work. However, Black and Rangarajan~\cite{Black96ijcv-unification} showed that this idea of introducing an~\emph{outlier variable}\footnote{$w$ can be thought of an outlier variable: when $w=1$, the measurement is an inlier, when $w=1$, the measurement is an outlier.} can be generalized to many robust cost functions. In particular, for the \GNC surrogate function, we have the following.

\begin{theorem}[Black-Rangarajan Duality for \GNC surrogate TLS] \label{thm:brdualityGNCTLS}
The \GNC surrogate TLS shape reconstruction:
\bea \label{eq:GNCTLSshape}
\hspace{-5mm} 
\!\!\!\min_{\substack{c_k\geq 0, \\ k=1,\dots,K \\ \vt \in \Real{2}, \MR\in \SOthree} }
 &
\hspace{-3mm} 
\displaystyle \sumfeatures 
\rho_{\barc}^\mu\left( r_i(c_k,\MR,\vt) \right) + \alpha \!\sumbasis \! c_k 
\eea
with $\rho^\mu_{\barc}(r)$ defined in~\eqref{eq:GNCsurrogateTLS}, is equivalent to the following optimization with outlier variables $w_i$'s:
\bea \label{eq:GNCTLSshape_br}
\hspace{-5mm} 
\!\!\!\min_{\substack{c_k\geq 0, \\ k=1,\dots,K \\ \vt \in \Real{2}, \MR\in \SOthree \\ w_i \in [0,1],i=1,\dots,N} }
\hspace{-3mm} 
\displaystyle \sumfeatures \left[
w_i r_i^2(c_k,\MR,\vt) + \Phi^\mu_{\barc}(w_i) \right] + \alpha \!\sumbasis \! c_k
\eea
where $\Phi^\mu_{\barc}(w_i)$ is the following outlier process:
\bea \label{eq:outlierProcessGNCTLS}
\Phi^\mu_{\barc}(w_i) = \frac{\mu (1-w_i)}{\mu + w_i} \barcsq.
\eea
\end{theorem}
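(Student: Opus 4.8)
The plan is to reduce the claimed equivalence of problems~\eqref{eq:GNCTLSshape} and~\eqref{eq:GNCTLSshape_br} to a single scalar identity and then exploit the separability of the outlier variables. Concretely, I would first prove the pointwise relation
\[
\rho^\mu_{\barc}(r) = \min_{w \in [0,1]} \left[ w\, r^2 + \Phi^\mu_{\barc}(w) \right], \qquad \Phi^\mu_{\barc}(w) = \frac{\mu(1-w)}{\mu+w}\barcsq ,
\]
for every fixed residual $r$. Granting this, the theorem follows immediately: in~\eqref{eq:GNCTLSshape} each summand $\rho^\mu_{\barc}(r_i(c_k,\MR,\vt))$ may be replaced by the inner minimization over its own weight $w_i$, and since the $w_i$ are mutually independent and the regularizer $\alpha\sumbasis c_k$ does not involve them, the $N$ separate minimizations can be pulled outside the sum and merged with the minimization over $(\vc,\MR,\vt)$ into the single joint problem~\eqref{eq:GNCTLSshape_br}.

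To establish the scalar identity I would treat $r$ as a parameter and analyze $g(w) := w\, r^2 + \Phi^\mu_{\barc}(w)$ on $[0,1]$. Differentiating the outlier process gives $\frac{d}{dw}\Phi^\mu_{\barc}(w) = -\frac{\mu(\mu+1)\barcsq}{(\mu+w)^2}$, so that $g'(w) = r^2 - \frac{\mu(\mu+1)\barcsq}{(\mu+w)^2}$, which is strictly increasing in $w$; hence $g$ is convex on $[0,1]$ and its minimizer is obtained from the first-order condition. Solving $g'(w)=0$ yields the interior stationary point $w^\star = \barc\sqrt{\mu(\mu+1)}/|r| - \mu$, and the two feasibility constraints $0\le w^\star$ and $w^\star\le 1$ translate \emph{exactly} into $r^2 \in \left[\frac{\mu}{\mu+1}\barcsq, \frac{\mu+1}{\mu}\barcsq\right]$, the middle branch of~\eqref{eq:GNCsurrogateTLS}.

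The three regimes then match up by a clamping argument. When $r^2 \le \frac{\mu}{\mu+1}\barcsq$ the stationary point falls below $0$, so $g$ is decreasing and the minimum is attained at $w=1$, giving $g(1)=r^2$; when $r^2 \ge \frac{\mu+1}{\mu}\barcsq$ the stationary point exceeds $1$, so $g$ is increasing and the minimum is at $w=0$, giving $g(0)=\barcsq$; and in the intermediate regime, substituting $w^\star$ back into $g$ and using $\mu+w^\star = \barc\sqrt{\mu(\mu+1)}/|r|$ collapses the expression to $2\barc|r|\sqrt{\mu(\mu+1)} - \mu(\barcsq+r^2)$. These three values coincide branch-by-branch with the definition of $\rho^\mu_{\barc}(r)$ in~\eqref{eq:GNCsurrogateTLS}, completing the scalar identity.

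The only genuine obstacle is the back-substitution in the middle regime, where one must simplify $w^\star r^2 + \Phi^\mu_{\barc}(w^\star)$; the simplification hinges on recognizing that both $w^\star r^2$ and the scaled outlier term each reduce to $\barc|r|\sqrt{\mu(\mu+1)}$ plus a $-\mu(\cdot)$ correction, so that they add up to the stated linear-in-$|r|$ branch. Everything else is a routine monotonicity-and-clamping argument. I would also remark that this derivation implicitly furnishes the closed-form optimal weight $w^\star = \mathrm{clamp}_{[0,1]}\!\big(\barc\sqrt{\mu(\mu+1)}/|r| - \mu\big)$, which is precisely the update applied in line~\ref{line:weightUpdate} of Algorithm~\ref{alg:shaperobust}.
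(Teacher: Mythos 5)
Your proposal is correct in substance, but it takes a genuinely different route from the paper. The paper's proof of this theorem is a one-line deferral: it invokes the general Black--Rangarajan construction (Fig.~10 of the cited reference), which, starting from a robust cost, manufactures the outlier process $\Phi^\mu_{\barc}$; the verification that this $\Phi^\mu_{\barc}$ actually reproduces $\rho^\mu_{\barc}$ is left to the reference, and the closed-form minimizing weight is derived only later, in the algorithm-derivation section, as the weight-update rule~\eqref{eq:dualUpdateTLS}. You instead prove the scalar identity $\rho^\mu_{\barc}(r)=\min_{w\in[0,1]}\left[wr^2+\Phi^\mu_{\barc}(w)\right]$ directly, by convexity and clamping of your $g(w)=wr^2+\Phi^\mu_{\barc}(w)$, and then lift it to the full problem by separability of the $w_i$. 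This buys a self-contained, elementary proof that needs no external machinery, and it yields the weight update~\eqref{eq:dualUpdateTLS} as a free byproduct --- effectively merging the paper's theorem with its later algorithmic derivation into one argument. What the paper's route buys is generality: the Black--Rangarajan procedure applies uniformly to a whole family of robust costs, whereas your verification is specific to the TLS surrogate. Your computations check out: $\Phi'(w)=-\mu(\mu+1)\barcsq/(\mu+w)^2$, the interior stationary point $w^\star=\barc\sqrt{\mu(\mu+1)}/|r|-\mu$, the correspondence of $0\leq w^\star\leq 1$ with the middle interval, and the back-substitution giving $2\barc|r|\sqrt{\mu(\mu+1)}-\mu(\barcsq+r^2)$ are all correct.

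One slip to fix: you transposed the locations of the stationary point in the two boundary regimes. When $r^2\leq\frac{\mu}{\mu+1}\barcsq$ (small residual), $w^\star$ is large --- it \emph{exceeds} $1$ --- so $g'\leq 0$ on $[0,1]$, $g$ is decreasing there, and the minimum is at $w=1$ with value $g(1)=r^2$; when $r^2\geq\frac{\mu+1}{\mu}\barcsq$ (large residual), $w^\star$ \emph{falls below} $0$, so $g$ is increasing on $[0,1]$ and the minimum is at $w=0$ with value $g(0)=\barcsq$. As written, your text pairs ``falls below $0$'' with ``decreasing, minimum at $w=1$'', which is internally inconsistent for a convex function: a stationary point lying below the interval forces the function to be increasing on it. The endpoint conclusions and values you state are the right ones; only the narration of where $w^\star$ sits is swapped, so the fix is purely editorial.
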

\begin{proof}
The derivation of $\Phi^\mu_{\barc}(w_i)$ in~\eqref{eq:outlierProcessGNCTLS} follows the Black-Rangarajan procedure in Fig.~10 of~\cite{Black96ijcv-unification}.
\end{proof}

In words, the Black-Rangarajan duality allows us to rewrite the non-convex shape reconstruction problem as a joint optimization in $(\vc,\MR,\vt)$ and outlier variables $w_i$'s.
The interested readers can find closed-form outlier processes for many other robust cost functions in the original paper~\cite{Black96ijcv-unification}.

Leveraging the Black-Rangarajan duality, for any given choice of the control parameter $\mu$, we can solve problem~\eqref{eq:GNCTLSshape_br} in two steps: first we solve $(\vc,\MR,\vt)$ using \name with fixed weights $w_i$'s, and then we update the weights with fixed $(\vc,\MR,\vt)$. In particular, at each iteration $\tau$ (corresponding to a given control parameter $\mu$), we perform the following:
\begin{enumerate}
\item {\bf Variable update}: minimize~\eqref{eq:GNCTLSshape_br} with respect to $(\vc,\MR,\vt)$, with fixed weights $w_i^{(\tau-1)}$:
\begin{multline}
\hspace{-10mm} c_k^{(\tau)}, \MR^{(\tau)}, \vt^{(\tau)} =  \\
\!\!\!\argmin_{\substack{c_k\geq 0, \\ k=1,\dots,K \\ \vt \in \Real{2}, \MR\in \SOthree} }
\hspace{-3mm} 
\displaystyle \sumfeatures 
w_i^{(\tau-1)} r_i^2(c_k,\MR,\vt)  + \alpha \!\sumbasis \! c_k,
\end{multline}
where we have dropped the term $\sumfeatures \Phi^\mu_{\barc}(w_i)$ because it is independent from $(\vc,\MR,\vt)$. This problem is exactly the weighted least squares problem~\eqref{eq:weightedleastsquares} and can be solved using \name (\cf~line~\ref{line:variableUpdate} in Algorithm~\ref{alg:shaperobust}). Using the solutions $(c_k^{(\tau)}, \MR^{(\tau)}, \vt^{(\tau)})$, we can compute the residuals $r_i^{(\tau)}$ (\cf~line~\ref{line:computeResidual} in Algorithm~\ref{alg:shaperobust}).

\item {\bf Weight update}: minimize~\eqref{eq:GNCTLSshape_br} with respect to $w_i$, with fixed residuals $r_i^{(\tau)}$:
\bea
\hspace{-5mm} w_i^{(\tau)} = \argmin_{w_i \in [0,1],i=1,\dots,N} \sumfeatures ( r_i^{(\tau)})^2 w_i + \Phi_{\barc}^\mu(w_i),
\eea 
where we have dropped $\sumbasis c_k^{(\tau)}$ because it is a constant for the optimization. This optimization, fortunately, can be solved in closed-form. We take the gradient of the objective function  with respect to $w_i$:
\bea
& \nabla_{w_i} = ( r_i^{(\tau)})^2 + \nabla_{w_i} \Phi_{\barc}^\mu(w_i) \nonumber \\
& = ( r_i^{(\tau)})^2  - \frac{\mu(\mu+1)}{(\mu + w_i)^2} \barcsq
\eea
and observe that $\nabla_{w_i} = ( r_i^{(\tau)})^2 - \frac{\mu+1}{\mu}\barcsq$ when $w_i=0$, and $\nabla_{w_i} = ( r_i^{(\tau)})^2 - \frac{\mu}{\mu+1}\barcsq$ when $w_i=1$. Therefore, the global minimizer $w_i^\star := w_i^{(\tau)}$ is:
\smaller
\bea \label{eq:dualUpdateTLS}
\hspace{-10mm}{}w_i^{(\tau)} = \begin{cases}
0 & \text{ if } ( r_i^{(\tau)})^2 \in \left[ \frac{\mu+1}{\mu}\barcsq, +\infty \right] \\
\frac{\barc}{r_i^{(\tau)}}\sqrt{\mu(\mu+1)} - \mu & \text{ if } ( r_i^{(\tau)})^2 \in \left[ \frac{\mu}{\mu+1}\barcsq ,\frac{\mu+1}{\mu}\barcsq \right] \\
1 & \text{ if } ( r_i^{(\tau)})^2 \in \left[ 0,\frac{\mu}{\mu+1}\barcsq \right]
\end{cases}.
\eea
\normalsize
and this is the weight update rule in line~\ref{line:weightUpdate} of Algorithm~\ref{alg:shaperobust}.
\end{enumerate}

After both the variables and weights are updated using the shaperobust approach described above, we increase the control parameter $\mu$ to increase the non-convexity of the surrogate function $\rho^\mu_{\barc}$ (\cf~line~\ref{line:updateControlPara} of Algorithm~\ref{alg:shaperobust}). At the next iteration $\tau+1$, the updated weights are used to perform the variable update. 
The iterations terminate when the change in the objective function becomes negligible (\cf~line~\ref{line:checkConvergence} of Algorithm~\ref{alg:shaperobust})
or after a maximum number of iterations (\cf~line~\ref{line:maxIters} of Algorithm~\ref{alg:shaperobust}). 
Note that all weights are initialized to $1$ (\cf~line~\ref{line:init1} in Algorithm~\ref{alg:shaperobust}), which means 
that initially all measurements are tentatively accepted as inliers, therefore no prior information about inlier/outlier is required.


\newcommand{\mpw}{4.7cm}
\begin{figure}[t!]
	\begin{center}
	\begin{minipage}{\columnwidth}
	\hspace{-0.2cm}
			\centering%
			\includegraphics[width=0.7\columnwidth]{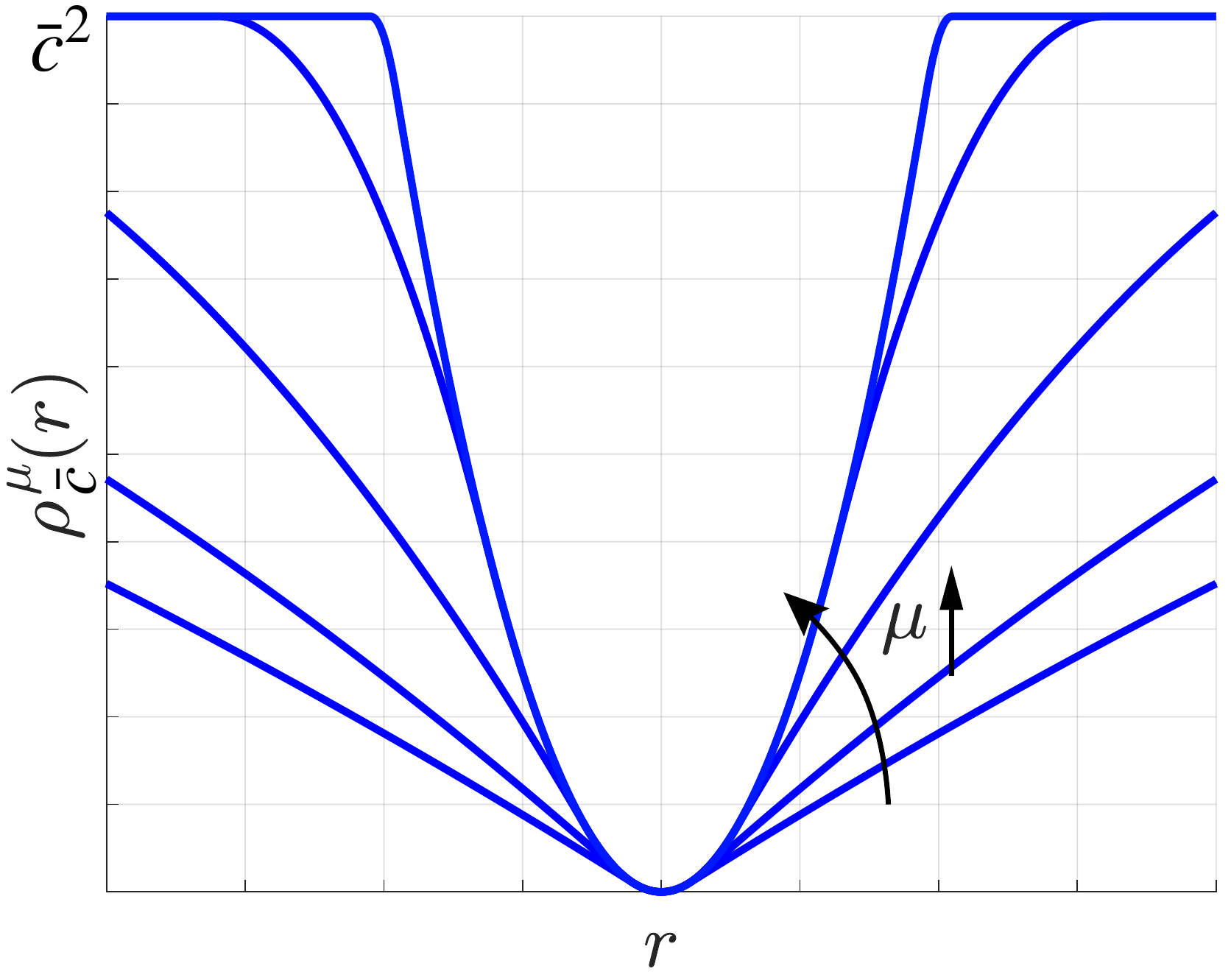}
	\end{minipage}
	\vspace{-4mm}
	\caption{Graduated Non-Convexity (\GNC) with control parameter $\mu$ for the Truncated Least Squares (TLS) cost.
	\label{fig:GNCplots}}
	\end{center}
\end{figure}

\section{\FGCar Qualitative Results}
\label{sec:supp_experiments}

Fig.~\ref{fig:supp_FG3DCar_qualitative} shows 9 full qualitative results comparing the performances of \alternrobust~\cite{Zhou17pami-shapeEstimationConvex}, \convexrobust~\cite{Zhou17pami-shapeEstimationConvex} and \namerobust on the \FGCar~\cite{Lin14eccv-modelFitting} dataset under $10\%$ to $70\%$ outlier rates. One can further see that the performance of~\namerobust is sensitive to $70\%$ outliers, while the performances of \alternrobust and \convexrobust gradually degrade and fail at $50\%$ to $60\%$ outliers.


\renewcommand{\mpwthree}{2.6cm}
\renewcommand{\myhspace}{\hspace{-3.5mm}}

\begin{figure*}[h]
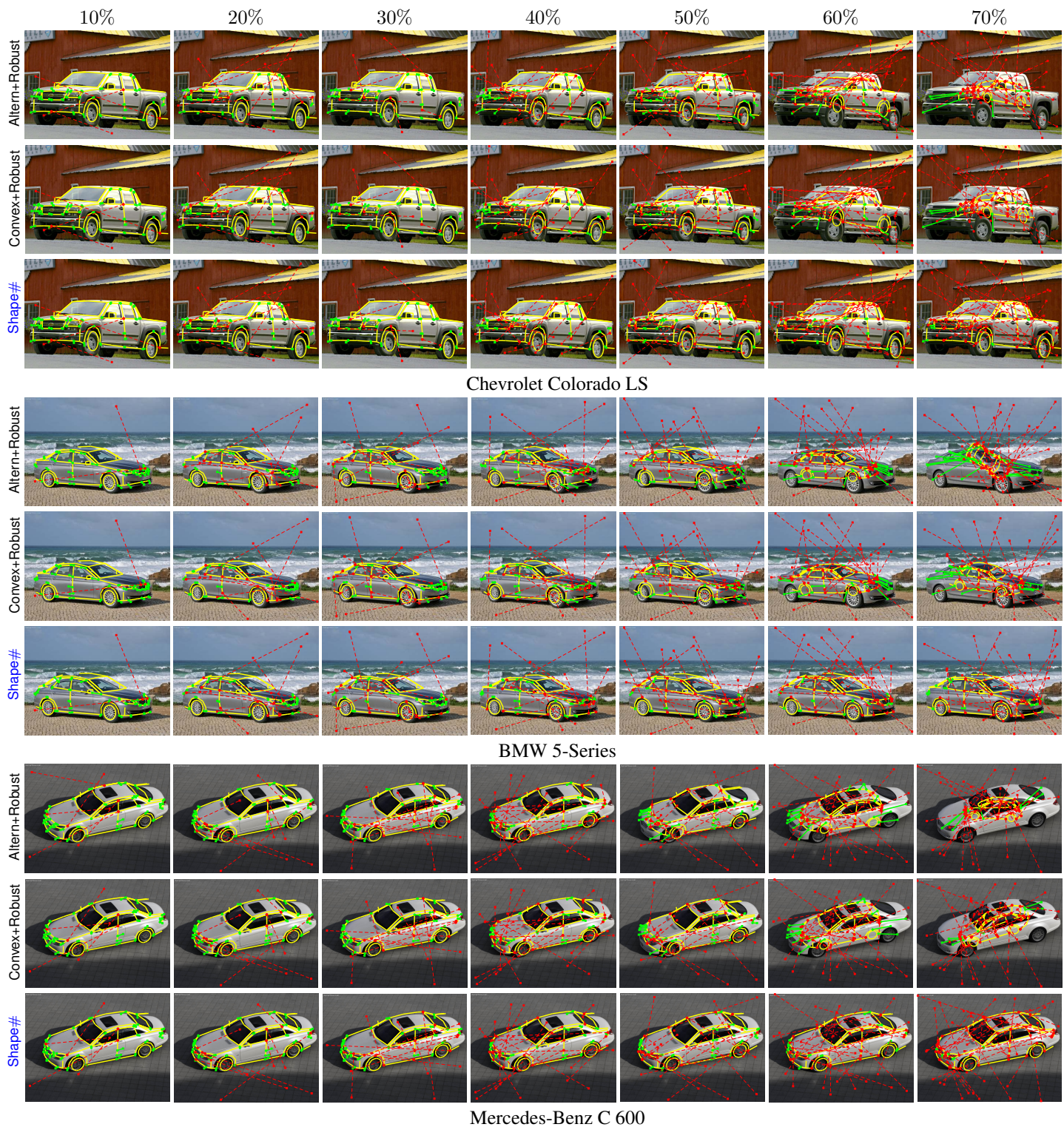

	\begin{center}{}
	\begin{minipage}{\textwidth}
	\begin{tabular}{cccccccc}%
		\myhspace \myhspace \hspace{-2mm} & \myhspace $10\%$ & \myhspace $20\%$ & \myhspace $30\%$ & \myhspace $40\%$ & \myhspace $50\%$ & \myhspace $60\%$ & \myhspace $70\%$ \\
		\input{056-Chevrolet_Colorado_LS}
		\input{035-BMW_5_Series}
		\input{169-MercedesBenz_C_600}
		\end{tabular}
	\end{minipage}
	\vspace{-2mm} 
	\caption{Qualitative results on the \FGCar dataset~\cite{Lin14eccv-modelFitting} under $10-70\%$ outlier rates using \alternrobust~\cite{Zhou17pami-shapeEstimationConvex}, \convexrobust~\cite{Zhou17pami-shapeEstimationConvex}, and \namerobust. Yellow: shape reconstruction result projected onto the image.
	Green: inliers. Red: outliers. Circle: 3D landmark. Square: 2D landmark. [Best viewed electronically.]
	\label{fig:supp_FG3DCar_qualitative}}
	\end{center}
\end{figure*}

\renewcommand{\mpwthree}{2.6cm}
\renewcommand{\myhspace}{\hspace{-3.5mm}}

\begin{figure*}[h]\ContinuedFloat
	\begin{center}{}
	\begin{minipage}{\textwidth}
	\begin{tabular}{cccccccc}%
		\input{128-Jeep_Commander}
		\input{097-Honda_CRV}
		\input{173-Mercedes-Benz_GL450}

		\end{tabular}
	\end{minipage}
	\vspace{-2mm} 
	\caption{Qualitative results on the \FGCar dataset~\cite{Lin14eccv-modelFitting} under $10-70\%$ outlier rates using \alternrobust~\cite{Zhou17pami-shapeEstimationConvex}, \convexrobust~\cite{Zhou17pami-shapeEstimationConvex}, and \namerobust. 
	Yellow: shape reconstruction result projected onto the image.
	Green: inliers. Red: outliers. Circle: 3D landmark. Square: 2D landmark. (cont.) [Best viewed electronically.]
	\label{fig:supp_FG3DCar_qualitative}}
	\end{center}
\end{figure*}

\renewcommand{\mpwthree}{2.6cm}
\renewcommand{\myhspace}{\hspace{-3.5mm}}

\begin{figure*}[h]\ContinuedFloat
	\begin{center}{}
	\begin{minipage}{\textwidth}
	\begin{tabular}{cccccccc}%
		\input{278-Vauxhall_Zafira}
		\input{295-Volvo-V70}
		\input{217-saab_93}

		\end{tabular}
	\end{minipage}
	\vspace{-2mm} 
	\caption{Qualitative results on the \FGCar dataset~\cite{Lin14eccv-modelFitting} under $10-70\%$ outlier rates using \alternrobust~\cite{Zhou17pami-shapeEstimationConvex}, \convexrobust~\cite{Zhou17pami-shapeEstimationConvex}, and \namerobust. Yellow: shape reconstruction result projected onto the image.
	Green: inliers. Red: outliers. Circle: 3D landmark. Square: 2D landmark. (cont.) [Best viewed electronically.]
	\label{fig:supp_FG3DCar_qualitative}}
	\end{center}
\end{figure*}

\clearpage
{\small
\bibliographystyle{ieee_fullname}
\bibliography{../../references/refs,myRefs}
}

\end{document}